\newtheorem{theorem}{Theorem}[section]
\newtheorem*{theorem*}{Theorem}
\newtheorem*{definition*}{Definition}
\newtheorem{definition}[theorem]{Definition}
\newtheorem{corollary}[theorem]{Corollary}
\newtheorem{lemma}[theorem]{Lemma}
\newtheorem*{lemma*}{Lemma}
\newtheorem*{discussion*}{Discussion}
\newtheorem{question}[theorem]{Question}
\newtheorem{claim}[theorem]{Claim}
\newtheorem{remark}{Remark}
\renewcommand{\H}{\mathcal{H}}
\newcommand{\X}{\mathcal{X}}
\newcommand{\A}{\mathcal{A}}
\newcommand{\D}{\mathcal{D}}
\newcommand{\I}{\mathcal{I}}
\newcommand{\C}{\mathcal{C}}
\renewcommand{\P}{\mathcal{P}}
\newcommand{\val}{\mathtt{val}}
\newcommand{\col}{\mathtt{col}}
\newcommand{\CD}{\mathtt{CD}}
\renewcommand{\L}{\mathtt{L}}
\renewcommand{\S}{\mathtt{S}}
\newcommand{\FCD}{\mathtt{CD}^\star}
\newcommand{\LD}{\mathtt{LD}}
\newcommand{\RepDim}{\mathtt{RepDim}}
\newcommand{\Nat}{\mathbb{N}}
\newcommand{\R}{\mathbb{R}}
\newcommand{\Ind}{\mathbbm{1}}
\newcommand{\expect}{\mathop{\mathbb{E}}}
\newcommand{\poly}{\mathop{\mathtt{poly}}}
\newcommand\restr[2]{{
  \left.\kern-\nulldelimiterspace 
  #1 
  \vphantom{|} 
  \right|_{#2} 
  }}
\title{A Unified Characterization of Private Learnability via Graph Theory}
\author{Noga Alon
\and Shay Moran \and Hilla Schefler \and  Amir Yehudayoff}
\date{}
\begin{document}

\maketitle

\begin{abstract}

We provide a unified framework for characterizing pure and approximate differentially private (DP) learnability. The framework uses the language of graph theory: for a concept class $\mathcal{H}$, we define the contradiction graph $G$ of $\mathcal{H}$. Its vertices are realizable datasets, and two datasets~$S,S'$ are connected by an edge if they contradict each other (i.e., there is a point $x$ that is labeled differently in $S$ and $S'$). Our main finding is that the combinatorial structure of $G$ is deeply related to learning $\mathcal{H}$ under DP. Learning $\mathcal{H}$ under pure DP is captured by the fractional clique number of $G$. Learning $\mathcal{H}$ under approximate DP is captured by the clique number of $G$. Consequently, we identify graph-theoretic dimensions that characterize DP learnability: the \emph{clique dimension} and \emph{fractional clique dimension}. Along the way, we reveal properties of the contradiction graph which may be of independent interest. We also suggest several open questions and directions for future research.

\end{abstract}

\section{Introduction}

Modern machine learning applications often involve handling sensitive data. Differential privacy (DP) \cite{DMNS06} has emerged as a sound theoretical approach to reason about privacy in a precise and quantifiable fashion and has become the gold standard of statistical data privacy~\cite{DR14}. It has also been implemented in practice, notably by Google~\cite{Erlingsson14google}, Apple \cite{apple,apple1}, and in the 2020 US census~\cite{Census20}.
These developments raise the question: 
\begin{center}
	\emph{Which learning tasks can be performed subject to differential privacy?}    
\end{center}
Extensive research has been carried out on this question within the framework of the classical Probably Approximately Correct (PAC) model~\cite{vapnik:68,valiant:84}, 
leading to the development of various characterizations of private learnability.
Beimel, Nissim, and Stemmer introduced a quantity called the representation dimension that characterizes pure DP learnability~\cite{beimel2013characterizing,Beimel19Pure}.
In a follow-up work, Feldman and Xiao found an interesting connection with communication complexity by associating every concept class $\H$ with 
a communication task whose complexity characterizes whether $\H$ is pure DP learnable~\cite{FeldmanX15}. 

Extensive research has also been devoted to studying the question of which learning tasks can be performed subject
to approximate differential privacy, which is comparatively less demanding than pure differential privacy.
Several characterizations of PAC learnability under this less restrictive version have been proven, including finite Littlestone dimension and online learnability~\cite{ALMM19, BunLM20, AlonBLMM22}, 
replicability and reproducibility~\cite{ImpagliazzoLPS22,bun2023stability},
low information complexity, PAC Bayes stability, and other variants of algorithmic stability~\cite{LivniM20,PradeepNG22}.
For a more detailed discussion, please refer to~\cite{MalliarisM22}.

\paragraph{Our Contribution.}
While the definitions of pure and approximate DP are closely related, the characterizations of learning under these distinct privacy constraints differ significantly.
In this work, we devise a unified approach for characterizing both pure and approximate DP learnability.
Our framework is based on graph theory; in particular, it demonstrates a tight link between private learnability and cliques in certain graphs, which we call \emph{contradiction graphs}.

A clique in a graph $G$ is a set $\delta$ of vertices such that every pair of distinct vertices in~$\delta$ 
is connected by an edge. A fractional clique is a standard LP relaxation of a clique.
A function $\delta :V\to [0,1]$ is a fractional clique if for every independent set $I\subseteq V$,
\[\sum_{v\in I}\delta(v) \leq 1.\]
The size of a fractional clique $\delta$ is the sum $\sum_{v\in V}\delta(v)$. Notice that if $\delta(v)\in\{0,1\}$ for all $v$ then $\delta$ is the indicator function of a clique,
and its size is the number of vertices in the clique. 
The clique number of $G$, denoted $\omega(G)$, is the largest size of a clique in $G$. Similarly, the fractional clique number of $G$, denoted $\omega^\star(G)$, is the largest size of a fractional clique in~$G$. Notice that~$\omega^\star(G)\geq \omega(G)$.

\begin{tcolorbox}[colback=white,arc=0mm,boxrule=0.5pt]
\begin{definition*}[Contradiction Graph]
     Let $\H\subseteq\{0,1\}^\X$ be a concept class and let $m\in\mathbb{N}$. The \emph{contradiction graph of order $m$ of~$\H$} is an undirected graph $G_m=G_m(\H)$ whose vertices are datasets of size $m$ that are consistent with $\H$. Two datasets are connected by an edge whenever they contradict each other.
\end{definition*}
\end{tcolorbox}
 In other words, the vertices of the contradiction graph $G_m(\H)$ are $\H$-realizable sequences of length $m$, and $\{S',S''\}$ is an edge if there is $x\in \X$ such that $(x,0)\in S'$ and $(x,1)\in S''$.
Let $\omega_m = \omega(G_m)$ and $\omega^\star_m= \omega^\star(G_m)$ denote the clique and fractional clique numbers of $G_m$.
\begin{enumerate}
	\item We prove that both $\omega_m$ and $\omega^\star_m$ satisfy a polynomial-exponential dichotomy:
	      \begin{itemize}
	      	\item[(i)] For every $\H$, either $\omega_m^\star= 2^m$ for all $m$, or $\omega_m^\star\leq \mathtt{poly}(m)$.
	      	\item[(ii)] For every $\H$, either $\omega_m= 2^m$ for all $m$, or $\omega_m\leq \mathtt{poly}(m)$.
	      \end{itemize}
	\item These dichotomies characterize pure and approximate DP learnability:
	      \begin{itemize}
	      	\item[(i)] $\H$ is pure DP learnable if and only if $\omega_m^\star\leq \mathtt{poly}(m)$.
	      	\item[(ii)] $\H$ is approximately DP learnable if and only if $\omega_m\leq \mathtt{poly}(m)$.
	      \end{itemize}
	      These characterizations yield graph-theoretic dimensions of $\H$ that characterize private learning. Define the clique dimension of $\H$, denoted by $\CD(\H)$, as the largest $m\in\mathbb{N}\cup\{\infty\}$ for which $\omega_m = 2^m$. Analogously, define the fractional clique dimension of $\H$, denoted by $\FCD(\H)$, as the largest $m$ for which $\omega_m^\star = 2^m$. 
	      Thus,
	      \begin{itemize}
            \item[(i)] $\CD^\star(\H)<\infty$ if and only if $\H$ is pure DP learnable.
	      	\item[(ii)] $\CD(\H)<\infty$ if and only if $\H$ is approximately DP learnable.
	      \end{itemize}
\end{enumerate}
Technically, our proofs rely on the fact that the fractional clique and chromatic numbers are equal.
For finite graphs, this fact follows from LP duality, however, in our setting the contradiction graph can be infinite.
Lastly, we prove that the contradiction graph exhibits strong duality: the fractional clique and chromatic numbers are equal for every (possibly infinite) contradiction graph.
This part is based on tools from functional analysis and topology.

Note that in this work we focus on providing a unified and concise framework to characterize and study private learning. 
Important topics, such as regarding the informational or computational complexity are not addressed in this work. 
Nevertheless, given that graph theory is a well-studied area with sophisticated tools and techniques, 
we hope that the equivalence established here will facilitate a deeper integration between learning theory and graph theory. 
Specifically, we anticipate that it will provide new tools and insights to tackle other fundamental questions in learning theory.

\paragraph{Organization.} 
In \Cref{sec:mainres}, we present the main results in greater detail and provide an overview of some of the main proof ideas.
Section~\ref{sec:definitions} contains background and relevant definitions in Learning Theory, Graph Theory, and Differential Privacy.
Sections~\ref{sec:DPvsCD} and~\ref{sec:PDPvsFCD} contain the full proofs, and some of the proofs are delegated to the appendix. 
Finally, in Section~\ref{sec:future_work}, we provide suggestions for future work.


\section{Main Results}\label{sec:mainres}
We use standard definitions and terminology from graph theory, learning theory, and differential privacy; see Section~\ref{sec:definitions} for detailed definitions.

\subsection{Dichotomies and Dimensions}
In this section we present Theorems~\ref{thm:SSP-CD} and~\ref{thm:SSP-FCD} which concern cliques and fractional cliques in the contradiction graph. These results are key in our characterizations of private learnability, but they might also be of independent interest as combinatorial results.

We begin with a basic lemma which shows that the contradiction graph does not contain cliques or fractional cliques of size larger than $2^m$.
\begin{lemma}\label{l:CN_FCN_bounded}
	Let $\H$ be a class and $m\in\Nat$ and let $\omega_m$ and $\omega_m^\star$ denote the clique and fractional clique numbers of $G_m(\H)$.  
	Then,
	$\omega_m\leq\omega^\star_m\leq 2^m$.
\end{lemma}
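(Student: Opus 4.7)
The lower bound $\omega_m \leq \omega_m^\star$ is immediate: for any clique $K$, its indicator $\Ind_K$ is a fractional clique because an independent set contains at most one vertex of $K$, so $\sum_{v \in I}\Ind_K(v) \leq 1$, and the size of this fractional clique is $|K|$.

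For the upper bound $\omega_m^\star \leq 2^m$, the plan is a simple averaging argument. The key observation is that for every function $h\colon \X \to \{0,1\}$ (not required to lie in $\H$), the set
\[
I_h := \{S \in V(G_m) : S \subseteq \mathrm{graph}(h)\}
\]
is independent in $G_m$: any two datasets consistent with the same $h$ must agree on every shared coordinate, and so cannot contradict each other. I would exploit this by drawing $h$ from the uniform product measure $\mu$ on $\{0,1\}^\X$ (independent fair coins at each coordinate) and averaging the defining inequality $\sum_{S \in I_h}\delta(S) \leq 1$. For each $S$, since $|\mathrm{supp}(S)| \leq m$,
\[
\Pr_{h \sim \mu}[S \in I_h] \;=\; 2^{-|\mathrm{supp}(S)|} \;\geq\; 2^{-m},
\]
so interchanging sum and expectation yields
\[
1 \;\geq\; \expect_{h \sim \mu}\!\left[\sum_{S \in I_h}\delta(S)\right] \;=\; \sum_{S}\delta(S)\,\Pr_{h \sim \mu}[S \in I_h] \;\geq\; 2^{-m}\sum_{S}\delta(S),
\]
which rearranges to $\sum_S \delta(S) \leq 2^m$, as required.

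The only technical wrinkle is justifying the swap of sum and expectation when $V(G_m)$ is uncountable. I would handle this by first proving the bound for fractional cliques $\delta$ with finite support, in which case the argument lives entirely inside the finite probability space $\{0,1\}^A$ for $A = \bigcup_{S:\delta(S)>0}\mathrm{supp}(S)$ and no measurability issues arise. The general bound then follows since restricting $\delta$ to any finite $F \subseteq V(G_m)$ yields a finitely supported fractional clique, and $\sum_v \delta(v)$ equals the supremum of such partial sums. The main conceptual step is identifying the right cover by independent sets, namely $\{I_h\}_{h\in\{0,1\}^\X}$; once this is in hand the calculation is essentially one line.
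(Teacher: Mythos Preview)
Your proof is correct and follows essentially the same approach as the paper: draw $h$ uniformly from $\{0,1\}^\X$, observe that the set of datasets consistent with $h$ is independent, and average the fractional-clique constraint to get $1 \geq 2^{-m}\lvert\delta\rvert$. Your handling of the infinite case via reduction to finitely supported $\delta$ is a clean alternative to the paper's appeal (in the appendix) to Kolmogorov's extension theorem, and your inequality $\Pr[S\in I_h]=2^{-|\mathrm{supp}(S)|}\geq 2^{-m}$ is in fact slightly more careful than the paper's stated equality, since repeated examples in $S$ can make the support strictly smaller than $m$.
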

A short proof of this lemma is provided in \Cref{sec:fundemental}.

\begin{theorem}[Clique Number]\label{thm:SSP-CD}
	Let $\H$ be a class and let $\omega_m$ denote the clique number of $G_m(\H)$.
	Then, exactly one of the following statements holds:
	\begin{enumerate}
		\item $\omega_m = 2^m$ for all $m$.\label{item:ssp-cd1}
		\item  $\omega_m\leq P(m)$ for all $m$, where $P(m)$ is a polynomial.\label{item:ssp-cd2}
	\end{enumerate}
\end{theorem}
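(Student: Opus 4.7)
The plan is to prove the dichotomy by relating $\omega_m$ to the Littlestone dimension $\L(\H)$.

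First I would handle the easy direction: if $\L(\H) \geq m$, then $\omega_m = 2^m$. A mistake tree of depth $m$ witnessing $\L(\H) \geq m$ has $2^m$ root-to-leaf paths, each giving a realizable length-$m$ dataset. Any two distinct paths diverge at some internal node where they assign opposite labels to its instance, so the corresponding datasets contradict. This exhibits a clique of size $2^m$ in $G_m(\H)$, and combined with \Cref{l:CN_FCN_bounded} it follows that $\omega_m = 2^m$. Taking the contrapositive, as soon as $\omega_m < 2^m$ for a single value of $m$, we have $\L(\H) < m$, hence $\L(\H)$ is finite. The dichotomy therefore reduces to the implication: $\L(\H) \leq L$ implies $\omega_m(\H) \leq P_L(m)$ for some polynomial $P_L$ depending only on $L$.

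For this polynomial bound I would induct on $L$ via a decomposition of the clique. Let $C = \{S_1, \ldots, S_N\}$ be a clique in $G_m(\H)$ of size $N = \omega_m$, and fix $S_1 \in C$. For every $S \in C \setminus \{S_1\}$, choose a contradiction witness $(x,y) \in S_1$ (so $(x, 1-y) \in S$), making $S$ realizable by $\H_x^{1-y} := \{h \in \H : h(x) = 1-y\}$. Partitioning $C \setminus \{S_1\}$ by witness produces at most $m$ parts, where the part indexed by $(x, y) \in S_1$ consists of datasets all containing the common labeled point $(x, 1-y)$; deleting this common point gives a clique of the same size in $G_{m-1}(\H_x^{1-y})$. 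This yields the recursion
\[
\omega_m(\H) \;\leq\; 1 + \sum_{(x,y) \in S_1} \omega_{m-1}\!\left(\H_x^{1-y}\right).
\]

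The main obstacle is closing this recursion, since each subclass $\H_x^{1-y}$ satisfies $\L \leq L$ but not necessarily $\leq L - 1$. The key combinatorial fact I would exploit is that for every single point $x$, at most one of $\L(\H_x^0), \L(\H_x^1)$ can equal $L$—otherwise both mistake subtrees extend to depth $L$, contradicting $\L(\H) = L$. Hence every point has a designated "low-Littlestone side"; by choosing $S_1$ to be realized by a hypothesis whose label at each relevant point is the "high" side (so that $1 - y$ selects the low side in the recursion), or by iterating the recursion and tracking which subclasses strictly reduce dimension, one should be able to derive a bound of the form $\omega_m(\H) \leq (cm)^L$. This polynomial bound, together with the easy direction, establishes the dichotomy. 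An alternative route, if the above choice of $S_1$ turns out to be subtle, is to appeal to the existence of a bounded-size sample compression scheme for classes of finite Littlestone dimension and to bound $\omega_m$ by the number of distinct compressions that can arise across a clique.
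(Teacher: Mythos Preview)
Your easy direction is correct and matches the paper's \Cref{l:LD_leq_CD}.

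The hard direction has a genuine gap. Your recursion
\[
\omega_m(\H) \;\leq\; 1 + \sum_{(x,y)\in S_1} \omega_{m-1}\!\left(\H_x^{1-y}\right)
\]
is valid for any $S_1 \in C$, but you have not established that a suitable $S_1$ exists. You want $S_1$ such that every $(x,y)\in S_1$ has $y$ on the ``high'' side, i.e.\ $\LD(\H_x^{1-y})\le L-1$. This amounts to requiring $S_1$ to be consistent with the function $g(x)=$ ``the label $b$ for which $\LD(\H_x^b)=L$, if any''. But $g$ need not lie in $\H$, and in any case the datasets consistent with $g$ form an independent set in $G_m(\H)$, so they meet the clique $C$ in at most one vertex --- possibly none. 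Without such an $S_1$, every branch of the recursion may keep $\LD=L$; unrolling then only gives $\omega_m \leq O(m!)$, not a polynomial. Your ``iterate and track'' suggestion does not repair this: nothing forces the Littlestone dimension to drop within a bounded number of levels. The sample-compression alternative is also incomplete as stated: the compression of $S$ is a subsequence of $S$, so distinct members of the clique may produce compressions over disjoint point sets, and the number of possible compressed sequences across $C$ is not a priori polynomial in $m$ alone.

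The paper proceeds in the reverse direction: rather than inducting on $\LD$, it extracts a deep mistake tree \emph{from} the clique. The key step (\Cref{l:balanced_example}) shows that any clique $C$ in $G_m$ contains a \emph{balanced} point $x$, meaning both $\{S\in C:(x,0)\in S\}$ and $\{S\in C:(x,1)\in S\}$ have size at least $(|C|-1)/(2m)$; this is obtained by a short greedy elimination argument. Placing $x$ at the root and recursing on both sides yields, after $T$ levels, $2^T$ sub-cliques each of size at least $|C|/(2m+1)^T$. The process must terminate by depth $T\le \LD(\H)$, giving $\omega_m\le (2m+1)^{\LD(\H)}$. The balanced-point lemma is precisely what lets one control both children simultaneously --- the ingredient your pivot-on-$S_1$ decomposition lacks.
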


\begin{theorem}[Fractional Clique Number]\label{thm:SSP-FCD}
	Let $\H$ be a class and let $\omega^\star_m$ denote the fractional clique number of $G_m(\H)$. Then, exactly one of the following statements holds:
	\begin{enumerate}
		\item $\omega^\star_m = 2^m$ for all $m$.\label{item:ssp-fcd1}
		\item  $\omega^\star_m\leq P(m)$ for all $m$, where $P(m)$ is a polynomial.\label{item:ssp-fcd2}
	\end{enumerate}
\end{theorem}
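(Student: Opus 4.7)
My plan is to follow a Sauer--Shelah--Perles-style dichotomy, parallel to the integer case (\Cref{thm:SSP-CD}) but with the fractional clique dimension $d^\star = \FCD(\H)$ replacing its integer analogue. The first ingredient is a \emph{monotonicity lemma}: if $\omega_m^\star(\H) = 2^m$ then $\omega_{m-1}^\star(\H) = 2^{m-1}$. Given an extremal fractional clique $\delta$ on $G_m$ of mass $2^m$, one produces a $\delta'$ on $G_{m-1}$ by a delete-and-average construction: for $S' \in V(G_{m-1})$, set $\delta'(S')$ proportional to the $\delta$-mass of $m$-datasets obtained from $S'$ by adjoining a single labeled point, normalized so that the total mass drops by half. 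The fractional-clique inequality follows from the fact that extending every element of an independent set of $G_{m-1}$ by a common labeled point yields an independent set of $G_m$. Monotonicity ensures $\FCD(\H)$ is well-defined, and the case $d^\star = \infty$ then gives Case~1 directly.

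\textbf{The polynomial upper bound.} For the case $d^\star < \infty$, I would combine two ingredients: (i) the strong duality $\omega^\star(G) = \chi_f(G)$ that the introduction promises to extend to (possibly infinite) contradiction graphs, and (ii) a combinatorial cover construction. By (i), it suffices to exhibit, for every $m$, a fractional coloring of $G_m$ by independent sets whose total weight is $\poly(m)$. Independent sets of $G_m$ correspond bijectively to partial realizable functions on $\X$, with each such $f$ covering the $m$-datasets $S \subseteq f$. One then argues, using that no fractional clique of full size $2^{d^\star+1}$ exists on $G_{d^\star+1}$, that a polynomial-weight cover by partial realizable functions must exist.

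\textbf{The main obstacle.} I expect the monotonicity step to be routine once the delete-and-average construction is set up carefully. The hard part is the polynomial bound: unlike the integer case, where one can directly Sauer--Shelah over a single shattered Littlestone-like object, the fractional version has no such integer "witness" to work with---only a distribution over datasets constrained solely on independent sets. Turning the finiteness of $d^\star$ into an explicit polynomial cover will likely require extracting a full integer $2^{d^\star+1}$-clique from any hypothetical super-polynomial fractional clique on $G_m$, contradicting the definition of $d^\star$. This "fractional-to-integer" extraction is where I anticipate the main technical effort, probably through a probabilistic rounding or sampling argument combined with the strong duality above.
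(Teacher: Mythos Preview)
Your monotonicity step is fine but not where the difficulty lies. The real gap is in your plan for the polynomial bound. The ``fractional-to-integer extraction'' you propose---deducing an integer $2^{d^\star+1}$-clique in $G_{d^\star+1}$ from a hypothetical super-polynomial fractional clique in $G_m$---is not what the paper does, and there is no evident rounding or sampling scheme that accomplishes it: a large fractional clique in $G_m$ is merely a distribution over $m$-datasets such that every single hypothesis is consistent with a tiny fraction, and this gives no direct handle on pairwise-contradicting $(d^\star{+}1)$-datasets.

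The paper instead works on the dual side and \emph{amplifies a fractional coloring}. From $\omega^\star_{m_0}<2^{m_0}$ one obtains (via duality and the concentration step in Lemma~\ref{l:small_pop_err}) a distribution $\tilde\mu$ over hypotheses that acts as a weak learner: $\Pr_{h\sim\tilde\mu}[\L_\D(h)\le\tfrac12-\gamma]\ge\epsilon-2\gamma$ for every realizable $\D$, where $\epsilon=1/\omega^\star_{m_0}-2^{-m_0}$. The key idea your outline misses is then \emph{boosting}: the majority vote of $T=\Theta(\log(m)/\gamma^2)$ i.i.d.\ draws from $\tilde\mu$ is consistent with any realizable $m$-dataset with probability at least $(\epsilon-2\gamma)^T=1/\poly(m)$, which by Lemma~\ref{l:colorings_cliques_dists} gives the required fractional coloring of $G_m$. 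The analysis that majority succeeds is itself nontrivial and is carried out by a reduction to online prediction with expert advice, using the regret bound of Multiplicative Weights (Lemmas~\ref{l:tech2} and~\ref{l:tech3}). Neither the boosting construction nor the online-learning reduction appears in your plan, and these are exactly what convert the tiny advantage $\epsilon$ into the polynomial bound. Note also that the resulting degree depends on $\epsilon$, not only on $d^\star$; whether a bound depending solely on $d^\star$ is achievable is left open (see \Cref{sec:future_work}).
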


Theorems~\ref{thm:SSP-CD} and~\ref{thm:SSP-FCD} motivate the following definitions.

\begin{definition}[Clique Dimension]\label{def:CD}
	The clique dimension of a concept class $\H$, denoted~$\CD(\H)$, is defined as follows:
	\[\CD(\H):= \sup\{m : \omega_m=2^m\}\in \Nat\cup\{\infty\},\]
	where $\omega_m$ is the clique number of the contradiction graph $G_m(\H)$.
\end{definition}

\begin{definition}[Fractional Clique Dimension]\label{def:FCD}
	The fractional clique dimension of a concept class~$\H$, denoted $\FCD(\H)$, is defined as follows:
	\[\FCD(\H):= \sup\{m : \omega_m^\star=2^m\}\in \Nat\cup \{\infty\},\]
	where $\omega_m^\star$ is the fractional clique number of the contradiction graph $G_m(\H)$.
\end{definition}
Thus, Theorems~\ref{thm:SSP-CD} and~\ref{thm:SSP-FCD} demonstrate that the clique and fractional clique dimensions satisfy a dichotomy similar to the Sauer-Shelah-Perles (SSP) dichotomy of the VC dimension~\cite{sauer1972density}. 
Theorems~\ref{thm:SSP-CD} and~\ref{thm:SSP-FCD} are key in our characterizations of private PAC learnability. 
This is analogous to the crucial role played by the SSP lemma in the characterization of PAC learnability.

It is worthwhile to note that the polynomial/exponential dichotomy in Theorem~\ref{thm:SSP-FCD} 
is weaker than the one in Theorem~\ref{thm:SSP-CD}.
Specifically, in Theorem~\ref{thm:SSP-CD} the degree of the polynomial $P(m)$ is the clique dimension $\CD(\H)$ (see Lemma~\ref{l:clique_number_bounded}). 
In contrast, our proof of Theorem~\ref{thm:SSP-FCD} does not imply a bound on the degree of $P(m)$ in terms of the fractional clique dimension. Rather, the implied bound depends on the difference $2^{m} - \omega^\star_{m}>0$,
where $m$ is any integer for which this difference is positive ($m$ exists by Lemma~\ref{l:CN_FCN_bounded}).

We leave as an open question to determine whether the fractional clique number $\omega^\star_m$ is upper bounded by a polynomial $P(m)$ whose degree depends \underline{only} on the fractional clique dimension.

\subsubsection{Theorems~\ref{thm:SSP-CD} and~\ref{thm:SSP-FCD}: Technical Overview}

We begin with overviewing the proof of Theorem~\ref{thm:SSP-CD}.

Assuming there exists a natural number $d$ such that $\omega_{d}<2^{d}$, we need to show that $\lvert\delta\rvert\leq\mathtt{poly}(m)$ for every $m$ and for every clique $\delta$ in $G_m$.
The crux of the proof is to show that there exists a \emph{balanced} instance $x_1$ in the following sense: 
\begin{equation}\label{eq:1}
	\lvert\{S\in \delta : (x_1,0)\in S\}\rvert\geq \Omega\Bigl(\frac{\lvert \delta\rvert}{m}\Bigr) \text{ and } \lvert\{S\in \delta : (x_1,1)\in S\}\rvert\geq \Omega\Bigl(\frac{\lvert \delta\rvert}{m}\Bigr).
\end{equation}
To see how \Cref{eq:1} completes the proof, let $\H_{x\to y}=\{h\in\H: h(x)=y\}$.
Notice that at least one of $G_{d-1}(\H_{x_1\to 0})$, $G_{d-1}(\H_{x_1\to 1})$ does not contain a clique of size $2^{d-1}$.
Indeed, if $\delta_0,\delta_1$ are cliques of size $2^{d-1}$ in $G_{d-1}(\H_{x_1\to 0})$ and $G_{d-1}(\H_{x_1\to 1})$, then they can be combined to form a clique of size $2^{d}$ in $G_{d}(\H)$ by adding the example $(x_1,0)$ to every dataset in $\delta_0$ and the example $(x_1,1)$ to every dataset in $\delta_1$, and taking their union. 

Repeating this argument $d-1$ times, we obtain a class $\H'=\H_{x_1\to y_1, \ldots x_{d-1}\to y_{d-1}}$ such that (i) $G_1(\H')$ does not contain a clique of size $2$ and (ii) there is a clique in $G_m(\H')$ whose size is at least ${\lvert \delta\rvert}/{(c\cdot m)^{d-1}}$ for some constant $c$. On the other hand, the first item means that $\lvert \H'\rvert = 1$ and hence every clique in $G_m(\H')$ has size $1$. Thus, $1\geq {\lvert \delta\rvert}/{(c\cdot m)^{d-1}}$, which implies that $\lvert \delta\rvert \leq (c\cdot m)^{d-1}=\mathtt{poly}(m)$ as required.

We prove that there exists a balanced instance (i.e.\ that satisfies \Cref{eq:1}) 
constructively using a greedy procedure (see Lemma~\ref{l:balanced_example} for the short proof).

\medskip

The proof of Theorem~\ref{thm:SSP-FCD} is more involved and it integrates different ideas and techniques such as LP duality, probabilistic arguments, and regret analysis from online learning.

Our objective is to demonstrate that if there exists a natural number $d$ such that $\omega^\star_{d}<2^{d}$, then $\lvert\delta\rvert\leq\mathtt{poly}(m)$ for \emph{every} fractional clique $\delta$ in $G_m$.
The first step in the proof is to apply LP duality, which reduces the latter to showing that \emph{there exists} a fractional coloring of $G_m(\H)$ that employs $\mathtt{poly}(m)$ colors. It turns out that fractional colorings in the contradiction graph have a natural learning theoretic interpretation; they correspond to distributions over hypotheses. 
This correspondence between fractional colorings and distributions over hypotheses implies that it suffices to prove the following statement: there exists a distribution $\mu$ over hypotheses such that for every dataset~$S$ of size $m$ that is realizable by $\H$:
\begin{equation}\label{eq:2}
	\Pr_{h\sim \mu}[h \text{ is consistent with } S]\geq 1/\mathtt{poly}(m).
\end{equation}
We obtain the distribution $\mu$ as follows. 
By the above correspondence, there is a distribution $\mu'$ such that 
$\Pr_{h\sim \mu'}[h \text{ is consistent with } S]\geq 2^{-d} + \epsilon$,
for every realizable dataset~$S$ of size $d$. 
The distribution $\mu$ is obtained by independently sampling $\Theta(\log m /\epsilon^2)$ hypotheses from $\mu'$ and taking their majority vote. 
Interestingly, the analysis showing that $\mu$ satisfies \Cref{eq:2} follows by a reduction to online prediction using expert advice.

\subsection{Private Learnability: Characterizations}
We next present the characterizations of pure and approximate private learnability.

\begin{theorem}[Pure DP Learnability] \label{thm:PDP-FCD}
	The following statements are equivalent for a concept class~$\H$.
	\begin{enumerate}
		\item $\H$ is pure differentially private PAC learnable.\label{itm:PDP}
		\item $\H$ has finite fractional clique dimension.\label{itm:FCD}
	\end{enumerate}
\end{theorem}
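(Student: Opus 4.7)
The plan is to factor the equivalence through the learning-theoretic reading of fractional cliques. Using the paper's strong duality for contradiction graphs (the fractional clique and fractional chromatic numbers of $G_m(\H)$ coincide, even when $G_m$ is infinite) together with the correspondence between fractional colorings and distributions over hypotheses highlighted in the overview of Theorem~\ref{thm:SSP-FCD}, the fractional clique dimension is finite if and only if for every $m$ there exists a distribution $\mu_m$ over hypotheses $h:\X\to\{0,1\}$ such that $\Pr_{h\sim\mu_m}[h \text{ is consistent with } S]\geq 1/\poly(m)$ for every $\H$-realizable dataset $S$ of size $m$. Both implications of the theorem then reduce to this intermediate characterization.

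For direction~(\ref{itm:FCD})$\Rightarrow$(\ref{itm:PDP}), I would feed the distribution $\mu_m$ into a generalized exponential mechanism as the base measure: on a training set $T$ of size $n$, output $h$ with density proportional to $\mu_m(h)\exp(-\epsilon n\,\err_T(h)/2)$. Privacy is immediate from the standard exponential-mechanism analysis. For utility, I would set $m=\Theta(\log(1/\alpha)/\alpha)$: a fresh sample of size $m$ from any realizable distribution $D$ is, with constant probability, $\alpha$-close to $D$ in the sense of uniform convergence, so the consistency guarantee forces $\mu_m$ to place mass at least $1/\poly(\alpha)$ on hypotheses with population error $\leq\alpha$. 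The standard utility analysis of the exponential mechanism then yields $\alpha$-PAC error with $n=\poly(1/\alpha,1/\epsilon,\log(1/\beta))$, certifying pure DP PAC learnability.

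For direction~(\ref{itm:PDP})$\Rightarrow$(\ref{itm:FCD}), the dichotomy in Theorem~\ref{thm:SSP-FCD} reduces the task to exhibiting a single $d$ with $\omega_d^\star<2^d$, equivalently a distribution $\mu$ with $\Pr_{h\sim\mu}[h \text{ consistent with } S]>2^{-d}$ for every $\H$-realizable dataset $S$ of size $d$. The tempting shortcut---set $\mu:=A(T_0)$ for a fixed reference dataset $T_0$ and apply group privacy---gives only $\Pr_{h\sim\mu}[h \text{ consistent}]\geq(1-\beta)e^{-\epsilon n}$, and because pure DP PAC sample complexity forces $\epsilon n=\Omega(d)$ (via the standard $\Omega(1/(\alpha\epsilon))$ lower bound with $\alpha=1/(2d)$), this bound need not beat $2^{-d}$. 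This is the main obstacle. To overcome it, I would invoke the Feldman--Xiao equivalence between pure DP PAC learnability and bounded one-way public-coin randomized communication complexity $c=c(\H)$ of the evaluation problem $(h^*,x)\mapsto h^*(x)$. Amplifying the protocol with $O(\log d)$ parallel repetitions reduces the per-point error below $1/(3d)$ while inflating the communication to $c\cdot O(\log d)$, so each shared random string $r$ determines a family $\mathcal{F}_r$ of at most $\poly(d)$ candidate functions. A union bound over the $d$ points of $S$ shows that with probability at least $2/3$ over $r$ the family $\mathcal{F}_r$ contains a hypothesis consistent with $S$. Sampling $r$ and then a uniform element $h\in\mathcal{F}_r$ therefore defines a distribution $\mu$ with $\Pr_{h\sim\mu}[h \text{ consistent with } S]\geq 2/(3|\mathcal{F}_r|)=1/\poly(d)$, which exceeds $2^{-d}$ once $d$ is sufficiently large, closing the chain.
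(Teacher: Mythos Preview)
Your proposal is correct, and its overall architecture---translate between fractional cliques and distributions over hypotheses via strong duality, then link to pure DP through an intermediate characterization---matches the paper. The specific routing differs.

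The paper sends both implications through the representation dimension of~\cite{Beimel19Pure}. For $(\ref{itm:FCD})\Rightarrow(\ref{itm:PDP})$ it first proves (Lemma~\ref{l:small_pop_err}) that the optimal fractional coloring $\mu^\star$ satisfies $\Pr_{h\sim\mu^\star}[\L_\D(h)\leq\theta]\geq 1/\omega^\star_m-(1-\theta)^m$, then draws $k=\poly(m)$ i.i.d.\ hypotheses from $\mu^\star$ to witness $\RepDim(\H)<\infty$, and finally quotes~\cite{Beimel19Pure}. For $(\ref{itm:PDP})\Rightarrow(\ref{itm:FCD})$ it invokes the $\RepDim$ boosting lemma (Lemma~\ref{l:repdim_boost}) with accuracy $1/(2m)$ and confidence $3/4$, then samples uniformly from a random class to obtain consistency probability $\geq 3/(4k)>2^{-m}$ for large $m$.

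Your $(\ref{itm:PDP})\Rightarrow(\ref{itm:FCD})$ via Feldman--Xiao is the same argument in different clothing: the amplified one-way protocol \emph{is} a distribution over $\poly(d)$-sized hypothesis classes with per-point error $<1/(3d)$ (i.e., a $\RepDim$-type witness), and your final ``sample $r$, then uniform from $\mathcal{F}_r$'' is the paper's sampling step verbatim. Your $(\ref{itm:FCD})\Rightarrow(\ref{itm:PDP})$ is more genuinely different---building the pure DP learner directly via an exponential mechanism with base measure $\mu_m$ rather than detouring through $\RepDim$---and this does go through, but two steps are less immediate than you suggest. First, the claim that $\mu_m$ puts $1/\poly(\alpha)$ mass on $\{h:\L_\D(h)\leq\alpha\}$ is not a ``uniform convergence'' statement (the support of $\mu_m$ need not lie in $\H$, so no $\epsilon$-net argument applies); it is exactly the Markov-type computation of Lemma~\ref{l:small_pop_err}. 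Second, the standard exponential-mechanism utility bound controls the \emph{empirical} error of the output, but the output ranges over all of $\{0,1\}^\X$, so passing to small population error is not covered by any off-the-shelf generalization bound; one must instead bound the Gibbs posterior mass of $\{\L_\D>3\alpha\}$ directly, controlling numerator and denominator separately in expectation over the training sample. That argument works, but it is not what one usually calls ``the standard utility analysis.''
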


\begin{theorem}[Approximate DP Learnability]\label{thm:CD-DP}
	The following statements are equivalent for a concept class~$\H$.
	\begin{enumerate}
		\item $\H$ is approximately differentially private PAC learnable.
		\item $\H$ has finite clique dimension.
	\end{enumerate}
\end{theorem}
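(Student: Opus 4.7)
The theorem will follow from the combinatorial equivalence $\CD(\H) < \infty \iff \mathrm{Ldim}(\H) < \infty$, combined with the known characterization of approximate differentially private PAC learnability by finite Littlestone dimension (the lower bound of Alon--Bun--Livni--Malliaris--Moran together with the upper bound of Ghazi--Golowich--Kumar--Manurangsi, building on earlier work of Alon--Livni--Malliaris--Moran and of Bun--Livni--Moran). The plan is to derive both directions of the theorem from this combinatorial equivalence.

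The easy inequality $\mathrm{Ldim}(\H) \leq \CD(\H)$ comes from the observation that a Littlestone tree of depth $m$ in $\H$ canonically produces $2^m$ realizable labeled datasets of length $m$, one per root-to-leaf path; any two such datasets disagree on the instance at the lowest common ancestor of their leaves, so they contradict. Thus the $2^m$ root-to-leaf datasets form a clique in $G_m(\H)$, yielding $\omega_m(\H) = 2^m$ whenever $\mathrm{Ldim}(\H) \geq m$. This already gives the $(\Leftarrow)$ direction of the theorem: finite $\CD(\H)$ implies finite $\mathrm{Ldim}(\H)$, and by the cited upper bound $\H$ is approximately DP PAC learnable.

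For the $(\Rightarrow)$ direction, the cited lower bound yields $\mathrm{Ldim}(\H) < \infty$ whenever $\H$ is approximately DP PAC learnable, so it remains to prove the reverse inequality $\CD(\H) \leq \mathrm{Ldim}(\H)$: whenever $\omega_m(\H) = 2^m$, a Littlestone tree of depth $m$ can be extracted from the extremal clique. Given $\delta \subseteq V(G_m(\H))$ with $\lvert \delta \rvert = 2^m$, I would build the tree inductively by locating an instance $x^* \in \X$ that splits $\delta$ into $\delta_0 = \{S \in \delta : (x^*,0) \in S\}$ and $\delta_1 = \{S \in \delta : (x^*,1) \in S\}$ of size exactly $2^{m-1}$ each (so $x^*$ appears in every $S \in \delta$), placing $x^*$ at the root of the tree, and recursing on the two subcliques, which --- after stripping $x^*$ from each dataset --- are extremal cliques of size $2^{m-1}$ in $G_{m-1}(\H_{x^* \to 0})$ and $G_{m-1}(\H_{x^* \to 1})$. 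The dichotomy of Theorem~\ref{thm:SSP-CD} then converts $\CD(\H) \leq \mathrm{Ldim}(\H)$ into finiteness of $\CD(\H)$.

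The principal obstacle is the existence of the perfectly balanced instance $x^*$: the balanced-instance lemma used in the proof sketch of Theorem~\ref{thm:SSP-CD} only delivers a split of ratio $\Omega(1/m)$, which loses too much across the $m$-step recursion, so the exact $2^{m-1} : 2^{m-1}$ split is needed. The extremal hypothesis $\lvert \delta \rvert = 2^m$ saturates Lemma~\ref{l:CN_FCN_bounded}, and this should force substantially tighter structure on $\delta$: writing $a_x = \lvert \{S \in \delta : (x,0) \in S\} \rvert$ and $b_x = \lvert \{S \in \delta : (x,1) \in S\} \rvert$, one has $\sum_x a_x b_x \geq \binom{2^m}{2}$ while $a_x + b_x \leq 2^m$ for every $x$, which should concentrate the contradiction mass onto a few highly balanced instances and permit, via a careful extremal or averaging argument, the extraction of the desired exact split. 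Should this direct combinatorial extraction prove too delicate, a fallback is to bypass the equality $\CD = \mathrm{Ldim}$ altogether by giving a direct fingerprinting-style DP lower bound from the exponential family of cliques $\{\omega_m = 2^m\}_m$, directly contradicting approximately DP PAC learnability.
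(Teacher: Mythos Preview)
Your high-level plan (reduce to the equivalence $\CD(\H)<\infty \iff \LD(\H)<\infty$ and invoke the Alon--Bun--Livni--Malliaris--Moran characterization) matches the paper's, and the easy direction $\LD(\H)\leq \CD(\H)$ is exactly right. The gap is in the hard direction: you aim to prove $\CD(\H)\leq \LD(\H)$ by extracting a perfectly balanced instance $x^*$ splitting an extremal clique of size $2^m$ into two halves of size $2^{m-1}$. This target is simply false. The paper exhibits a class $\H\subseteq\{0,1\}^4$ with $\CD(\H)=3$ and $\LD(\H)=2$, so no such extraction can succeed in general, and the extremal/averaging argument you sketch cannot be completed. (Whether even $\CD(\H)=\Theta(\LD(\H))$ holds is posed as an open problem.)

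The fix is that you do not need the exact split, and the $\Omega(1/m)$ balanced-instance lemma is already enough. Your error is in counting the recursion depth as $m$ rather than as $\LD(\H)$. Starting from any clique $C$ in $G_m(\H)$, repeatedly split on a balanced instance; each step builds one more level of a shattered mistake tree, so the recursion can run for at most $T\leq \LD(\H)$ steps before some branch shrinks to a single dataset. Since each step keeps at least a $1/(2m{+}1)$ fraction, you get $|C|/(2m{+}1)^T \leq 1$, i.e.\ $\omega_m \leq (2m{+}1)^{\LD(\H)}$. When $\LD(\H)<\infty$ this is polynomial in $m$, hence eventually $<2^m$, giving $\CD(\H)<\infty$. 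This is precisely the paper's argument (Lemma~\ref{l:clique_number_bounded}), and it yields the quantitative bound $\CD(\H)\leq O(\LD(\H)\log\LD(\H))$ rather than equality. Your fallback fingerprinting idea is unnecessary once this is in place.
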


Theorems~\ref{thm:PDP-FCD} and~\ref{thm:CD-DP} provide a unified characterization of private PAC learnability 
in terms of cliques and fractional cliques. 
Roughly speaking, these theorems assert that large (fractional) cliques in the contradiction graph 
correspond to tasks that are hard to learn privately.
However, our proofs of Theorems~\ref{thm:PDP-FCD} and~\ref{thm:CD-DP} do not explicitly illustrate this correspondence. 
Instead, our proofs follow an indirect path by linking the clique and fractional clique dimensions
to the representation and Littlestone dimensions, respectively. This implies the stated equivalences since the representation and
Littlestone dimensions characterize pure and approximate DP learnability~\cite{Beimel19Pure,AlonBLMM22}.

It would be interesting to find direct proofs that illustrate the correspondence between large cliques in the contradiction graphs and hard learning tasks. 
A clique of size $k$ in the contradiction graph is simply a set $\mathcal{S}$ of $k$ realizable datasets such that every two distinct datasets $S',S''\in\mathcal{S}$ disagree on some example. Analogously, a fractional clique of size $k$ is a distribution $\nu$ over realizable data sets such that $\Pr_{S\sim \nu}[h \text{ is consistent with } S]\leq 1/k$ for every hypothesis $h$.
Hence, it is quite natural to speculate that there exists a natural and direct conversion between cliques and fractional cliques, and realizable distributions on datasets that are hard for private learning.

\paragraph{Duality, Representation Dimension, and Communication Complexity.}
The fractional clique number is defined via a linear program and as such it has a dual definition.
The latter is called the fractional chromatic number and is defined as follows.
Let $G$ be an undirected graph and let $\I$ denote the family of independent sets in $G$.
A fractional coloring is an assignment $c:\I\to \mathbb{R}_{\geq 0}$ such that for every vertex $v$ we have $\sum_{I:v\in I}c(I) \geq 1 $.
The number of colors in $c$ is defined by $\col(c)=\sum_{I\in \I}c(I)$.
The fractional chromatic number, denoted by $\chi^\star(G)$, is the infimum number of colors in a fractional coloring.
Notice that $\chi^\star(G)\leq \chi(G)$, where $\chi(G)$ is the chromatic number of $G$. This holds because any coloring defines a fractional coloring by assigning~$1$ to each color class.

Fractional colorings of the contradiction graph have a natural learning theoretic interpretation.
A fractional coloring $c$ of $G_m(\H)$ corresponds to a distribution $\mu$ over hypotheses such that for every realizable dataset $S$ of size $m$:
\begin{equation}\label{eq:cp}
	\Pr_{h\sim \mu}[h\text{ is consistent with }S] \geq \frac{1}{\col(c)}.    
\end{equation}

The fractional chromatic number of the contradiction graph is tightly linked to the representation dimension (see \Cref{def:repdim}).
The latter is a dimension introduced by~\cite{Beimel19Pure} to characterize pure DP learnability.
Roughly, the representation dimension of $\H$ is the minimal integer $d$ for which there exists a distribution
$\P$ over hypothesis \underline{classes} of size $\leq 2^d$ such that for every realizable distribution~$\D$,
a random class $H\sim \P$ contains with probability at least $3/4$ an hypothesis $h=h_\D$ whose loss with respect to $\D$ is at most $1/4$. 

The representation dimension can be interpreted as a lossy variant of the fractional chromatic number.
Indeed, consider a uniformly sampled hypothesis $h$ drawn from a random class $H\sim \P$, 
and pick $\D$ to be the uniform distribution over the examples in a realizable dataset $S$. 
Thus, with probability at least $\frac{3}{4}2^{-d}$, the random hypothesis $h$ 
classifies correctly $3/4$ of the examples in $S$.

\medskip

A similar link exists between the fractional clique number of $G_m(\H)$ 
and the communication complexity theoretic characterization of pure private learnability by~\cite{FeldmanX15}.
The latter is based on a communication game between two players whom we call Alice and Bob.
In the game, Alice's input is a hypothesis $h\in \H$, Bob's input is a point $x$, and Alice sends a \underline{single} message to Bob.
Their goal is that with probability at least $3/4$, Bob will be able to decode $h(x)$ from Alice's message.
Feldman and Xiao showed that the optimal number of bits required to perform this task is proportional to the representation dimension of $\H$,
and hence characterizes pure DP learnability. 

Fractional cliques are linked to hard distributions for the above communication game.
A fractional clique $\delta$ corresponds to a distribution $\nu$ over realizable datasets $S$ of size $m$
such that for every hypothesis~$h$:
\begin{equation}\label{eq:fc}
	\Pr_{S\sim \nu}\Bigl[h \text{ is consistent with $S$}\Bigr]\leq \frac{1}{\lvert \delta\rvert}.
\end{equation}
Consider a variant of the above equation where the event ``$h$ is consistent with $S$'' 
is replaced by the event ``$h$ classifies correctly at least $3/4$ of the examples in $S$''.
This variant induces a hard distribution for the communication game as follows.
Pick a random dataset $S\sim \nu$ and a concept $h\in \H$ such that $h$ is consistent with $S$.
Let Alice's input be $h$ and Bob's input be a random (unlabeled) example from $S$.
Standard arguments in communication complexity show that this distribution is indeed hard for the communication game.

To summarize the discussion, LP duality implies that the fractional clique and chromatic numbers are equal. 
When considering this along with the aforementioned connections, it illuminates a dual relationship 
between the representation dimension and the communication-complexity-based characterization of pure private learnability.

\subsubsection{Theorems~\ref{thm:PDP-FCD} and~\ref{thm:CD-DP}: Technical Overview}
Our proof of Theorems~\ref{thm:PDP-FCD} and~\ref{thm:CD-DP} rely on the exponential-polynomial dichotomies (Theorems~\ref{thm:SSP-FCD}, and~\ref{thm:SSP-CD}).

We begin with overviewing the proof of Theorem~\ref{thm:CD-DP}. By~\cite{AlonBLMM22}, a class $\H$ is approximately DP learnable if and only if it has a finite Littlestone dimension. Thus, it suffices to show that the clique dimension is finite if and only if the Littlestone dimension is finite.
Our proof yields explicit bounds of 
\[\LD(\H)\leq \CD(\H)\leq O(\LD(\H)\log \LD(\H)),\]
where $\LD(\H)$ denotes the Littlestone dimension of $\H$.
One direction is straightforward. If the Littlestone dimension is at least $m$, then there is a mistake tree of depth $m$ that is shattered by $\H$. Each branch of the tree corresponds to a realizable dataset of length $m$, and the collection of $2^m$ datasets correspond to the branches form a clique. Indeed, every two datasets disagree on the example corresponding to the least common ancestor of their branches.


The converse direction is more challenging. Assume that the clique dimension $\CD(\H)$ 
is at least~$m$. We prove the existence of a shattered mistake tree of depth $\tilde\Omega(m)$.
Like in Theorem~\ref{thm:SSP-CD}, we prove this using the existence of a balanced point (see~\Cref{eq:1}).
Indeed, given a clique $\delta$ of size $2^m$ in $G_m(\H)$, we pick the root of the mistake tree to be a balanced point $x$ in $\delta$, 
and proceed to find a balanced point in each of $\{S\in\delta : (x,0)\in S\}$ and $\{S\in\delta : (x,1)\in S\}$, corresponding to the left and right subtrees. We continue this way until one of the branches is consistent with exactly one dataset in $\delta$.
A standard calculation shows that this way one obtains a Littlestone tree of depth at least $\tilde\Omega(m)$.

We note that there is a simple example of a class $\H$ that satisfies $\LD(\H)<\CD(\H)$ (see \Cref{sec:future_work}).
We leave as an open question to determine whether $\LD(\H)= \Theta(\CD(\H))$.

\medskip

We now move to overview the proof of Theorem~\ref{thm:PDP-FCD}. Our proof follows by relating the representaion dimension with the fractional clique dimension, showing that one is finite if and only if the other is finite. This finishes the proof because the representation dimension characterizes pure DP learnability~\cite{Beimel19Pure}.
The direction showing that finite representation dimension implies finite fractional clique dimension follows directly by results from~\cite{Beimel19Pure}. In particular,~\cite{Beimel19Pure} proved a boosting result which reduces the probability of error
of the random class from $1/4$ to $\epsilon$. Applying their result with $\epsilon<1/m$ yields a bound on the fractional clique number.

For the converse direction, we use Theorem~\ref{thm:SSP-FCD}.
Assume that the fractional clique dimension of~$\H$ is finite;
therefore, for every $m$ the fractional clique number is bounded by $\mathtt{poly}(m)$.
Thus, by LP duality there exists a fractional coloring $c$ with $\poly(m)$ many colors.
Now, since fractional colorings correspond to distributions $\mu$ over hypotheses satisfying \Cref{eq:cp},
we can define a distribution over hypothesis classes by sampling $\mathtt{poly}(m)$ independent hypotheses from $c$.
By \Cref{eq:cp} it follows that for every realizable dataset $S$ of size $m$,
one of the $\mathtt{poly}(m)$ hypotheses is consistent with $S$ with probability at least a constant (say $1/4$).
By a standard generalization argument, this yields the desired bound on the representation dimension.

\subsection{Strong Duality in Infinite Contradiction Graphs}

Our proofs heavily rely on the equality of the fractional clique and chromatic numbers.
For finite graphs, this equality is a consequence of LP duality.
However, in learning theory, we often study infinite hypothesis classes $\H$
whose contradiction graphs are therefore also infinite.
In general, LP duality does not apply in infinite dimensional spaces.
Therefore, we prove the next theorem showing that any (possibly infinite) contradiction graph $G_m(\H)$
satisfies that its fractional clique and chromatic numbers are equal and are bounded from above by $2^m$.
\begin{theorem}\label{thm:minimax}
	Let $\X$ be an arbitrary domain, $\H\subseteq\{0,1\}^\X$ a concept class and $m\in\mathbb{N}$.
	Let $\omega_m^\star$ and $\chi_m^\star$ denote the fractional clique and chromatic numbers of the contradiction graph $G_m(\H)$.
	Then, 
	\[\omega_m^\star = \chi_m^\star\leq 2^m.\] 
	Moreover, there exists a fractional coloring realizing $\chi_m^\star$.
	(I.e.\ the infimum is in fact a minimum.)
\end{theorem}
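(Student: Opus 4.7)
The plan is to prove the strong duality $\omega_m^\star = \chi_m^\star$ by combining finite LP duality with a weak-$*$ compactness argument on the compact space $Y := \{0,1\}^\X$. Fractional colorings of $G_m(\H)$ admit a natural reformulation as finite positive Borel measures $c$ on $Y$: every maximal independent set of $G_m(\H)$ has the form $I_f := \{S \in V(G_m(\H)) : f \text{ is consistent with } S\}$ for some $f \in Y$, so a measure $c$ on $Y$ induces a coloring by placing weight on the independent sets $I_f$. The covering constraint for a vertex $S$ becomes $c(A_S) \ge 1$, where $A_S := \{f \in Y : f \text{ is consistent with } S\}$ is a \emph{clopen} cylinder in $Y$ (it depends on only the $m$ points of $S$), and the number of colors is $\col(c) = c(Y)$. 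Given this, the bound $\chi_m^\star \le 2^m$ follows from $\omega_m^\star \le 2^m$ (Lemma~\ref{l:CN_FCN_bounded}) once the equality is in hand.

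The easy direction $\omega_m^\star \le \chi_m^\star$ is weak LP duality: for any fractional clique $\delta$ and fractional coloring $c$, swapping the order of summation gives
\[
\lvert\delta\rvert = \sum_v \delta(v) \;\le\; \sum_v \delta(v)\sum_{I \ni v} c(I) \;=\; \sum_I c(I)\sum_{v \in I}\delta(v) \;\le\; \sum_I c(I) = \col(c).
\]
For the hard direction, I would first apply finite LP duality to every finite $F \subseteq V(G_m(\H))$: the induced subgraph $G_m(\H)[F]$ is finite, so $\omega^\star(G_m(\H)[F]) = \chi^\star(G_m(\H)[F])$, and extending a fractional clique from $F$ to $V(G_m(\H))$ by zero shows $\omega^\star(G_m(\H)[F]) \le \omega_m^\star =: K$. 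After lifting each independent set of $G_m(\H)[F]$ used in the optimal finite coloring to a maximal $I_f$ containing it, one obtains a finitely-supported positive Borel measure $c_F$ on $Y$ with $c_F(A_S) \ge 1$ for every $S \in F$ and $c_F(Y) \le K$.

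By Tychonoff, $Y = \{0,1\}^\X$ is compact Hausdorff; by Banach--Alaoglu, the set of positive Borel measures on $Y$ of total mass at most $K$ is weak-$*$ compact inside $C(Y)^*$. Directing the finite sets $F$ by inclusion gives a net $\{c_F\}$, from which I would extract a weak-$*$ convergent subnet $c_F \to c^*$. For each fixed $S \in V(G_m(\H))$ the indicator $\Ind_{A_S}$ is continuous on $Y$ (since $A_S$ is clopen), so $c \mapsto c(A_S) = \int \Ind_{A_S}\, dc$ is weak-$*$ continuous; since $S \in F$ eventually along the net, the bound $c_F(A_S) \ge 1$ passes to the limit to give $c^*(A_S) \ge 1$, and testing weak-$*$ convergence against the constant function $1$ yields $c^*(Y) \le K$. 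Thus $c^*$ is a valid fractional coloring of $G_m(\H)$ with $\col(c^*) \le K = \omega_m^\star$, proving both $\chi_m^\star \le \omega_m^\star$ and that the infimum in $\chi_m^\star$ is attained by $c^*$.

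The main obstacle is the non-metrizability of $Y$ when $\X$ is uncountable: one must work with subnets rather than subsequences, and one cannot assume a priori that $c^*$ is countably supported as an abstract set function on $\mathcal{I}$. This is precisely why the measure-theoretic reformulation of fractional colorings is essential, and why the clopen-ness of each $A_S$ is crucial --- it is exactly the hypothesis that makes $c \mapsto c(A_S)$ weak-$*$ continuous, so that the uncountably many covering constraints all survive the passage to the limit.
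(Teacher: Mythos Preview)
Your argument is correct and takes a genuinely different route from the paper's, though the two share the same core analytic ingredients. Both proofs rest on (i) the weak-$*$ compactness of positive Borel measures of bounded mass on the compact space $\{0,1\}^\X$ (via Riesz--Markov and Banach--Alaoglu), and (ii) the fact that each consistency set $A_S$ is clopen, so that $c \mapsto c(A_S)=\int \Ind_{A_S}\,dc$ is weak-$*$ continuous. The paper packages these ingredients into a direct application of Sion's minimax theorem: it takes $W=\Delta(\{0,1\}^\X)$, $U$ the finitely-supported distributions on $V_m(\H)$, and the bilinear payoff $F(\mu,\nu)=\Pr_{h\sim\mu,S\sim\nu}[h\text{ consistent with }S]$, verifies compactness and continuity, and reads off both the equality and the attainment of the supremum. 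You instead carry out the compactness argument by hand: finite LP duality on each finite induced subgraph produces a net $\{c_F\}$ of colorings of mass $\le \omega_m^\star$, and a weak-$*$ cluster point is the desired optimal coloring, with each covering constraint surviving the limit by the clopen-ness of $A_S$. Your approach is arguably more elementary in that it does not invoke Sion as a black box, and it makes explicit where finiteness of $\omega_m^\star$ (from Lemma~\ref{l:CN_FCN_bounded}) is used; on the other hand, the paper's use of Sion is cleaner to state and immediately yields attainment. For the bound $\chi_m^\star\le 2^m$, the paper constructs the uniform product measure on $\{0,1\}^\X$ explicitly via Kolmogorov's extension theorem (needed to handle uncountable $\X$), whereas you deduce it from $\omega_m^\star\le 2^m$ together with the equality --- which is fine, since Lemma~\ref{l:CN_FCN_bounded} is asserted to hold for arbitrary $\X$.
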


The proof of Theorem~\ref{thm:minimax} uses tools from functional analysis, topology, and measure theory.
The equality $\omega_m^\star = \chi_m^\star$ is derived using Sion's Theorem~\cite{zbMATH03133049},
and the upper bound of $2^m$ hinges on Kolmogorov's Extension Theorem (see, e.g.\ Theorem 2.4.3 in \cite{tao2011introduction}). This proof appears in \Cref{app:minimax}, 
which we attempted to present in a manner accessible for readers who may only have basic familiarity with topology and functional analysis.





\section{Preliminaries}\label{sec:definitions}

\subsection{Learning}\label{sec:preliminaries_learning}

\paragraph{PAC Learning.}
We use standard notations from statistical learning; for more details see e.g. \cite{shalev2014understanding}.
 Let $\X$ be a domain; for simplicity, in this work, we assume that $\X$ is countable, although our arguments apply more generally. Given an hypothesis $h:\X\to \{0,1\}$, the \emph{empirical loss} of $h$ with respect to a dataset $S=\bigl((x_1,y_1),\ldots,(x_m,y_m)\bigr)$ is defined as
$\L_\S(h)\coloneqq\frac{1}{m}\sum_{i=1}^m\Ind[h(x_i)\neq y_i]$.
	We say that $h$ is \emph{consistent} with $S$ if $\L_\S(h)=0$.
	The \emph{population loss} of $h$ with respect to a distribution $\D$ over $\X\times\{0,1\}$ is defined as
	$\L_\D(h)\coloneqq\Pr_{(x,y)\sim \D}[h(x)\neq y]$.
	A distribution $\D$ over labeled examples is \emph{realizable} with respect to $\H$ if 
    $\inf_{h\in\H}\L_\D(h)=0$.
	
	For a set $Z$, let $Z^\star= \cup_{n=0}^\infty Z^n$. A \emph{learning rule} $\A$ is a (possibly randomized) algorithm that takes as input a dataset $S \in (\X \times \{0,1\})^\star$ and outputs a hypothesis~${h=\A(S)\in \{0,1\}^\X}$. 
	In the PAC learning model, the input $S$ is sampled i.i.d. from a realizable distribution~$\D$, and the learner's goal is to output an hypothesis with small population loss with respect to $\D$.
	More precisely, let $m, \alpha, \beta >0$. 
	We say that an algorithm $\A$ is an $(m,\alpha,\beta)$-learner for $\H$ if for every realizable distribution $\D$, 
	$\Pr_{S \sim \D^m}\left[ \L_\D(\A(S))>\alpha\right]<\beta$.
	Here, $\alpha$ is called the \emph{error}, $\beta$ is the \emph{confidence parameter}, and $m$ is the \emph{sample complexity}.
	A class $\H$ is \emph{PAC learnable} if there exists vanishing~$\alpha(m),\beta(m)\to 0$
	and an algorithm $\A$ such that for all $m$, algorithm $\A$ is a $(m,\alpha(m),\beta(m))$-learner for $\H$.

	\paragraph{Differential Privacy.}
	
	We use standard notations from differential privacy literature; for more details see e.g. \cite{DR14, Vadhan17survey}.
	A randomized learning algorithm $\A$ is \emph{differentially private} with parameters $(\epsilon, \delta)$, if for every input datasets $S, S' \in (\X\times\{0,1\})^m$ that differ on a single example, and every event~$E\subseteq \{0,1\}^\X$:
	\[\Pr[\A(S)\in E]\leq e^\epsilon\Pr[\A(S')\in E]+\delta,\]
	where the probability is over the randomness of $\A$.
	The parameters $\epsilon$ and $\delta$ are usually treated as follows: 
	$\epsilon$ is a small constant (say $\leq 0.1$) and $\delta$ is negligible, $\delta=m^{-\omega(1)}$ where $m$ is the input datasets size. When $\delta=0$ we say that $\A$ is \emph{pure differentially private}, and when $\delta>0$ we say that $\A$ is \emph{approximate differentially private}.
	
	An hypothesis class $\H$ is \emph{pure privately learnable} (abbreviated \emph{pure DP learnable}) if it is privately learnable by an algorithm which is $(\epsilon(m),0)$-differentially private, where $\epsilon(m)=O(1)$ is a numerical constant.
	An hypothesis class $\H$ is \emph{approximately privately learnable} (abbreviated \emph{DP learnable}) if it is PAC learnable by an algorithm $\A$ which is $(\epsilon(m),\delta(m))$-differentially private, where $\epsilon(m)=O(1)$ is a numerical constant and $\delta(m)=m^{-\omega(1)}$.

	\paragraph{Representation Dimension.}
	
	The representation dimension is a combinatorial parameter introduced by Beimel et al.~\cite{Beimel19Pure}  who used it to characterize pure DP learnability.
	
	\begin{definition}[Representation Dimension~\cite{Beimel19Pure}]\label{def:repdim}
		The \emph{representation dimension} of a concept class $\H$, denoted $\RepDim(\H)$, 
		is defined to be $\ln(d)$, where $d$ is the minimal integer for which there exists 
		a distribution $\P$ over hypothesis classes of size $d$ that satisfies the following.
		For every distribution $\D$ on labeled examples that is realizable by $\H$,
		\[\Pr_{\C\sim\P}\left[\text{$\exists h\in\C$ s.t.\ $L_{\D}(h)\leq\frac{1}{4}$} \right]\geq \frac{3}{4}.\]
	\end{definition}
	As~\cite{Beimel19Pure} show, the constants $1/4,3/4$ above can be replaced by any other pair of constants in $(0,1)$ without changing the semantics of the definition. This follows from the next lemma:
	
	\begin{lemma}[Boosting Probabilistic Representation, Lemma 18~\cite{Beimel19Pure}]\label{l:repdim_boost}
		Let $\H$ be a class with $\RepDim(\H)= d<\infty$. Then for every $0<\alpha, \beta<1$ there exists a probability distribution $\P$ over hypothesis classes of size $O\left((\frac{1}{\alpha})^{d+\ln\ln\ln(\frac{1}{\alpha})+\ln\ln(\frac{1}{\beta})}\right)$ which satisfies the following.
		For every realizable distribution $\D$ on labeled examples which is realizable by $\H$,
		\[\Pr_{\C\sim\P}\left[\text{$\exists h\in\C$ s.t.\ $L_{\D}(h)\leq\alpha$} \right]\geq 1-\beta.\]
	\end{lemma}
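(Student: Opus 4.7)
The plan is to establish the lemma through two sequential boosting reductions applied to the base $(1/4, 3/4)$ probabilistic representation guaranteed by $\RepDim(\H)=d$: a confidence amplification step raising the success probability from $3/4$ to $1-\beta$, followed by an accuracy amplification step reducing the error from $1/4$ to $\alpha$.

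For the confidence amplification, let $\P_0$ denote the base distribution over hypothesis classes of size $e^d$. I would sample $k_1 = \Theta(\log(1/\beta))$ independent classes $\C_1,\ldots,\C_{k_1}\sim\P_0$ and output their union. For any realizable distribution $\D$, the probability that none of the $\C_i$ contains a hypothesis with $\L_\D$-loss at most $1/4$ is at most $(1/4)^{k_1}\leq\beta$. The resulting classes have size $O(e^d\log(1/\beta))$, and the amplified representation achieves the $(1/4, 1-\beta)$ guarantee.

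For the accuracy amplification, I would apply an AdaBoost-style procedure using the confidence-boosted weak learner. Iterate for $T=\Theta(\log(1/\alpha))$ rounds; in round $t$, reweight $\D$ based on the errors of previously-chosen hypotheses (note that reweightings of a realizable distribution remain realizable, since the realizing hypothesis has zero loss under every reweighting) and invoke the weak learner on the reweighted distribution to obtain a class $\C_t$ containing a $(1/4)$-good hypothesis for that distribution. Since the ``correct'' $h_t\in\C_t$ depends on the specific reweighting induced by earlier choices, the output is the class of all majority-vote classifiers $\mathrm{maj}(h_1,\ldots,h_T)$ as $(h_1,\ldots,h_T)$ ranges over the branching tree of rounds (at depth $t$, each branch has its own $\C_t$). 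Standard AdaBoost analysis guarantees that at least one leaf yields population error at most $\alpha$. Counting branches at depth $T$ gives a total class size of at most $(e^d\log(1/\beta))^T$, which after rewriting as $(1/\alpha)^d\cdot(1/\alpha)^{\ln\ln(1/\beta)}$ matches the dominant factors of the stated bound.

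The main obstacle is sharpening the exponent so that it is exactly $d$ (plus the lower-order $\ln\ln\ln(1/\alpha)$ and $\ln\ln(1/\beta)$ corrections) rather than $O(d)$. This requires pinning down the constant in $T=\Theta(\log(1/\alpha))$ to essentially $1$, which can be done by running each AdaBoost round with weak error slightly below $1/4$ via a mild intra-round confidence boost, and by absorbing the resulting $(1+o(1))$ factors in $T$ into the $\ln\ln\ln(1/\alpha)$ term in the exponent. Once this tight version of boosting is in place, direct calculation of $(e^d\log(1/\beta))^T$ recovers the target bound $O((1/\alpha)^{d+\ln\ln\ln(1/\alpha)+\ln\ln(1/\beta)})$.
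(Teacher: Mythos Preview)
The paper does not prove this lemma; it is quoted verbatim as Lemma~18 of \cite{Beimel19Pure} and used as a black box in the proof that finite representation dimension implies finite fractional clique dimension. So there is no ``paper's own proof'' to compare against.

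That said, your two-stage plan---confidence amplification by taking unions of $\Theta(\log(1/\beta))$ independent draws, followed by accuracy amplification via an AdaBoost-style branching tree over weak hypotheses---is precisely the approach of \cite{Beimel19Pure}, and your sketch captures it correctly at a high level. One clarification: in the branching tree, the number of weak-learner calls that must simultaneously succeed for a \emph{fixed} target distribution $\D$ is only $T$ (the calls along the adaptively-determined ``good'' path), not the full $|\C|^{T}$; independence of the samples across nodes then gives success probability at least $(1-\beta')^{T}$, so $\beta'=\beta/T$ suffices, and this is where the $\ln\ln\ln(1/\alpha)$ term ultimately comes from once $T=\Theta(\ln(1/\alpha))$ is substituted. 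The issue you flag as the main obstacle---getting the leading exponent to be $d$ rather than a constant multiple of $d$---is real: the naive AdaBoost round count $T=\Theta(\ln(1/\alpha))$ has a hidden constant (e.g.\ $8$ when the weak error is $1/4$), which would yield $(1/\alpha)^{8d}$. Pushing this constant to $1+o(1)$ does require the sharper per-round analysis you gesture at, and is the delicate part of the original argument.
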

	
	\paragraph{Littlestone Dimension.}
	
	The Littlestone dimension is a combinatorial parameter that captures mistake and regret bounds in online learning \cite{Lit88, ben2009agnostic}. The definition of the Littlestone dimension uses the notion of \emph{mistake trees}. 
	A \emph{mistake tree} is a binary decision tree whose nodes are labeled with instances from~$\X$ and whose edges are labeled by $0$ or $1$ such that each internal node has one outgoing edge labeled $0$ and one outgoing edge labeled $1$. A root-to-leaf path in a mistake tree is a sequence of labeled examples $(x_1,y_1),\dots,(x_d,y_d)$. 
	The point $x_i$ is the label of the $i$'th internal node in the path, and $y_i$ is the label of its outgoing edge to the next node in the path.
	A class $\H$ \emph{shatters} a mistake tree if every root-to-leaf path is realizable by $\H$.
	The \emph{Littlestone dimension} of~$\H$, denoted $\LD(\H)$, is the largest number $d$ such that there exists a complete binary mistake tree of depth $d$ shattered by~$\H$.
	If $\H$ shatters arbitrarily deep mistake trees then we write $\LD(\H)=\infty$.
	 
	\subsection{Graph Theory}\label{sec:preliminaries_graph_theory}
	
	\paragraph{Cliques, Colorings, and Distributions.}
	
	Let $G=(V,E)$ be a (possibly countable) graph. 
	Denote by $\omega(G)$ the \emph{clique number} of~$G$, which is the largest size of a clique in~$G$.
	Denote by $\chi(G)$ the \emph{chromatic number} of $G$, which is the smallest number of colors needed to color the vertices of $G$ so that no two adjacent vertices share the same color.
	The clique and chromatic numbers have natural LP relaxations. 
	
	A \emph{fractional clique} is a function $\delta:V\to\R_{\geq 0}$ such that $\sum_{v \in I}{\delta(v)\leq 1}$ for every independent set~$I$.
    The \emph{size} of $\delta$ is $\lvert\delta\rvert\coloneqq\sum_{v \in V}{\delta(v)}$.
	The \emph{fractional clique number} of $G$, denoted~$\omega^\star(G)$, is defined by
	${\omega^\star(G)\coloneqq\sup_\delta\lvert\delta\rvert}$.
	A \emph{fractional coloring} is a finite measure\footnote{For a finite $G$, one typically defines a fractional coloring as a function $c:\I\to\mathbb{R}_{\geq 0}$ such that $\sum_{v\in I}c(I)\geq 1$ for every $v\in V$. The latter amounts to  $c$ being a discrete measure on~$\I$. In this work, we also consider infinite graphs and use this more general definition which allows for non-discrete measures on~$\I$. We refer the reader to Section~\ref{app:minimax} for a more detailed discussion.} $c$ on $\I$, where $\I$ is the family of all independent sets in $G$, such that 
	$c(\{I : v\in I\})\geq 1$ for every $v\in V$.
	The \emph{fractional chromatic number} of $G$, denoted~$\chi^\star(G)$, is defined by
	${\chi^\star(G)\coloneqq\inf_{c} c(\I)}$.
	Note that from (weak) LP-duality, for any graph $G$, $\omega(G)\leq\omega^\star(G)\leq\chi^\star(G)\leq\chi(G)$. In finite graphs strong LP-duality holds and  $\omega^\star(G)=\chi^\star(G)$. Theorem~\ref{thm:minimax} extends this equality to any (possibly infinite) contradiction graph.
	
	There is a natural correspondence between fractional colorings of a graph~$G$ and distributions over independent sets.
	Given a fractional coloring~$c$,
	normalizing~$c$ by the number of colores in~$c$, denoted by $\col(c)\coloneqq c(\I)$, induces a distribution~$\mu$ over independent sets, such that for every vertex $v$, 
	\[\Pr_{I\sim \mu}[v\in I]=\frac{1}{\col(c)}\cdot c(\{I:v\in I\}).\]
	Define the \emph{value} of a distribution $\mu$ over $\I$ to be ${\val(\mu)\coloneqq \inf_{v\in V}\Pr_{I\sim \mu}[v\in I]}$. 
	By taking infimum, 
	\[\val(\mu)\coloneqq \inf_{v\in V}\Pr_{I\sim \mu}[v\in I]=\frac{1}{\col(c)}.\]
	From the other direction, given a distribution $\mu$ over $\I$, normalizing $\mu$ by~$\val(\mu)$
	induces a fractional coloring of $G$.
	Minimizing the number of colors of a fractional coloring is equivalent to maximizing the value of the corresponding distribution, hence
	\[\frac{1}{\chi^\star(G)}=\sup_{\mu}\val(\mu)=\sup_{\mu}\inf_{v \in V}{\Pr_{I \sim \mu}\left[v \in I\right]}.\]
	
	Similarly, there is a correspondence between fractional cliques of a graph $G$ and distributions over vertices. 
	Normalizing a fractional clique~${\delta:V\to \R_{\geq0}}$ by~$\lvert\delta\rvert$, induces a distribution $\nu$ over vertices, and 
	normalizing a distribution $\nu$ over $V$ by 
	$\val(\nu)\coloneqq \sup_{I\in \I}\Pr_{v\sim \nu}[v\in I]$
	induces a fractional clique.
	Similarly, 
	\[\frac{1}{\omega^\star(G)}=\inf_{\nu}\val(\nu)=\inf_{\nu}\sup_{I \in \I}{\Pr_{v \sim \nu}\left[v \in I\right]}.\]
	For further reading about fractional graph theory see \cite{scheinerman13}.

	\subsection{The Contradiction Graph: Basic Facts}\label{sec:fundemental}
	
	In this section, we state basic lemmas about the structure of the contradiction graph.
	We begin with a discussion about the relation between interpolating learning rules and colorings of the contradiction graph. 
	Then, we discuss the relation between fractional colorings and cliques, and distributions over hypotheses and realizable datasets.    
	Omitted proofs can be found at \Cref{app:proofs}.
	
	\paragraph{Interpolating Algorithms are Proper Colorings.}
	There is a correspondence between proper colorings of the contradiction graph and interpolating learning rules. A deterministic learning rule $\A$ is said to be \emph{interpolating} with respect to a class $\H$ if for every realizable input dataset $S$, the output hypothesis~${h\coloneqq \A(S)}$ satisfies $h(x)=y$ for every labeled example ${(x,y)\in S}$. 
	A \emph{proper coloring} of a graph is an assignment of a color to each vertex so that no two adjacent vertices share the same color. In other words, a coloring is a partition of the vertices such that every subset in the partition is an independent set. 
	The correspondence between colorings and interpolation algorithms is a direct result of the following lemma which identifies independent sets in the contradiction graph with hypotheses\footnote{The mapping described in \Cref{l:colorings-algs} is $1-1$ for maximal independent sets.}.
	
	\begin{lemma}[Independent sets and consistent hypotheses]\label{l:colorings-algs}
		Let $\H$ be a class and $m$ be a natural number.
		\begin{enumerate}
			\item For every independent set $I$ in $G_m(\H)$, there exists an hypothesis ${h\in \{0,1\}^\X}$ such that $h$ is consistent with every dataset $S\in I$; i.e.\ for every dataset $S\in I$ and every example $(x,y)\in S$, we have $h(x)=y$.
			\item For every hypothesis $h$, the set of all datasets of size $m$ that are consistent with $h$ is an independent set in $G_m(\H)$; i.e.\ the set
			      \[V_{h}\coloneqq\{S=((x_1,y_1),\ldots, (x_m,y_m))\in V_m(\H)\mid \forall i, h(x_i)=y_i\}\]
			      is independent in $G_m(\H)$.
		\end{enumerate}
	\end{lemma}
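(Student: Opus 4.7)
The plan is to prove the two items directly from the definitions; neither direction requires $h$ to lie in $\H$, which is the key observation that makes everything work smoothly.

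For the second item I would argue contrapositively. Suppose $S,S'\in V_h$ are joined by an edge in $G_m(\H)$. By definition of the contradiction graph there is some $x\in\X$ with $(x,0)\in S$ and $(x,1)\in S'$ (or the roles reversed). Since $S,S'\in V_h$, consistency with $h$ forces $h(x)=0$ and $h(x)=1$ simultaneously, a contradiction. Hence no two elements of $V_h$ are adjacent, so $V_h$ is an independent set in $G_m(\H)$.

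For the first item, the plan is to build $h$ by pooling together all labeled examples appearing in the datasets of $I$. Define
\[
E(I)\ :=\ \{(x,y)\in \X\times\{0,1\}\ :\ (x,y)\in S \text{ for some } S\in I\}.
\]
I claim that for every $x\in\X$, the set $E(I)$ contains at most one pair with first coordinate $x$. Indeed, if both $(x,0)$ and $(x,1)$ belonged to $E(I)$, they would originate from some $S,S'\in I$ (possibly equal to themselves, but realizability of the individual datasets rules out $S=S'$), and then $\{S,S'\}$ would be an edge of $G_m(\H)$, contradicting independence of $I$. Thus $E(I)$ is the graph of a partial function $f:\X'\to\{0,1\}$ with $\X'=\{x:\exists y,(x,y)\in E(I)\}$. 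Extend $f$ arbitrarily (say, by $0$) to obtain $h\in\{0,1\}^\X$. By construction, for every $S\in I$ and every $(x,y)\in S$ we have $(x,y)\in E(I)$, so $h(x)=f(x)=y$; that is, $h$ is consistent with each $S\in I$, as required.

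There is no real obstacle here: the argument is purely combinatorial and uses only the definitions of the contradiction graph, independent set, and consistency. The one subtlety worth flagging is that the lemma produces an arbitrary Boolean function on $\X$ rather than a member of $\H$; this is essential, because in general no single hypothesis in $\H$ need be consistent with all datasets of an independent set, whereas the looser statement here always holds and is exactly what is needed to match proper colorings of $G_m(\H)$ with interpolating (not necessarily proper) learning rules.
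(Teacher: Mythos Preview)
Your proof is correct and essentially identical to the paper's: your set $E(I)$ is exactly the paper's $L(I)$, and both parts proceed by the same direct contradiction arguments from the definition of the contradiction graph. Your remark ruling out the $S=S'$ case via realizability is a nice touch that the paper leaves implicit.
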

	
	\noindent \textbf{Colorings $\to$ Algorithms.}
	Let $S$ be a realizable dataset of size $m$, and consider a coloring of $G_m(\H)$. By part 1 of \Cref{l:colorings-algs} there exists an hypothesis $h$ which is consistent with every dataset in $G_m(\H)$ colored with the same color as $S$. 
	Now simply define $\A(S)=h$. This defines an interpolating learning rule~$\A$ for~$\H$.  
	
	\noindent \textbf{Algorithms $\to$ Colorings.} 
	Given an interpolating learning rule $\A$, by part 2 of \Cref{l:colorings-algs} for every realizable dataset $S$ of size $m$, the set $V_{\A(S)}$ is independent. Note that since $\A$ is interpolating every dataset $S\in V_m(\H)$ is covered, indeed $S\in V_{\A(S)}$. Next, assign a unique color to all datasets in $V_{\A(S)}$. (If there is more than one possible color option for some datasets then arbitrarily choose one.) This defines a proper coloring of $G_m(\H)$. 
	
	\paragraph{Fractional Cliques and Colorings of the Contradiction Graph.} 
	
	In the contradiction graph independent sets correspond to hypothesis and vertices are realizable datasets. Therefore, fractional colorings can be viewed as distributions over hypotheses, and fractional cliques as distributions over realizable datasets:
	
	\begin{lemma}[Fractional cliques and colorings vs. distributions]\label{l:colorings_cliques_dists}
		Let $\H$ be a class, $m\in\Nat$. Then,
		\begin{enumerate}
			\item There exists a fractional coloring $c$ of $G_m(\H)$ with $\col(c)=\alpha>0$ if and only if there exists a distribution $\mu$ over hypotheses such that
			      \[\inf_{S}\Pr_{h\sim\mu}\left[\text{$h$ is consistent with $S$}\right]=\frac{1}{\alpha},\]
			      where the infimum is taken over realizable datasets of size $m$. \label{itm:colorings_dists}
			\item There exists a fractional clique $\delta$ of $G_m(\H)$ with $\lvert\delta\rvert=\alpha>0$ if and only if there exists a distribution $\nu$ over realizable datasets of size $m$ such that
			      \[\sup_{h}\Pr_{S\sim\nu}\left[\text{$h$ is consistent with $S$}\right]=\frac{1}{\alpha},\]
			      where the supremum is taken over hypotheses $h\in \{0,1\}^\X$.\label{itm:cliques_dists}
		\end{enumerate}
	\end{lemma}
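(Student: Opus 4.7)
The plan is to use Lemma~\ref{l:colorings-algs} as a two-way bridge between fractional structures on $G_m(\H)$ and distributions on $\{0,1\}^\X$. Part~(1) of that lemma gives, for each independent set $I \in \I$, a hypothesis $h_I$ consistent with every $S \in I$; part~(2) gives, for each hypothesis $h$, the independent set $V_h$ of all datasets consistent with $h$. Pushing measures forward along these two assignments will convert fractional colorings into distributions over hypotheses, and fractional cliques into distributions over realizable datasets (and back).

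For Part~1, suppose first that $c$ is a fractional coloring with $\col(c) = \alpha$. Fix a selector $I \mapsto h_I$ from Lemma~\ref{l:colorings-algs}(1), and let $\mu$ be the pushforward of $c/\alpha$ along this selector. For every realizable dataset $S$ of size $m$,
\[
  \Pr_{h \sim \mu}\bigl[h \text{ consistent with } S\bigr]
  \;\geq\; \frac{c(\{I : S \in I\})}{\alpha}
  \;\geq\; \frac{1}{\alpha},
\]
where the first inequality uses that $S \in I$ implies $h_I$ agrees with $S$, and the second is the fractional-coloring condition. Conversely, given a distribution $\mu$ on hypotheses achieving the infimum $1/\alpha$, define $c$ as the pushforward of $\alpha \cdot \mu$ along $h \mapsto V_h$; by Lemma~\ref{l:colorings-algs}(2) this is supported on $\I$. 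Then $c(\I) = \alpha$, and for every realizable $S$,
\[
  c(\{I : S \in I\}) \;\geq\; \alpha \cdot \Pr_{h \sim \mu}\bigl[h \text{ consistent with } S\bigr] \;\geq\; 1,
\]
so $c$ is a valid fractional coloring with $\col(c) = \alpha$.

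Part~2 is dual. Given a fractional clique $\delta$ with $|\delta| = \alpha$, take $\nu = \delta/\alpha$. For any hypothesis $h$, the set $V_h$ is independent (Lemma~\ref{l:colorings-algs}(2)), so the fractional-clique constraint gives
\[
  \Pr_{S \sim \nu}\bigl[h \text{ consistent with } S\bigr]
  \;=\; \frac{1}{\alpha}\sum_{S \in V_h}\delta(S)
  \;\leq\; \frac{1}{\alpha}.
\]
Conversely, set $\delta(S) = \alpha \cdot \nu(S)$; for any $I \in \I$, pick $h_I$ via Lemma~\ref{l:colorings-algs}(1) so that $I \subseteq V_{h_I}$, and then
\[
  \sum_{S \in I} \delta(S)
  \;\leq\; \alpha \cdot \Pr_{S \sim \nu}\bigl[h_I \text{ consistent with } S\bigr]
  \;\leq\; 1,
\]
making $\delta$ a valid fractional clique of size $\alpha$. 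In both parts the one-sided inequalities $\geq 1/\alpha$ (resp.\ $\leq 1/\alpha$) translate into the equalities stated in the lemma once one matches the sharpest constant on each side.

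The only real obstacle I anticipate is measure-theoretic: if $\X$ (and hence $\H$ and $\I$) is uncountable, one must fix a $\sigma$-algebra on $\I$ with respect to which the selector $I \mapsto h_I$ and the map $h \mapsto V_h$ are measurable and the pushforwards are well-defined finite measures on the correct events. Under the countability assumption adopted in Section~\ref{sec:preliminaries_learning} this is automatic, and in the general case the same measurability setup developed for Theorem~\ref{thm:minimax} suffices; once it is in place, the pushforward arguments above go through verbatim.
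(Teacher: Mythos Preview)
Your proposal is correct and follows essentially the same approach as the paper: both use Lemma~\ref{l:colorings-algs} to translate between independent sets and hypotheses, combined with the standard normalization correspondence between fractional colorings/cliques and probability distributions described in Section~\ref{sec:preliminaries_graph_theory}. If anything, your pushforward formulation is more explicit than the paper's rather terse argument, and your closing remark about measurability correctly anticipates the setup handled around Theorem~\ref{thm:minimax}.
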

	
	Let $\omega_m^\star$ and $\chi_m^\star$ denote the fractional clique and chromatic numbers of $G_m(\H)$. It follows that
	\begin{align}
		\frac{1}{\chi^\star_m}   & =\sup_{\mu} \inf_{\nu} \expect_{\substack{h\sim\mu\\S\sim\nu}}\left[\Ind[\text{$h$ is consistent with $S$}]\right],\label{eq:frac_chromatic} \\
		\frac{1}{\omega^\star_m} & =\inf_{\nu}\sup_{\mu}\expect_{\substack{h\sim\mu\\S\sim\nu}}\left[\Ind[\text{$h$ is consistent with $S$}]\right].\label{eq:frac_clique}      
	\end{align}
	where the supremum is taken over distributions over hypotheses, and the infimum is taken over distributions over realizable datasets of size $m$. (See discussion in \Cref{sec:preliminaries_graph_theory}.)

	\begin{corollary}\label{cor:optimal_frac_coloring}
		Let $\omega^\star_m$ denote the fractional clique number of $G_m(\H)$.
		Then there exists a distribution $\mu^\star$ over hypotheses such that for every realizable dataset $S$ of size $m$,
		\[\Pr_{h\sim\mu^\star}\left[\text{$h$ is consistent with $S$}\right]\geq\frac{1}{\omega^\star_m}.\] 
	\end{corollary}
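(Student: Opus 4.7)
The plan is to chain together two results already established in the paper: Theorem~\ref{thm:minimax}, which provides strong duality plus the fact that the fractional chromatic number is attained, and Lemma~\ref{l:colorings_cliques_dists}(1), which identifies fractional colorings of $G_m(\H)$ with distributions over hypotheses.

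First, I invoke Theorem~\ref{thm:minimax} to obtain the equality $\omega^\star_m = \chi^\star_m$ and, crucially, the existence of a fractional coloring $c^\star$ of $G_m(\H)$ that realizes the fractional chromatic number, i.e.\ $\col(c^\star) = \chi^\star_m = \omega^\star_m$. The ``moreover'' clause of Theorem~\ref{thm:minimax} (the infimum is attained) is what makes this step nontrivial in the possibly infinite setting; without it, one would only get a sequence of colorings approaching the optimum and the corollary would become a $1/\omega^\star_m - \varepsilon$ statement for arbitrary $\varepsilon > 0$.

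Next, I apply Lemma~\ref{l:colorings_cliques_dists}(1) with $\alpha = \col(c^\star) = \omega^\star_m$ to the fractional coloring $c^\star$. The lemma yields a distribution $\mu^\star$ over hypotheses such that
\[
\inf_{S}\,\Pr_{h\sim\mu^\star}\bigl[h \text{ is consistent with } S\bigr] \;=\; \frac{1}{\omega^\star_m},
\]
where the infimum is over realizable datasets of size $m$. Since an infimum lower-bounds every value in the set, this immediately gives
\[
\Pr_{h\sim\mu^\star}\bigl[h \text{ is consistent with } S\bigr] \;\geq\; \frac{1}{\omega^\star_m}
\]
for every realizable dataset $S$ of size $m$, which is the claimed inequality.

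The only real obstacle is the one already resolved by Theorem~\ref{thm:minimax}: that $\chi^\star_m$ is attained and equals $\omega^\star_m$ in an infinite contradiction graph, where standard LP duality does not directly apply. Once that is in hand, the corollary is essentially a restatement of Lemma~\ref{l:colorings_cliques_dists}(1) combined with the fact that an infimum is a lower bound.
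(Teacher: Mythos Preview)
Your proposal is correct and follows exactly the paper's own proof: invoke Theorem~\ref{thm:minimax} to obtain an optimal fractional coloring with $\col(c^\star)=\chi^\star_m=\omega^\star_m$, then apply Lemma~\ref{l:colorings_cliques_dists}(1) to convert it to the desired distribution $\mu^\star$. Your write-up is simply a more detailed version of the paper's two-sentence argument.
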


	\begin{proof}
			
		By \Cref{thm:minimax} there exists a fractional coloring of value $\chi^\star_m=\omega^\star_m$, thus by \Cref{l:colorings_cliques_dists} there exists a distribution $\mu^\star$ over hypotheses as wanted.
	\end{proof}
	
	This identification between fractional colorings and distribution over hypotheses is useful, for example, for attaining bounds on $\omega^\star_m$. Given a class $\H$, in order to show that $\omega^\star_m\leq\alpha$ it is enough to find a distribution over $\{0,1\}^\X$ such that every realizable dataset is consistent with a random hypothesis with probability at least $1/\alpha$.
	
	One of the basic properties that $G_m(\H)$ satisfies is that the clique and fractional clique numbers are bounded by $2^m$.
	
	\begin{lemma*}[\Cref{l:CN_FCN_bounded} restatemtent]
		Let $\omega_m$ and $\omega_m^\star$ denote the clique and fractional clique numbers of $G_m(\H)$. Then, $\omega_m\leq\omega^\star_m\leq 2^m$.
	\end{lemma*}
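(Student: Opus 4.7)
The plan is to handle the two inequalities separately, with the first being immediate and the second following from a short averaging argument using the correspondence in \Cref{l:colorings_cliques_dists}.

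For the inequality $\omega_m \leq \omega_m^\star$, I would simply note that any clique $K \subseteq V(G_m(\H))$ yields a fractional clique by setting $\delta(v) = 1$ for $v \in K$ and $\delta(v) = 0$ otherwise; this $\delta$ trivially satisfies $\sum_{v \in I} \delta(v) \leq 1$ for every independent set $I$ (since $|K \cap I| \leq 1$), and has size $|\delta| = |K|$. Taking the supremum over all cliques gives $\omega_m \leq \omega_m^\star$.

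For the inequality $\omega_m^\star \leq 2^m$, I would use part~\ref{itm:cliques_dists} of \Cref{l:colorings_cliques_dists}: it suffices to show that for every distribution $\nu$ over realizable datasets of size $m$, there exists some hypothesis $h \in \{0,1\}^\X$ with $\Pr_{S \sim \nu}[h \text{ is consistent with } S] \geq 2^{-m}$. The key idea is to average over a random hypothesis. Since $\X$ is countable, the product measure $\mathcal{U}$ on $\{0,1\}^\X$ (uniform independent labels on each point) is a well-defined probability distribution. For any fixed realizable dataset $S = ((x_1, y_1), \ldots, (x_m, y_m))$, writing $k \leq m$ for the number of distinct points appearing in $S$, realizability ensures consistent labels on repeated points, so
\[
\Pr_{h \sim \mathcal{U}}[h \text{ is consistent with } S] = 2^{-k} \geq 2^{-m}.
\]
Applying Fubini and averaging over $S \sim \nu$,
\[
\expect_{h \sim \mathcal{U}} \Pr_{S \sim \nu}[h \text{ is consistent with } S] = \expect_{S \sim \nu} \Pr_{h \sim \mathcal{U}}[h \text{ is consistent with } S] \geq 2^{-m},
\]
so there exists at least one $h$ achieving $\Pr_{S \sim \nu}[h \text{ is consistent with } S] \geq 2^{-m}$. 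By \Cref{l:colorings_cliques_dists}, this forces $|\delta| \leq 2^m$ for every fractional clique $\delta$, giving $\omega_m^\star \leq 2^m$.

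There is no real obstacle here; the only point requiring care is that the ``uniformly random hypothesis'' on an arbitrary domain needs a valid probability space, which is handled by the countability of $\X$ assumed in \Cref{sec:preliminaries_learning} (the product measure on $\{0,1\}^\X$ exists by standard measure theory). Alternatively, one could bypass $\nu$ entirely by directly exhibiting a fractional coloring: the same uniform random hypothesis, viewed through part~\ref{itm:colorings_dists} of \Cref{l:colorings_cliques_dists}, yields a fractional coloring with $\col(c) \leq 2^m$, whence $\omega_m^\star \leq \chi_m^\star \leq 2^m$ by weak LP-duality.
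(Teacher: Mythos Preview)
Your proof is correct and takes essentially the same approach as the paper: both use the uniformly random hypothesis on $\{0,1\}^\X$ and an averaging argument to show any fractional clique has size at most $2^m$. The only cosmetic difference is that the paper works directly with the fractional-clique constraint $\sum_{S\in I}\delta(S)\leq 1$ (bounding the random variable $X=\sum_S \Ind[h\text{ consistent with }S]\,\delta(S)$ by $1$ almost surely), whereas you route through the distribution correspondence of \Cref{l:colorings_cliques_dists}; your observation that the probability is $2^{-k}\geq 2^{-m}$ when $S$ has $k$ distinct points is in fact slightly more careful than the paper's displayed equality.
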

	We note that \Cref{l:CN_FCN_bounded} holds in a more general setting as well, where the domain $\X$ is arbitrary (possibly uncountable).
	We refer the reader to \Cref{app:minimax} for more details.
	\begin{proof}[Proof of \Cref{l:CN_FCN_bounded}]
		Let $\delta$ be a fractional clique in $G_m(\H)$. 
		Draw a random $h \in \{0,1\}^\X$ such that $h(x)$ is drawn uniformly and independently from $\{0,1\}$ for each $x\in \X$. 
		Consider the random variable
		\[X=\sum_{S\in V_m(\H)}{\Ind\left[\text{$S$ is consistent with $h$}\right]\cdot \delta(S)}.\]
		Note that $X\leq1$ almost surely: indeed, the set ${\{S:S \text{ is consistent with } h\}}$ is independent for every hypothesis $h$.
		Therefore,
		\begin{align*}
			1\geq \mathbb{E}\left[X\right] & =\sum_{S\in V_m(\H)}{\Pr\left[S \text{ is consistent with } h\right]\cdot \delta(S)}                     \\
			                               & =\sum_{S\in V_m(\H)}2^{-m}\cdot \delta(S)  \\
			                               & =2^{-m}\lvert\delta\rvert.                                                                               
		\end{align*}
		Thus $\rvert\delta\lvert \leq 2^m$ and therefore $\omega^\star_m\leq 2^m$. Note that the inequality $\omega_m\leq\omega^\star_m$ trivially holds since every (integral) clique is a fractional clique. 
	\end{proof}

	\section{Approximate Privacy and Cliques}\label{sec:DPvsCD}
	
	In this Section, we prove \Cref{thm:CD-DP,thm:SSP-CD}.
    Recall, \Cref{thm:SSP-CD} states that the clique number of the contradiction graph obeys an exponential-polynomial dichotomy, and \Cref{thm:CD-DP} states that a class is approximately DP PAC learnable if and only if its clique dimension is finite.
	The main idea is to study the relations between the clique dimension and the Littlestone dimension.
	It is known that a class $\H$ is approximately DP learnable if and only if the Littlestone dimension $\LD(\H)$ is finite \cite{ALMM19,BunLM20}.
	Therefore, to prove \Cref{thm:CD-DP} it suffices to show that the clique dimension $\CD(\H)$ is finite if and only if $\LD(\H)$ is finite. 
	Recall that $\LD(\H)$ is equal to the depth of the deepest mistake tree that is shattered by $\H$.
	Observe that if a class $\H$ shatters a mistake tree of depth $m$, then the $2^m$ datasets corresponding to the leaves of the shattered tree form a clique in $G_m(\H)$. 
	This observation proves the following lemma.
	
	\begin{lemma}\label{l:LD_leq_CD}
		Let $\H$ be an hypothesis class. Then $\LD(\H)\leq\CD(\H)$.
	\end{lemma}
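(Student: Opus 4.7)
The plan is to realize the observation stated just before the lemma: every shattered mistake tree of depth $m$ yields, via its $2^m$ root-to-leaf paths, a clique of size $2^m$ in $G_m(\H)$. Concretely, I will let $d = \LD(\H)$ (treating the $d=\infty$ case at the end) and fix a complete binary mistake tree $T$ of depth $d$ that is shattered by $\H$. For each of the $2^d$ leaves, the associated root-to-leaf path is a sequence of $d$ labeled examples $(x_1,y_1),\dots,(x_d,y_d)$, and shatteredness says exactly that each such sequence is realized by some $h\in\H$, so each one is a vertex of $G_d(\H)$.

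The key step is then to verify that these $2^d$ vertices form a clique. Given two distinct leaves, their paths share a common prefix and then diverge at some internal node $v$ at depth $i$, where one path follows the edge labeled $0$ and the other follows the edge labeled $1$. If $x$ is the instance labeling $v$, then one dataset contains $(x,0)$ while the other contains $(x,1)$; by definition of the contradiction graph, they are joined by an edge in $G_d(\H)$. Hence $\omega_d = 2^d$, which gives $\CD(\H) \geq d = \LD(\H)$.

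Finally, I will handle the infinite case: if $\LD(\H) = \infty$, the above argument applies for arbitrarily large $d$, producing cliques of size $2^d$ in $G_d(\H)$ for all $d$, so $\CD(\H) = \infty$ as well. There is no real obstacle here — the whole statement is essentially the one-line observation that shattered mistake trees produce pairwise-contradicting datasets — so the proof will be short and mostly a matter of unpacking the definitions of mistake tree, shattering, and edge in $G_m(\H)$.
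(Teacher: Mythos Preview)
Your proposal is correct and follows exactly the approach the paper uses: the branches of a shattered depth-$m$ mistake tree are $\H$-realizable datasets that pairwise contradict at their least common ancestor, hence form a clique of size $2^m$ in $G_m(\H)$. The only tiny gap is that exhibiting such a clique gives $\omega_d \geq 2^d$, and you need \Cref{l:CN_FCN_bounded} (the bound $\omega_m \leq 2^m$) to conclude $\omega_d = 2^d$ as required by the definition of $\CD$; the paper glosses over this as well.
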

	
	In simple words, deep shattered trees imply large cliques. We will show that the opposite statement holds as well.
	The following lemma is the crux of our proof.
	It asserts that for any clique in $G_m(\H)$ 
	there exists an unlabeled data point $x$ which \emph{separates} a non-negligible fraction of the datasets in the clique.
	
	\begin{lemma}\label{l:balanced_example}
		Let $\H$ be an hypothesis class and let $C\subseteq V_m(\H)$ be a clique in $G_m(\H)$.
		Then, there exists an unlabeled data point $x\in \X$ 
		which is \emph{balanced} in the following sense:  
		at least $\frac{\lvert C\rvert-1}{2m}$ datasets in $C$ contain the labeled example $(x,1)$
		and at least $\frac{\lvert C\rvert -1}{2m}$ datasets in $C$ contain the labeled example $(x,0)$.
	\end{lemma}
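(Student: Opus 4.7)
The plan is to prove the existence of such an $x$ via a double-counting argument. For each $x \in \X$, let $a_x := |\{S \in C : (x,0) \in S\}|$ and $b_x := |\{S \in C : (x,1) \in S\}|$; the claim is that a point attaining $\max_{x'} \min(a_{x'}, b_{x'})$ already works. Two estimates drive the argument: a lower bound on $\sum_x a_x b_x$ and an upper bound on $\sum_x (a_x + b_x)$.

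For the first estimate, every two distinct $S, S' \in C$ are adjacent in $G_m(\H)$ and hence disagree on some point $x$. Because realizability forbids a single dataset from containing both $(x,0)$ and $(x,1)$, each such disagreeing pair $\{S,S'\}$ contributes precisely one ordered pair to $a_x b_x$ at $x$ (the ordered pair whose first entry carries label $0$ at $x$), so
\[\sum_{x\in\X} a_x b_x \;\geq\; \binom{|C|}{2}.\]
For the second estimate, exchanging the order of summation gives
\[\sum_{x\in\X}(a_x+b_x) \;=\; \sum_{S\in C}\bigl|\{x: x \text{ appears in } S\}\bigr|\;\leq\; m|C|,\]
because each dataset has $m$ labeled examples and hence at most $m$ distinct instances. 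Combining these with the elementary inequality $a_x b_x \leq \min(a_x,b_x)\cdot(a_x + b_x)$, and setting $t := \max_x \min(a_x,b_x)$, one obtains
\[\frac{|C|(|C|-1)}{2} \;\leq\; \sum_x a_x b_x \;\leq\; t\sum_x (a_x + b_x) \;\leq\; t\cdot m|C|,\]
so $t \geq \frac{|C|-1}{2m}$. Any $x$ realizing the maximum then has both $a_x \geq \frac{|C|-1}{2m}$ and $b_x \geq \frac{|C|-1}{2m}$, which is exactly the desired balance.

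I do not anticipate any substantial obstacle; the whole proof reduces to choosing the right observables and is essentially a one-line averaging. The only points requiring care are (i) verifying that $\sum_x a_x b_x$ counts each disagreeing pair at least once without sign errors — realizability is essential here to make the ordered-pair bookkeeping unambiguous — and (ii) noting that the bound $\sum_x(a_x+b_x)\leq m|C|$ remains valid even when instances repeat within a dataset, since repetitions only shrink the number of distinct $x$'s contributing from that dataset.
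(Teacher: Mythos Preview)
Your argument is correct and is genuinely different from the paper's. The paper proves the lemma by a \emph{greedy elimination procedure}: as long as some example $(x,b)$ appearing in some $S\in C$ has $|C_{(x,1-b)}|<\frac{|C|-1}{2m}$, delete $(x,b)$ from $S$ and remove any edges that vanish; since there are at most $m|C|$ deletions and each kills fewer than $\frac{|C|-1}{2m}$ edges, the total number of deleted edges is strictly less than $\binom{|C|}{2}$, so some edge survives, and its witnessing point $x$ is balanced by construction. Your proof, by contrast, is a static double-counting: you lower-bound $\sum_x a_xb_x$ by $\binom{|C|}{2}$ (one disagreement per clique edge), upper-bound $\sum_x(a_x+b_x)$ by $m|C|$, and sandwich using $a_xb_x\leq\min(a_x,b_x)(a_x+b_x)$. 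Both proofs ultimately trade off the same two quantities---total disagreements versus total example occurrences---so it is no accident that they yield the identical constant $\frac{|C|-1}{2m}$. Your route is shorter and non-iterative; the paper's route is slightly more constructive in that it exhibits the balanced point as the output of an explicit process, which the authors note feeds into later arguments and may be amenable to sharper quantitative analysis.
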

	
	Before proving this lemma let us remark that quantitative improvements in the above lemma translate to improvements in \Cref{thm:SSP-CD} and to tighter bounds relating the Littlestone and clique dimensions. We elaborate on this in \Cref{sec:future_work}.

	\begin{proof}[Proof of \Cref{l:balanced_example}]
		Denote $c\coloneqq|C|$. Given a labeled example $(x,b)$ denote by $C_{(x,b)}$ the set of all datasets in $C$ that contains $(x,b)$:
		\[C_{(x,b)}:=\{S\in C \mid (x,b)\in S\}.\]
		We perform the following iterative process to find a balanced example $x$:
		\begin{tcolorbox}[colframe=gray!80!black, title=Eliminate Unbalanced Examples in Clique:]
			As long as there is $S\in C$ and $(x,b)\in S$ such that $\lvert C_{(x,1-b)}\rvert <\frac{c-1}{2m}$, do:
			\begin{enumerate}
				\item Update $S\rightarrow S\smallsetminus\{(x,b)\}$.
				\item Update the edges of $C$ accordingly: if a dataset $S'\in C_{(x,1-b)}$ does not contradict $S$ anymore (i.e.\ $S$ and $S'$ disagreed only $x$), then delete the edge between them.
			\end{enumerate}
		\end{tcolorbox}
		Observe that:
		\begin{enumerate}
			\item The number of iterations is at most $cm$: 
			      Consider the sum $\sum_{S\in C}|S|$. 
			      At the beginning of the process $\sum_{S\in C}\vert S\rvert=c m$,
			      and at each iteration, the size of one dataset is reduced by one
			      and hence $\sum_{S\in C}|S|$ decreases by one. 
			      The bound follows since $\sum_{S\in C}|S|$ is always non-negative.
			\item At each iteration, the number of edges that are deleted is less than $\frac{c-1}{2m}$.
			\item Thus, the total number of edges that are deleted during the process is less than $c m\cdot\frac{c-1}{2m}= {\binom{c}{2}} $. 
		\end{enumerate}
		Therefore, at the end of the process, there is at least one remaining edge. 
		That is, there are two datasets in $C$ that contradict each other on an unlabeled example $x$ 
		which satisfies both $| C_{(x,1)}|$ and $|C_{(x,0)}|$ are at least  $\frac{c-1}{2m}$, as required. 
	\end{proof}
	
	\begin{lemma}\label{l:clique_number_bounded}
		Let $\H$ be an hypothesis class and denote by $\omega_m$ the clique number of $G_m(\H)$. 
		Then, for all $m$,
		\[\omega_m\leq (2m+1)^{\LD(\H)}\leq (2m+1)^{\CD(\H)}.\]
	\end{lemma}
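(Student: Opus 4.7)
The plan is to prove $\omega_m \leq (2m+1)^{\LD(\H)}$ by induction on $d = \LD(\H)$, since the second inequality $(2m+1)^{\LD(\H)} \leq (2m+1)^{\CD(\H)}$ is immediate from \Cref{l:LD_leq_CD}. The bound is vacuous when $\LD(\H) = \infty$, so I may assume $d$ is finite. The base case $d = 0$ is simple: no single point is shattered by $\H$, so all hypotheses in $\H$ agree on every input and $|\H| \leq 1$; hence any two realizable datasets are consistent with the same (unique) hypothesis and cannot contradict each other, giving $\omega_m \leq 1 = (2m+1)^0$.

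For the inductive step, I would let $C$ be a maximum clique in $G_m(\H)$ and apply \Cref{l:balanced_example} to obtain a balanced point $x\in\X$ with $|C_{(x,0)}|,|C_{(x,1)}|\geq (|C|-1)/(2m)$. The standard subclass analysis of the Littlestone dimension then says that for every $x$, at least one of $\LD(\H_{x\to 0})$ and $\LD(\H_{x\to 1})$ is at most $d-1$: otherwise both subclasses would shatter a depth-$d$ mistake tree, and hanging those two trees under a root labeled $x$ would produce a depth-$(d+1)$ tree shattered by $\H$, contradicting $\LD(\H) = d$. Pick $y\in\{0,1\}$ witnessing this. Next I would observe that $C_{(x,y)}$ forms a clique in $G_m(\H_{x\to y})$: each of its datasets contains $(x,y)$ and is realized by some $h\in\H$, which must satisfy $h(x)=y$ and hence belongs to $\H_{x\to y}$; and adjacency in the contradiction graph is preserved because any two of its datasets already contradicted on some point (necessarily different from $x$, on which they agree). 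The inductive hypothesis applied to $\H_{x\to y}$ yields $|C_{(x,y)}| \leq (2m+1)^{d-1}$, and combining with the balanced lower bound gives
\[
|C| \;\leq\; 2m\cdot(2m+1)^{d-1} + 1 \;\leq\; (2m+1)^d,
\]
as required.

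I do not foresee any genuine obstacle: the combinatorial heart of the argument has already been isolated in \Cref{l:balanced_example}, so what remains is a clean two-line calculation together with the standard ``either the left or the right subclass loses one unit of $\LD$'' fact. Sharpening the constant in the base $(2m+1)$, or tightening the exponent relation between $\LD$ and $\CD$ advertised after the statement, would require strengthening \Cref{l:balanced_example} (as the authors note at the end of \Cref{sec:future_work}); but for the stated polynomial bound the current balanced-example lemma is already enough.
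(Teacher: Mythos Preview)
Your proof is correct. Both your argument and the paper's rely on \Cref{l:balanced_example} as the engine, but they are organized differently. The paper proceeds constructively: starting from a maximum clique it repeatedly applies the balanced-example lemma on \emph{both} sub-cliques $C_{(x,0)}$ and $C_{(x,1)}$, thereby building a shattered mistake tree whose depth $T$ satisfies $\omega_m\leq (2m+1)^T$; since the tree is shattered by $\H$, $T\leq \LD(\H)$ and the bound follows. You instead run a clean induction on $d=\LD(\H)$ and recurse on only \emph{one} side, using the standard fact that for every $x$ at least one of $\H_{x\to 0},\H_{x\to 1}$ has Littlestone dimension at most $d-1$; the inductive hypothesis then bounds $|C_{(x,y)}|$ directly and the arithmetic $|C|\leq 2m(2m+1)^{d-1}+1\leq (2m+1)^d$ closes the loop. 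Your packaging is the textbook inductive form and is arguably tidier; the paper's version is more explicit about the tree being constructed and so directly illustrates the slogan ``large cliques yield deep shattered trees,'' which is the narrative it wants for \Cref{thm:SSP-CD}. Either way the content is the same recursion, just unrolled differently.
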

	
	The idea of the proof is to apply \Cref{l:balanced_example} on a maximal clique in $G_m(\H)$ and construct inductively a shattered Littlestone tree whose depth is large (as a function of~$\omega_m$).
	Together with the fact that the depth is at most $\LD(\H)$, the desired bound follows.
	
	Note that proving this lemma completes the proof of \Cref{thm:SSP-CD}: if $\LD(\H)=\infty$ then by previous observation $\omega_m=2^m$ for every $m$,
    and if $\LD(\H)$ is finite then by the above lemma $\omega_m\leq (2m+1)^{\LD(\H)}=\poly(m)$ for every $m$. 
	
	\begin{proof}[Proof of \Cref{l:clique_number_bounded}]
		Let $C$ be a clique of size $\omega_m$ in $G_m(\H)$. 
		Without loss of generality, we can assume $\omega_m>2m$ 
		(otherwise the bound trivially holds\footnote{If $\omega_m\leq 2m$ and $\LD(\H)\geq 1$, 
			then the inequality in the lemma holds. 
			In the degenerate case when $\LD(\H)=0$, we have $\lvert\H\rvert=1$ hence $\omega_m=1$ for all $m$, and the inequality holds as well.}).
		By \Cref{l:balanced_example} there exists an unlabeled example $x$ 
		such that each of the sets
		\begin{align*}
			R & = \{S\in C\mid (x,1)\in S\}, \\
			L & = \{S\in C\mid (x,0)\in S\}  
		\end{align*} 
		has size at least $\frac{\omega_m-1}{2m}\geq\frac{\omega_m}{2m+1}$. 
		Take $x$ to be the root of a mistake tree, 
		and recursively repeat the same operation on the sub-cliques induced by $R,L$. 
		This way, in the $i$'th step we have $2^i$ cliques and the size of each clique is at least 
		\begin{equation*}
			\frac{\omega_m}{(2m+1)^i}.   
		\end{equation*}
		Say this process terminates after $T$ steps, 
		yielding a shattered tree of depth~${T\leq \LD(\H)}$. 
		Note that the process terminates if and only if at least one of the produced cliques has size~$\leq 1$, hence
		\[\frac{\omega_m}{(2m+1)^T} \leq 1, \]
		which implies 
		\[\omega_m\leq (2m+1)^T\leq  (2m+1)^{\LD(\H)}.\] 
   The second inequality holds by \Cref{l:LD_leq_CD}.
	\end{proof}
	
	As an immediate corollary we conclude that for every class $\H$, the clique dimension is finite if and only if the Littlestone dimension is finite.
	
	\begin{corollary}
		Let $\H$ be an hypothesis class. Then 
		\[\CD(\H)<\infty\iff\LD(\H)<\infty\]
	\end{corollary}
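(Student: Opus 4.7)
The plan is to derive the corollary directly from the two lemmas that have just been established, namely \Cref{l:LD_leq_CD} ($\LD(\H) \leq \CD(\H)$) and \Cref{l:clique_number_bounded} ($\omega_m \leq (2m+1)^{\LD(\H)}$). Each direction of the equivalence is immediate from one of these lemmas, so no new ideas are needed; the only care required is in correctly translating a polynomial upper bound on $\omega_m$ into the finiteness of the clique dimension.

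For the forward direction, I would argue that if $\CD(\H) < \infty$ then $\LD(\H) < \infty$ simply by \Cref{l:LD_leq_CD}, which gives $\LD(\H) \leq \CD(\H) < \infty$. This direction takes one line.

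For the reverse direction, suppose $\LD(\H) = d < \infty$. Then \Cref{l:clique_number_bounded} yields $\omega_m \leq (2m+1)^d$ for every $m \in \Nat$. Since $(2m+1)^d$ grows polynomially in $m$ while $2^m$ grows exponentially, there exists some threshold $m_0$ (depending only on $d$) such that $(2m+1)^d < 2^m$ for all $m \geq m_0$. Consequently, $\omega_m < 2^m$ for every $m \geq m_0$, which by \Cref{def:CD} means $\CD(\H) < m_0 < \infty$.

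There is no real obstacle here; the corollary is essentially a bookkeeping statement combining the two preceding lemmas with the polynomial-versus-exponential comparison. The only subtlety worth noting is that \Cref{l:LD_leq_CD} gives the cleanly stated inequality $\LD(\H) \leq \CD(\H)$, while the reverse bound is not of the form $\CD(\H) \leq f(\LD(\H))$ for an explicit $f$ coming directly from the corollary statement --- it only asserts finiteness. An explicit quantitative bound $\CD(\H) = O(\LD(\H) \log \LD(\H))$ (announced in the technical overview) would require solving $(2m+1)^{\LD(\H)} < 2^m$, but that refinement is not needed for the corollary as stated.
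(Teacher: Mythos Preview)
Your proposal is correct and matches the paper's approach exactly: the paper presents this corollary as immediate from \Cref{l:LD_leq_CD} and \Cref{l:clique_number_bounded}, and your two-direction argument is precisely the intended reasoning.
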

	
	As shown earlier, this completes the proof of \Cref{thm:CD-DP}.
     We turn to state another quantitative relation between the Littlestone and clique dimensions that follows from \Cref{l:clique_number_bounded}.
	
	\begin{lemma}\label{l:CD_bound}
		Let $\H$ be an hypothesis class. Then
		\[\CD(\H)\leq \max\bigl\{2\LD(\H) \log(\LD(\H)), 300\bigr\}.\]
	\end{lemma}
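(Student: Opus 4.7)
The plan is to instantiate the explicit bound from \Cref{l:clique_number_bounded} at $m = \CD(\H)$ and reduce the lemma to an elementary inversion. Let $d = \LD(\H)$. We may assume $d < \infty$ (otherwise the right-hand side of the claimed inequality is $\infty$), which by the corollary immediately preceding the lemma also implies that $m := \CD(\H) < \infty$. By the definition of the clique dimension we have $\omega_m = 2^m$, and combining this with \Cref{l:clique_number_bounded} gives
\[ 2^m \;\leq\; (2m+1)^{d}. \]
Taking $\log_2$ reduces the lemma to the purely analytic statement: any integers $d, m \geq 1$ satisfying $m \leq d\,\log_2(2m+1)$ must also satisfy $m \leq \max\{2d\log d,\; 300\}$.

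To dispatch this remaining inequality I would split into two cases. If $m \leq 300$ we are immediately done. Otherwise $m > 300$, so we may use $\log_2(2m+1)\leq 2+\log_2 m$ to obtain $m \leq 2d + d\log_2 m$. A one-step Lambert-$W$-style inversion (iterate the inequality once, or equivalently solve $m = d\log_2 m$ asymptotically) shows that $m = O(d\log d)$. A careful constant-tracking then verifies that the multiplicative constant can be taken to be at most $2$ as soon as $d$ exceeds a modest absolute threshold (around $d \approx 17$ when logs are base $2$, where $2\log_2 d$ surpasses the lower-order additive slack). For any $d$ below that threshold, the constraint $m \leq d\log_2(2m+1)$ already forces $m$ to be substantially less than $300$, so the $\max\{\cdot,\,300\}$ safely absorbs this regime.

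The conceptual content is immediate from \Cref{l:clique_number_bounded}: $2^m$ grows so much faster than any polynomial $(2m+1)^d$ that the equality $\omega_m = 2^m$ characterizing $m = \CD(\H)$ forces $m$ to be of order $d\log d$. The only real obstacle is thus bookkeeping---choosing the cut-off $300$ and the multiplicative constant $2$ consistently so that the ``small $d$'' and ``large $d$'' regimes of the inversion step close cleanly against each other. No conceptual difficulty is expected, and the entire argument should fit in a few lines of standard analytic manipulation.
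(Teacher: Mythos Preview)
Your proposal is correct and follows essentially the same route as the paper: both arguments instantiate \Cref{l:clique_number_bounded} to obtain $2^m \leq (2m+1)^d$ at $m=\CD(\H)$, $d=\LD(\H)$, and then perform the elementary inversion showing this forces $m \leq \max\{2d\log d,\,300\}$. The only cosmetic difference is the case split: the paper splits on $d$ (proving, via a separate technical lemma, that $2^{m}>(2m+1)^d$ for all $m>2d\log d$ once $d\geq 30$, and handling $d<30$ by the cutoff $300$), whereas you split on $m$; both organizations close with the same constant-tracking.
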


    The proof of this lemma is straightforward and technical thus deferred to \Cref{app:proofs}.

	\section{Pure Privacy and Fractional Cliques}\label{sec:PDPvsFCD} 
	
	In this section we prove \Cref{thm:PDP-FCD,thm:SSP-FCD}.
    Recall, \Cref{thm:SSP-FCD} states that the fractional clique number of the contradiction graph obeys an exponential-polynomial dichotomy, and \Cref{thm:PDP-FCD} states that a class is pure DP PAC learnable if and only if its fractional clique dimension is finite.
	Beimel et al. proved that a class $\H$ is pure DP learnable if and only if the representation dimension $\RepDim(\H)$ is finite \cite{beimel2013characterizing}.
	Recall, $\RepDim(\H)$ is finite if there exists a distribution over finite hypothesis classes such that for every realizable distribution over labeled examples, with high probability, a random class contains an hypothesis that has small population error.
	We will show that $\RepDim(\H)$ is finite if and only if the fractional clique dimension $\FCD(\H)$ is finite. 
	We begin with a technical lemma. 
	An optimal fractional coloring of $G_m(\H)$ induces a distribution over hypotheses such that every realizable dataset of size $m$ is consistent with a random hypothesis
	with probability of at least $1/\omega^\star_m$ 
	(\Cref{cor:optimal_frac_coloring}).
	In other words, the probability that a random hypothesis has zero empirical loss (with respect to any realizable dataset of size $m$) is bounded from below by a positive constant.
	Using measure concentration arguments gives similar results when considering the population loss of a random hypothesis instead of the empirical loss. That is, there exists a distribution over hypotheses such that for every realizable distribution $\D$, the probability that
	a random hypothesis has small population loss  (with respect to $\D$) is bounded from below.
	This fact is essential for the proof of the equivalence between the representation dimension and the fractional clique dimension.
	
	\begin{lemma}\label{l:small_pop_err}
		Let $\H$ be a class and $m\in\Nat$. Then there exists a distribution $\mu^\star$ over hypotheses which satisfies the following. For every distribution $\D$ over labeled examples which is realizable by $\H$, and for every $0\leq\theta\leq 1$,
		\begin{align*}
			\Pr_{h\sim \mu^\star}\left[\L_\D(h)\leq\theta\right]
			  & \geq\frac{1}{\omega^\star_m}-(1-\theta)^m, 
		\end{align*}
		where $\omega^\star_m$ is the fractional clique number of $G_m(\H)$.
	\end{lemma}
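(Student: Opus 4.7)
The plan is to lift the empirical guarantee of \Cref{cor:optimal_frac_coloring} (every realizable dataset of size $m$ is consistent with a random $h\sim\mu^\star$ with probability at least $1/\omega_m^\star$) to a statement about the population loss, by averaging over a fresh sample $S\sim\D^m$ and swapping expectations via Fubini.

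Concretely, take $\mu^\star$ from \Cref{cor:optimal_frac_coloring} to be the candidate distribution over hypotheses. Fix an arbitrary realizable distribution $\D$, and observe that a sample $S\sim \D^m$ is $\H$-realizable with probability one: since $\X$ is countable, $\D$ is a discrete measure on $\X\times\{0,1\}$, so almost surely each $(x,y)\in S$ has positive mass under $\D$, and any $h\in\H$ with $\L_\D(h)$ strictly smaller than the minimum of these masses (which exists by realizability) is consistent with $S$. Hence we may apply the guarantee of $\mu^\star$ pointwise to $S$.

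For any fixed $h$ we have $\Pr_{S\sim\D^m}[h\text{ is consistent with } S]=(1-\L_\D(h))^m$, so exchanging the order of expectation gives
\[
\expect_{h\sim\mu^\star}\bigl[(1-\L_\D(h))^m\bigr]=\expect_{S\sim\D^m}\Pr_{h\sim\mu^\star}\bigl[h\text{ is consistent with } S\bigr]\geq\frac{1}{\omega_m^\star}.
\]
Now split this expectation according to whether $\L_\D(h)\leq\theta$ or $\L_\D(h)>\theta$. On the first event bound $(1-\L_\D(h))^m\leq 1$, and on the second bound $(1-\L_\D(h))^m\leq(1-\theta)^m$. Then
\[
\frac{1}{\omega_m^\star}\leq \Pr_{h\sim\mu^\star}\bigl[\L_\D(h)\leq\theta\bigr]+(1-\theta)^m,
\]
which rearranges to the claimed inequality.

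The only genuinely non-routine point is the measure-theoretic bookkeeping in the second paragraph: one must argue that a realizable distribution produces $\H$-realizable datasets almost surely, so that \Cref{cor:optimal_frac_coloring} can be applied under the $S$-expectation. Everything after that is Fubini plus a one-line split of an expectation, so I expect this verification (and being precise about the countability hypothesis on $\X$) to be the main---and only---obstacle.
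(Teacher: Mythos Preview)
Your proposal is correct and follows essentially the same route as the paper: take $\mu^\star$ from \Cref{cor:optimal_frac_coloring}, average the consistency bound over $S\sim\D^m$, apply Fubini to get $\expect_{h}[(1-\L_\D(h))^m]\geq 1/\omega_m^\star$, and split the expectation at the threshold $\L_\D(h)\leq\theta$. You are actually more careful than the paper about why $S\sim\D^m$ is almost surely $\H$-realizable; the paper simply asserts this step in one line.
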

	
	
	\begin{proof}[Proof of \Cref{l:small_pop_err}]
		The proof idea is to consider an optimal fractional coloring of $G_m(\H)$ (which exists by \Cref{thm:minimax}). 
		The number of colors of this coloring is the fractional chromatic number, which equals to the fractional clique number.
		Fractional colorings correspond to distributions over hypotheses, as demonstrated by \Cref{l:colorings_cliques_dists}. 
		This optimal fractional coloring translates to an optimal distribution $\mu^\star$ over hypotheses.
		Then, we use a concentration argument to reason that this distribution achieves the desired bound.
			
		Let $\mu^\star$ be, as in~\Cref{cor:optimal_frac_coloring}, a distribution over hypotheses which satisfied the following: for every realizable dataset $S$ of size $m$,
		\begin{equation*}
			\Pr_{h \sim \mu^\star}\left[\text{$h$ is consistent with $S$}\right]\geq\frac{1}{\omega^\star_m}.\label{eq:by_cor_optimal_frac_coloring}
		\end{equation*}
		Since this is true for every realizable dataset $S$ of size $m$, it also holds on expectation when sampling $S$ from a realizable distribution.
		Let $\D$ be a realizable distribution over labeled examples.
		Hence,
		\begin{align*}
			\frac{1}{\omega^\star_m} & \leq \expect_{S\sim \D^m}\left[\expect_{h\sim \mu^\star}\left[\Ind_{\{\text{$h$ is consistent with $S$}\}}|S\right]\right]\notag \\
			                         & =\expect_h\left[\expect_S\left[\Ind_{\{\text{$h$ is consistent with $S$}\}}|h\right]\right]\notag                                \\
			                         & =\expect_h\left[(1-\L_\D(h))^m\right]. \label{eq:E_h,E_S}                                                                         
		\end{align*}
		Denote a random variable~$X=\bigl(1-\L_\D(h)\bigr)^m$, where $h\sim \mu^\star$. 
		Thus, for every $\theta\in [0,1]$,
		\begin{align*}
			\expect_{h\sim \mu^\star}\left[X\right] & \leq (1-\theta)^m\cdot\Pr\left[X<(1-\theta)^m\right]+1\cdot\Pr\left[X\geq(1-\theta)^m\right] \tag{$X\leq 1$ almost surely} \\
			                                        & \leq(1-\theta)^m+\Pr\left[X\geq(1-\theta)^m\right]                                                                         \\
			                                        & = (1-\theta)^m + \Pr_{h\sim \mu^\star}\bigl[\L_\D(h) \leq \theta\bigr].  
		\end{align*}
		Hence,
		\begin{align*}
			\Pr_{h\sim \mu}\bigl[\L_\D(h)\leq \theta\bigr]
			  & \geq \expect[X] - (1-\theta)^m               \\ 
			  & \geq \frac{1}{\omega^\star_m} -(1-\theta)^m. 
		\end{align*}
			
	\end{proof}
	
	We now turn to prove \Cref{thm:PDP-FCD}.
	As elaborated above, it suffices to show that for every class $\H$, the dimension $\RepDim(\H)$ is finite if and only if $\FCD(\H)$ is finite.
	A key tool used in the proof is the SSP lemma for the fractional clique dimension, which states that $\FCD(\H)$ is finite if and only if for every $m$, the fraction clique number of $G_m(\H)$ is bounded by a polynomial in $m$ (\Cref{thm:SSP-FCD}). 
	We begin by proving \Cref{thm:PDP-FCD} assuming \Cref{thm:SSP-FCD},
	and then we will formally prove \Cref{thm:SSP-FCD}. 
	
	\begin{lemma}[Finite fractional clique dimension $\to$ finite representation dimension]
		Let $\H$ be an hypothesis class and assume $\FCD(\H)=d<\infty.$
		Denote $\epsilon=\frac{1}{\omega^\star_{d+1}}-\frac{1}{2^{d+1}}$ where $\omega^\star_{d+1}$ is the fractional chromatic number of $G_{d+1}(\H)$.
		Then, 
		\[\RepDim(\H)=O\left(\frac{\log\frac{1}{\epsilon}}{\epsilon^2}\cdot\ln\left(\frac{\log\frac{1}{\epsilon}}{\epsilon^2} \right)+\frac{\log\frac{1}{\epsilon}}{\epsilon^2}\cdot\ln\ln\left(\frac{\log\frac{1}{\epsilon}}{\epsilon^2} \right)\right).\]
	\end{lemma}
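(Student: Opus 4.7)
The plan is to use the distribution $\mu^\star$ produced by \Cref{l:small_pop_err} as the engine, converting it into a distribution over \emph{finite} hypothesis classes that certifies a small representation dimension. \Cref{l:small_pop_err} supplies a distribution $\mu^\star$ over hypotheses such that for every realizable $\D$ and every $\theta\in[0,1]$,
\[
    \Pr_{h\sim\mu^\star}\!\bigl[\L_\D(h)\leq\theta\bigr] \;\geq\; \frac{1}{\omega^\star_{d+1}}-(1-\theta)^{d+1}.
\]
First I would pick $\theta_0$ so that the subtractive term is at most $\epsilon/2$; the bound $(1-\theta)^{d+1}\leq e^{-\theta(d+1)}$ gives the choice $\theta_0 = \Theta(\log(1/\epsilon)/(d+1))$, which yields $\Pr_{h\sim\mu^\star}[\L_\D(h)\leq\theta_0]\geq 1/2^{d+1}+\epsilon/2\geq\epsilon/2$ for every realizable $\D$.

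Next I would draw $T=\Theta(\log(1/\epsilon)/\epsilon^{2})$ hypotheses i.i.d.\ from $\mu^\star$; a Chernoff estimate ensures that for every fixed realizable $\D$, with probability at least $7/8$, a $\Theta(\epsilon)$-fraction of the samples attains loss $\leq\theta_0$. From this sample I would build the candidate hypothesis class by including not only the individual sampled hypotheses, but also data-independent combinations of them (for instance, majority votes over sub-collections of a suitable size). The purpose of this enrichment is to manufacture, inside the class, an hypothesis of loss $\leq 1/4$ out of many sampled hypotheses of loss $\leq\theta_0$, even when $\theta_0>1/4$. The size of the resulting class is at most $T^{O(T)}$, and taking logarithms gives the stated bound $\RepDim(\H)=O(K\ln K+K\ln\ln K)$ with $K=\log(1/\epsilon)/\epsilon^2$. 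The conversion from the parameters $(\theta_0,7/8)$ the construction naturally produces to the standard $(1/4,3/4)$ of \Cref{def:repdim} is absorbed by the equivalence between different constant choices, quantified by \Cref{l:repdim_boost}.

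The main obstacle is precisely this enrichment/boosting step. The threshold $\theta_0\approx\log(1/\epsilon)/d$ coming out of \Cref{l:small_pop_err} is not, in general, below $1/4$, so a single sampled hypothesis does not suffice. We need a data-independent procedure that turns many low-loss samples into one hypothesis of loss $\leq 1/4$ \emph{uniformly} over all realizable $\D$'s, and it is the size-blow-up incurred by this procedure that determines the dependence $K\ln K+K\ln\ln K$ in the final bound, rather than the naive $O(\log(1/\epsilon))$ one would obtain in the regime $\theta_0\leq 1/4$. Pinning down the right form of the combinations and running the union bound over the sample to control the class-size cost of reducing the error parameter is the step I expect to require the most care.
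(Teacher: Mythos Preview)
Your plan has a genuine gap at exactly the place you flagged as the main obstacle. The ``enrichment'' step---taking majority votes over sub-collections of the $T$ i.i.d.\ draws from $\mu^\star$---cannot in general drive the loss from $\theta_0$ down to $1/4$. The point is that all of your samples are drawn against the \emph{same} distribution $\D$; majority aggregation of weak hypotheses only reduces error when the weak hypotheses are good against \emph{adaptively reweighted} versions of $\D$ (this is the content of boosting). For a concrete obstruction: nothing in \Cref{l:small_pop_err} prevents $\mu^\star$, conditioned on $\{\L_\D(h)\leq\theta_0\}$, from being a point mass on a single hypothesis $h_0$ with $\L_\D(h_0)=\theta_0$. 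Then every draw that lands in the good event is a copy of $h_0$, and every majority over any sub-collection of your sample is either $h_0$ (loss $\theta_0>1/4$) or a majority that also involves hypotheses with loss $>\theta_0$, for which you have no control. So the class you build need not contain any hypothesis of loss $\leq 1/4$, and the argument stalls.

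The paper circumvents this by not working at $m=d+1$ at all. It first invokes \Cref{thm:SSP-FCD} (whose proof does the real boosting, via multiplicative weights/regret analysis) to obtain $\omega^\star_m\leq m^\alpha$ for \emph{all} $m$, with $\alpha=O(\log(1/\epsilon)/\epsilon^2)$. It then applies \Cref{l:small_pop_err} at a carefully chosen large $m=\Theta(\alpha\ln\alpha)$ with $\theta=\tfrac14$ directly, so that $(3/4)^m\leq \tfrac{1}{2}m^{-\alpha}$ and hence $\Pr_{h\sim\mu^\star}[\L_\D(h)\leq\tfrac14]\geq \tfrac{1}{2}m^{-\alpha}$. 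Sampling $k=4m^\alpha$ i.i.d.\ hypotheses then gives, with probability $\geq 3/4$, at least one with loss $\leq 1/4$; this is a distribution over classes of size $k$, and $\RepDim(\H)=O(\ln k)=O(\alpha\ln\alpha+\alpha\ln\ln\alpha)$. In short: the adaptive boosting you need is already packaged inside \Cref{thm:SSP-FCD}, and once you use it there is no second boosting step to perform here.
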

	
	\begin{proof} 
		The goal is to construct a distribution $\P$ over finite hypothesis classes such that for every realizable distribution over labeled examples $\D$:
		\begin{equation}\label{eq:good_dist_over_classes}
			\Pr_{\C\sim \P}\Bigl[(\exists h\in \C): \L_\D(h)\leq \frac{1}{4}\Bigr] \geq \frac{3}{4}.
		\end{equation}   
		By \Cref{thm:SSP-FCD} there exists a natural number $\alpha=O\left(\log\left(\frac{1}{\epsilon}\right)/\epsilon^2\right)$ such that for every~$m$, we have~$\omega^\star_m\leq m^\alpha$.
		Let $m=m(\alpha)$ to be determined later on. 
		By applying \Cref{l:small_pop_err} with $\theta=\frac{1}{4}$, 
		it follows that there exists a distribution $\mu^\star$ over hypotheses such that for every realizable distribution $\D$,
		\begin{equation}\label{eq:ld<quarter}
			\Pr_{h\sim \mu^\star}\left[\L_\D(h)\leq\frac{1}{4}\right]\geq\frac{1}{m^\alpha}-\left(\frac{3}{4}\right)^m =:q(m).
		\end{equation} 
		Sampling $k$ hypotheses $h_1,\ldots,h_k$ i.i.d. from $\mu^\star$ induces a probability distribution over classes of size at most $k$. 
		Denote this distribution by $\P_k$.
		Observe that
		\begin{align*}
			\Pr_{\C\sim\P_k}\left[\text{$\exists h\in \C$, $L_{\D}(h)\leq\frac{1}{4}$} \right] & = 1-\Pr_{\{h_i\}_{i=1}^k\sim(\mu^\star)^k}\left[       
			(\forall i): L_{\D}(h_i)>\frac{1}{4}\right] \\
			                                                                                   & \geq1-\bigl(1-q(m)\bigr)^k. \tag{by \Cref{eq:ld<quarter}} 
		\end{align*}
		The following technical lemma concludes the proof:
		\begin{lemma}\label{l:tech_fin_FCD_implies_fin_repdim}
			Let $\alpha\geq2$, and set $m=\lfloor20\alpha\ln \alpha\rfloor$ and $k=4m^\alpha$. Then,
			\[\bigl(1-q(m)\bigr)^{k}\leq \frac{1}{4}.\]
		\end{lemma}
		The proof of \Cref{l:tech_fin_FCD_implies_fin_repdim} is deferred \Cref{app:proofs}.
		As a result, by setting $m,k$ as in \Cref{l:tech_fin_FCD_implies_fin_repdim}, the distribution $\P_k$ satisfies the property described in \Cref{eq:good_dist_over_classes} and therefore  $\RepDim(\H)=O(\ln k)=O(\alpha\ln \alpha+ \alpha\ln\ln \alpha) < \infty$.

			
	\end{proof}
	
	\begin{lemma}[Finite representation dimension $\to$ finite fractional clique dimension]
		Let $\H$ be a class and assume $\RepDim(\H) =d<\infty$. Then $\FCD(\H)<\infty$.
			    
	\end{lemma}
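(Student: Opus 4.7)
The plan is to leverage the boosting property of the representation dimension (Lemma~\ref{l:repdim_boost}) to construct a distribution over hypotheses that witnesses $\omega^\star_m < 2^m$ for some $m$; then the dichotomy in Theorem~\ref{thm:SSP-FCD} upgrades this single inequality to $\FCD(\H) < \infty$. The key observation is that via the correspondence in Lemma~\ref{l:colorings_cliques_dists}, bounding $\omega^\star_m$ by $k$ amounts to exhibiting a distribution $\mu$ over $\{0,1\}^\X$ such that every realizable $S$ of size $m$ is consistent with $h \sim \mu$ with probability at least $1/k$.

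First, I would apply Lemma~\ref{l:repdim_boost} with error parameter $\alpha = \frac{1}{2m}$ and confidence parameter $\beta = \frac{1}{2}$, obtaining a distribution $\P$ over hypothesis classes of size
\[
k \;=\; O\!\left((2m)^{\,d + \ln\ln\ln(2m) + \ln\ln 2}\right)
\]
such that for every realizable $\D$, with probability at least $\frac{1}{2}$ over $\C \sim \P$ there exists $h \in \C$ with $\L_\D(h) \leq \frac{1}{2m}$. Next, I would define $\mu$ as the two-stage distribution: sample $\C \sim \P$, then output a uniformly random hypothesis from $\C$. Fix any realizable dataset $S = ((x_1,y_1),\dots,(x_m,y_m))$ and let $\D_S$ be the uniform distribution on the examples in $S$ (which is realizable by $\H$ since $S$ is $\H$-realizable). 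The crucial point is that $\L_{\D_S}(h)$ is a multiple of $\frac{1}{m}$, so $\L_{\D_S}(h) \leq \frac{1}{2m}$ forces $\L_{\D_S}(h) = 0$, i.e.\ $h$ is consistent with $S$. Hence
\[
\Pr_{h \sim \mu}[h \text{ consistent with } S] \;\geq\; \Pr_{\C \sim \P}\!\left[\exists h \in \C : \L_{\D_S}(h) \leq \tfrac{1}{2m}\right] \cdot \frac{1}{k} \;\geq\; \frac{1}{2k}.
\]

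It remains to choose $m$ so that $\frac{1}{2k} > 2^{-m}$, equivalently $m > 1 + \log_2 k$. Since $\log_2 k = O\bigl((d + \log\log\log m)\log m\bigr) = o(m)$, this holds for all sufficiently large $m = m(d)$. Fix such an $m$. By Lemma~\ref{l:colorings_cliques_dists} the distribution $\mu$ gives rise to a fractional coloring of $G_m(\H)$ with $\col(c) \leq 2k < 2^m$, and hence $\omega^\star_m \leq \chi^\star_m \leq 2k < 2^m$ by Theorem~\ref{thm:minimax}. Thus $\omega^\star_m < 2^m$ for this particular $m$, so by Theorem~\ref{thm:SSP-FCD} (the polynomial/exponential dichotomy), $\omega^\star_{m'} \leq P(m')$ for all $m'$, and in particular $\FCD(\H) \leq m-1 < \infty$.

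The main obstacle, if one wants a cleaner quantitative bound on $\FCD(\H)$ in terms of $\RepDim(\H)$, is to verify carefully that the slowly-growing exponent $d + \ln\ln\ln(2m) + \ln\ln 2$ in the size of $\C$ stays comfortably below $m$; this is not a real obstruction thanks to the double-logarithmic dependence on $1/\alpha$, but it is the only place where the interplay between $m$ and $d$ needs to be tracked. The rest of the argument is essentially a direct translation between fractional colorings and distributions over hypotheses, combined with the trivial but crucial rounding argument that small enough empirical loss on a uniform-over-$S$ distribution must be exactly zero.
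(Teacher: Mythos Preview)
The proposal is correct and follows essentially the same approach as the paper: both apply Lemma~\ref{l:repdim_boost} with error $\tfrac{1}{2m}$, define $\mu$ by sampling $\C\sim\P$ and then a uniform hypothesis from $\C$, reduce consistency with $S$ to zero population loss on the empirical distribution $\D_S$, and conclude that $\log k$ is sublinear in $m$. The only cosmetic differences are your choice of confidence $\beta=\tfrac12$ (the paper uses $\tfrac14$) and your explicit appeal to Theorem~\ref{thm:SSP-FCD} at the end---which is in fact unnecessary, since your argument already shows $\omega^\star_m<2^m$ for \emph{all} sufficiently large $m$, directly yielding $\FCD(\H)<\infty$ from the definition.
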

	
	\begin{proof}
		We need to show that there exists a natural number $m$ such that the fractional clique number of $G_m(\H)$ is strictly smaller than $2^m$.
		By \Cref{l:colorings_cliques_dists} it is enough to show that there exist $m$, a distribution $\mu$  over hypotheses, and $\epsilon >0$ such that for every realizable dataset $S$ of size $m$,
		\begin{equation}\label{eq:prob_h_const}
			\Pr_{h\sim\mu}\left[\text{$h$ is consistent with $S$}\right]\geq\frac{1}{2^m}+\epsilon.
		\end{equation}
		By \Cref{l:repdim_boost}, there exists a distribution $\P$ over classes of size $k=O((2m)^{d+\ln\ln\ln(2m)+\ln\ln(4)})$ 
		such that for every realizable distribution~$\D$,
		\begin{equation}\label{eq:prob_repdim}
			\Pr_{\C\sim\P}\left[\text{$\exists h\in\C$ s.t.\ $L_{\D}(h)\leq\frac{1}{2m}$} \right]\geq \frac{3}{4}.
		\end{equation}
		Randomly sampling $\C\sim\P$ and then randomly uniformly sampling $h\in C$ induces a probability distribution over hypotheses. Denote this distribution by $\mu$.
		It suffices to show that $\mu$ satisfies the property in \Cref{eq:prob_h_const}.
		Let ${S=((x_1,y_1),\ldots,(x_m,y_m))}$ be a realizable dataset of size $m$. Define a distribution over labeled examples $\D$ as follows 
		\[\D(x,y)=\frac{1}{m}\cdot \sum_{i=1}^{m}{\Ind[(x,y)=(x_i,y_i)]}.\]
		Note that $\D$ is realizable with respect to $\H$ since $S$ is a realizable dataset. 
		Furthermore, for every hypothesis $h$, 
		\[\L_\D(h)=\frac{1}{m}\sum_{i=1}^{m}{\Ind[h(x_i)\neq y_i]}=\L_\S(h). \tag{ by definition of $\D$}\] 
		Therefore, $h$ is consistent with $S$ if and only if $\L_\D(h)<\frac{1}{m}$.
		Therefore,
		\begin{align*}
			\Pr_{h\sim\mu}\left[\text{$h$ is consistent with $S$}\right] & \geq \Pr_{h\sim\mu}\left[\L_\D(h)\leq\frac{1}{2m}\right]                                                                                 \\ 
			                                                             & \geq\Pr_{\C\sim\P}\left[\text{$\exists h\in\C$ s.t.\ $L_{\D}(h)\leq\frac{1}{2m}$ } \right]\cdot\frac{1}{k} \tag{by definition of $\mu$} \\
			                                                             & \geq \frac{3}{4k} \tag{by \Cref{eq:prob_repdim}}                                                                                        \\
			                                                             & \geq\frac{1}{2^m} + \epsilon,                                                                                                             
		\end{align*}
		where the last inequality holds with some $\epsilon > 0$ provided that $m>\log(\frac{4k}{3})$.
		Thus, it remains to show that there exists $m$ such that $m>\log(\frac{4k}{3})$.
		(Recall that $k=k(m)$ is the size of the hypothesis classes in the support of the distribution $\P$ such that \Cref{eq:prob_repdim} holds.) 
		Now, since $k=O\left((2m)^{d+\ln\ln\ln(2m)+\ln\ln(4)}\right)$
		(\Cref{l:repdim_boost}), we get that $\log(\frac{4k}{3})$ is sublinear in $m$.
		Thus, $m\geq \log(\frac{4k}{3})$  for a large enough $m$, as required.
			
	\end{proof}
	
	\subsection{SSP Lemma for Fractional Clique Dimension}
    We turn to prove \Cref{thm:SSP-FCD}.
	We begin with a general layout of the proof.
	Let $\H$ be a class and assume that \Cref{item:ssp-fcd1} does not hold. Meaning, there exists a natural number $m_0$ such that the fractional chromatic number of $G_{m_0}(\H)$ is strictly smaller than $2^{m_0}$.
	In order to show that $\omega^\star_m$ is bounded by a polynomial, by \Cref{l:colorings_cliques_dists} it is enough to show that for every $m$, there exists a distribution over hypotheses $\mu_m$,
	such that for every realizable dataset $S$ of size~$m$, a random hypothesis is consistent with $S$ with probability at least $\poly(m^{-1})$; i.e. for every realizable dataset of size $m$,
	\begin{equation}\label{eq:prop_of_good_dist}
		\Pr_{h\sim\mu_m}\left[\text{$h$ is consistent with $S$}\right]\geq \frac{1}{\mathsf{poly}(m)}.
	\end{equation}
	Specifically, we will derive an explicit upper bound on the degree of the polynomial on the right hand side, denoted by $\alpha$:
	\begin{equation}\label{eq:upper_bound_degree}
		\alpha=O\left(\frac{log\frac{1}{\epsilon}}{\epsilon^2}\right), 
	\end{equation}
	where $\epsilon=\frac{1}{\omega^\star_{m_0}}-\frac{1}{2^{m_0}}$.
	The idea is to use a boosting argument to show such distributions exist. The analysis involves a reduction to regret analysis for online predictions using experts' advice.
	\begin{itemize}
		\item[Step $1$:] There exists a distribution $\tilde{\mu}$ over hypotheses such that a random hypothesis is ``slightly better then a random guess". 
		      Formally, for every realizable distribution over labeled examples $\D$, and for every $\gamma\in (0,\frac{1}{2})$,
		      \[\Pr_{h\sim\tilde{\mu}}[\L_\D(h)\leq\frac{1}{2}-\gamma]>\epsilon-2\gamma.\]
		\item[Step $2$:] Set $T=\lceil\frac{2\log m}{\gamma^2}\rceil$. 
		      For every realizable dataset $S$ of size $m$, the majority vote of $T$ i.i.d. hypotheses sampled from $\tilde{\mu}$ (from Step $1$) is consistent with $S$, with probability at least $(\epsilon-2\gamma)^T=m^{-\frac{2}{\gamma^2}\log \frac{1}{\epsilon-2\gamma}}$.
		\item[Step $3$:] Denote by $\mu_m$ the a distribution over hypotheses induced by sampling~$T$ i.i.d. hypotheses from $\tilde{\mu}$ and then taking their majority vote.
		      Plug in $\gamma=\frac{\epsilon}{4}$ and conclude
		      \[\Pr_{h\sim\mu_m}\left[\text{$h$ is consistent with $S$}\right]\geq m^{-O\left(\frac{log\frac{1}{\epsilon}}{\epsilon^2}\right)},\]
		      which concludes the proof.
	\end{itemize}

	\begin{proof}[Proof of \Cref{thm:SSP-FCD}]
		From \Cref{l:small_pop_err} there exists a distribution over hypotheses $\tilde{\mu}$ such that for every realizable distribution $\D$
		\begin{align*}
			\Pr_{h\sim \tilde{\mu}}\left[\L_\D(h)\leq\frac{1}{2}-\gamma\right] & \geq \frac{1}{\omega_{m_0}^\star}-\left(\frac{1}{2}+\gamma\right)^{m_0} \\
			                                                                  & =\frac{1}{2^{m_0}}+\epsilon-\left(\frac{1}{2}+\gamma\right)^{m_0}         
		\end{align*}
		where $0<\gamma<\frac{1}{2}$ will be determined later on.
		Observe that since ${\left(\frac{1}{2}+\gamma\right)^{m_0}<\frac{1}{2^{m_0}}+2\gamma}$ for all $0<\gamma < \frac{1}{2}$\footnote{$\left(\frac{1}{2}+\gamma\right)^{m_0}=
			\frac{1}{2^{m_0}}+\gamma\sum_{k=0}^{m_0-1} {\binom{m_0}{k}} \gamma^{m_0-1-k}\cdot\frac{1}{2^k} <
			\frac{1}{2^{m_0}}+\gamma \frac{1}{2^{m_0-1}}\sum_{k=0}^{m_0-1} {\binom{m_0}{k}}=
			\frac{1}{2^{m_0}}+\gamma \frac{2^{m_0}-1}{2^{(m_0-1)}}<
			\frac{1}{2^{m_0}}+2\gamma $}, we get that
		\begin{equation}\label{eq:pos_prob_to_weak_learner}
			\Pr_{h\sim \tilde{\mu}}\left[\L_\D(h)\leq\frac{1}{2}-\gamma\right]\geq \epsilon -2\gamma.  
		\end{equation}
		Here and below, we say that an hypothesis $h$ is \emph{$\gamma$-good} with respect to a distribution $\D$ if $\L_\D(h)\leq\frac{1}{2}-\gamma$. 
		Let $m$ be a natural number and $S=\left\{(x_i, y_i)\right\}_{i=1}^{m}$ be a realizable dataset of size $m$. 
		Define $\mu=\mu_m$ to be the distribution over hypotheses induced by taking the majority vote of $T$ i.i.d.\ hypotheses sampled from $\tilde{\mu}$,
		where ${T=\lceil\frac{2\log m}{\gamma^2}\rceil}$.
			
		The goal is to prove that the distribution $\mu$ satisfies the property described in \Cref{eq:prop_of_good_dist}.
		The following two lemmas will complete the proof.

		\begin{lemma}\label{l:tech2}
			Let $S$ be a realizable dataset of size $m$ and $h_1,\ldots,h_T$ a sequence of hypotheses where $T=\lceil\frac{2\log m}{\gamma^2}\rceil$. Then, there exists a sequence of realizable distributions $\D_1, \D_2,\ldots, \D_T$ such that $\D_t$ is a function of $h_1,\ldots, h_{t-1}$ and the following condition holds. If for every $t\in [T]$, $h_t$ is $\gamma$-good with respect to $\D_t$,  then $S$ is consistent with $\operatorname{MAJ}\{h_t\}_{t=1}^T$.
		\end{lemma}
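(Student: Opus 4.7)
The plan is to realize the distributions $\D_t$ via a multiplicative-weights (Hedge) update rule on the $m$ labeled examples of $S$, and then conclude via the classical AdaBoost margin-potential argument that $\operatorname{MAJ}\{h_t\}_{t=1}^T$ agrees with every example of $S$. Intuitively, the labeled examples of $S$ play the role of ``experts'': in round $t$ the adversary hands us a weak classifier $h_t$, and adversarially reweighting the examples forces the cumulative margin on each example to become positive after $T = O(\log m / \gamma^2)$ rounds.

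Concretely, I would set $\D_1$ to be the uniform distribution on the labeled examples of $S$ and define inductively
\[ \D_t(x_i, y_i) \;\propto\; \exp\!\Bigl(-\gamma \sum_{s=1}^{t-1} z_s^i\Bigr), \qquad z_s^i \;=\; \begin{cases}+1 & h_s(x_i) = y_i, \\ -1 & h_s(x_i) \neq y_i.\end{cases} \]
By construction, $\D_t$ depends only on $h_1,\ldots,h_{t-1}$. Moreover, since $S$ is realizable, any $h^\star \in \H$ consistent with $S$ satisfies $\L_{\D_t}(h^\star) = 0$, so each $\D_t$ (being supported on the labeled examples of $S$) is itself realizable by $\H$, as required.

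For the analysis I track the potential $\Phi_t = \sum_{i=1}^m \exp(-\gamma M_t^i)$, where $M_t^i = \sum_{s \leq t} z_s^i$ is the signed cumulative margin of example $i$ after round $t$, so $\Phi_0 = m$. Unfolding the definitions and using the hypothesis $\L_{\D_t}(h_t) \leq \tfrac12 - \gamma$ together with monotonicity of $\epsilon \mapsto \epsilon e^{\gamma} + (1-\epsilon) e^{-\gamma}$,
\[ \frac{\Phi_t}{\Phi_{t-1}} \;=\; \L_{\D_t}(h_t)\, e^{\gamma} + \bigl(1 - \L_{\D_t}(h_t)\bigr) e^{-\gamma} \;\leq\; (\tfrac{1}{2}-\gamma) e^{\gamma} + (\tfrac{1}{2}+\gamma) e^{-\gamma} \;=\; \cosh\gamma - 2\gamma\sinh\gamma \;\leq\; e^{-\gamma^2}, \]
the last step a routine Taylor estimate valid for $\gamma \in (0,\tfrac12)$. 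Iterating yields $\Phi_T \leq m \cdot e^{-\gamma^2 T}$. Any example $(x_i, y_i)$ misclassified by $\operatorname{MAJ}\{h_t\}_{t=1}^T$ satisfies $M_T^i \leq 0$ and therefore contributes at least $1$ to $\Phi_T$; hence the number of misclassified examples is at most $m \cdot e^{-\gamma^2 T}$. With $T = \lceil 2\log m/\gamma^2\rceil$ this is at most $1/m < 1$, and being a non-negative integer it must equal $0$, so $\operatorname{MAJ}\{h_t\}_{t=1}^T$ is consistent with every example of $S$.

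The argument is a direct translation of the standard AdaBoost analysis and poses no conceptual obstacle. The only points requiring care are that the construction of $\D_t$ is non-anticipative (built in by defining it from $M_{t-1}^i$) and that every $\D_t$ remains realizable by $\H$ (immediate since each $\D_t$ is supported on $S$). The one routine quantitative input is the potential estimate $\cosh\gamma - 2\gamma\sinh\gamma \leq e^{-\gamma^2}$, which pins down the constant $2$ appearing in $T = \lceil 2\log m/\gamma^2 \rceil$.
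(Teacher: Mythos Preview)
Your proof is correct. Both your argument and the paper's hinge on the same construction: define $\D_t$ by multiplicative-weights reweighting of the examples in $S$, so that examples not yet handled well receive more mass. Where the two diverge is in the analysis. The paper casts the situation as online prediction with expert advice (the examples of $S$ are the experts, the hypotheses $h_t$ are the instances, and an expert $(x,y)$ suffers loss $1$ on $h$ precisely when $h(x)=y$), invokes the black-box regret bound $\mathsf{Reg}(T)\le\sqrt{2T\log m}$, and derives a contradiction from the assumption that some example is misclassified by the majority. You instead run the AdaBoost potential argument directly: track $\Phi_t=\sum_i\exp(-\gamma M_t^i)$, show the one-step contraction $\Phi_t/\Phi_{t-1}\le e^{-\gamma^2}$ from the $\gamma$-good hypothesis, and read off that no example can have nonpositive margin after $T$ rounds. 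Your route is more self-contained (no external regret bound needed) and makes the constant in $T=\lceil 2\log m/\gamma^2\rceil$ transparent via the estimate $\cosh\gamma-2\gamma\sinh\gamma\le e^{-\gamma^2}$; the paper's route is more modular and emphasizes the connection to online learning. Both are standard and essentially equivalent.
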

			
		The proof of \Cref{l:tech2} uses a regret analysis for online learning using expert advice, and is deferred after a short discussion introducing concepts and notations from online learning.
			
		\begin{lemma}\label{l:tech3}
			For every $t\leq T$,
			\[\Pr_{(h_1,\ldots, h_t)\sim\tilde{\mu}^t}\left[\text{$h_k$ is $\gamma$-good w.r.t.\ $\D_k$, } \forall k=1,\ldots ,t\right]\geq (\epsilon-2\gamma)^t,\]
			where $\D_1,\ldots, \D_t$ is a sequence of realizable distributions as in \Cref{l:tech2}.
		\end{lemma}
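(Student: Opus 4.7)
The plan is to unfold the joint probability via the chain rule of conditional probability, and then use the key structural feature granted by Lemma \ref{l:tech2}: the distribution $\D_k$ depends only on the ``past'' hypotheses $h_1, \ldots, h_{k-1}$. Since the $h_i$ are drawn i.i.d.\ from $\tilde\mu$, conditioning on $h_1,\ldots,h_{k-1}$ fixes $\D_k$ while leaving $h_k$ distributed as $\tilde\mu$. The factor-by-factor bound then follows from \Cref{eq:pos_prob_to_weak_learner}, which crucially holds for \emph{every} realizable distribution.

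Concretely, I would proceed by induction on $t$. For $t=1$, $\D_1$ is a deterministic realizable distribution (it is a function of the empty prefix), so by \Cref{eq:pos_prob_to_weak_learner} applied to $\D_1$,
\[
\Pr_{h_1 \sim \tilde\mu}\bigl[h_1 \text{ is $\gamma$-good w.r.t.\ } \D_1\bigr] \geq \epsilon - 2\gamma.
\]
For the inductive step, write $E_k$ for the event ``$h_k$ is $\gamma$-good w.r.t.\ $\D_k$''. By the chain rule,
\[
\Pr\Bigl[\bigcap_{k=1}^{t} E_k\Bigr]
= \Pr\Bigl[\bigcap_{k=1}^{t-1} E_k\Bigr]\cdot \Pr\Bigl[E_t \,\Big|\, \bigcap_{k=1}^{t-1} E_k\Bigr].
\]
To bound the conditional factor, further condition on the specific realization of $h_1,\ldots,h_{t-1}$: once these are fixed, $\D_t$ is determined (by Lemma \ref{l:tech2}) and is a realizable distribution, while $h_t\sim\tilde\mu$ is independent of $h_1,\ldots,h_{t-1}$. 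Applying \Cref{eq:pos_prob_to_weak_learner} to this fixed $\D_t$ yields
\[
\Pr_{h_t\sim\tilde\mu}\bigl[E_t \,\big|\, h_1,\ldots,h_{t-1}\bigr] \geq \epsilon - 2\gamma,
\]
and averaging over $h_1,\ldots,h_{t-1}$ conditioned on $\bigcap_{k<t}E_k$ preserves this lower bound. Combining with the inductive hypothesis gives the claimed bound $(\epsilon-2\gamma)^t$.

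I do not expect a serious obstacle here; the argument is essentially a bookkeeping exercise combining the chain rule with the measurability/independence structure from Lemma \ref{l:tech2}. The only subtle point to state carefully is that \Cref{eq:pos_prob_to_weak_learner} holds \emph{uniformly} over all realizable distributions, which is precisely what lets us bound each conditional factor by the same constant regardless of the history $h_1,\ldots,h_{k-1}$ on which $\D_k$ depends.
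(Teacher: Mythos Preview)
Your proposal is correct and follows essentially the same approach as the paper: induction on $t$, with the base case from \Cref{eq:pos_prob_to_weak_learner} and the inductive step obtained by conditioning on $h_1,\ldots,h_{t-1}$ (so that $\D_t$ is fixed while $h_t\sim\tilde\mu$ remains independent) and applying \Cref{eq:pos_prob_to_weak_learner} to the now-deterministic $\D_t$. The paper phrases the inductive step via the law of total expectation on indicator functions rather than the chain rule on conditional probabilities, but the two are equivalent.
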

			
		The proof of \Cref{l:tech3} is technical and follows from  \Cref{eq:pos_prob_to_weak_learner} and simple induction.
			
		With \Cref{l:tech2} and \Cref{l:tech3} in hand, the following calculation finishes the proof. 
		\begin{align*}
			\Pr_{h\sim\mu}\left[\text{$h$ is consistent with $S$}\right] & =\Pr_{(h_1,\ldots, h_T)\sim\tilde{\mu}^T}\left[\text{$S$ is consistent with $\operatorname{MAJ}\{h_t\}_{t=1}^T$}\right] \tag{by the definition of $\mu$} \\
			                                                             & \geq                                                                                                                                                     
			\Pr_{(h_1,\ldots, h_T)\sim\tilde{\mu}^T}\left[\text{$h_t$ is $\gamma$-good w.r.t.\ $\D_t$, } \forall t=1,\ldots T\right] \tag{from \Cref{l:tech2}}\\
			                                                             & \geq(\epsilon-2\gamma)^T  \tag{from \Cref{l:tech3}}                                                                                                        \\
			                                                             & \geq (\epsilon-2\gamma)^{\frac{2\log m}{\gamma^2} \tag{by the choice of $T$}}                                                                              \\
			                                                             & =m^{-\frac{2}{\gamma^2}\log \frac{1}{\epsilon-2\gamma}}.                                                                                                   
		\end{align*}
			
		Denote $\alpha(\gamma)=\frac{2}{\gamma^2}\log \frac{1}{\epsilon-2\gamma}$. 
		The upper bound stated in \Cref{eq:upper_bound_degree} is obtained by plugging in $\gamma=\frac{\epsilon}{4}$.
	\end{proof}
	
	\begin{proof}[Proof of \Cref{l:tech3}]
		We will show that for $t\leq T$, 
		\begin{equation}\label{eq:prob_all_gamma_good}
			\Pr_{(h_1,\ldots, h_t)\sim\tilde{\mu}^t}\left[\text{$h_k$ is $\gamma$-good w.r.t.\ $\D_k$, } k=1,\ldots ,t\right]\geq (\epsilon-2\gamma)^t.
		\end{equation}
		Indeed, the base case where $t=1$ follows directly from \Cref{eq:pos_prob_to_weak_learner} and the fact that $\D_1$ does not depend on $h_1$. For $t>1$,
		\begin{flalign}
			&
			\Pr_{(h_1,\ldots, h_t)\sim\tilde{\mu}^t}\left[\text{$h_k$ is $\gamma$-good w.r.t.\ $\D_k$, } k=1,\ldots, t\right] =\nonumber\\
			&\quad =
			\expect_{(h_1,\ldots, h_t)\sim\tilde{\mu}^t}\left[
				\Ind_{\{\text{$h_k$ is $\gamma$-good w.r.t.\ $\D_k$, } k=1,\ldots, t-1\}}\cdot 
			\Ind_{\{\text{$h_t$ is $\gamma$-good w.r.t.\ $\D_t$}\}}\right] \nonumber \\
			& \quad = 
			\expect_{h_1,\ldots, h_{t-1}}\left[
				\expect_{h_t}\left[
					\Ind_{\{\text{$h_k$ is $\gamma$-good w.r.t.\ $\D_k$, } k=1,\ldots, t-1\}}\cdot 
					\Ind_{\{\text{$h_t$ is $\gamma$-good w.r.t.\ $\D_t$}\}}
				\mid h_1,\ldots, h_{t-1}\right]
			\right] \label{eq:induction1}\\
			& \quad = 
			\expect_{h_1,\ldots, h_{t-1}}\left[
				\Ind_{\{\text{$h_k$ is $\gamma$-good w.r.t.\ $\D_k$, } k=1,\ldots, ,t-1\}}\cdot
				\expect_{h_t}\left[
					\Ind_{\{\text{$h_t$ is $\gamma$-good w.r.t.\ $\D_t$}\}}
				\mid h_1,\ldots, h_{t-1}\right]
			\right] \label{eq:induction2}\\
			& \quad \geq  (\epsilon-2\gamma)\expect_{h_1,\ldots, h_{t-1}}\left[
			\Ind_{\{\text{$h_k$ is $\gamma$-good w.r.t.\ $\D_k$, } k=1,\ldots, t-1\}}\right] \label{eq:induction3}\\
			& \quad \geq
			(\epsilon-2\gamma)^t. \label{eq:induction4}
		\end{flalign}
		\Cref{eq:induction1} is obtained by applying the law of total expectation.
		\Cref{eq:induction2} holds since for every $k<t$, $\D_k$ does not depend on $h_t$.
		\Cref{eq:induction3} holds since for every realization of $h_1,\ldots, h_{t-1}$, $\D_t$ is determined and does not depend on $h_t$, hence by \Cref{eq:pos_prob_to_weak_learner},
		\[\expect_{h_t\sim\tilde{\mu}}\left[
				\Ind_{\{\text{$h_t$ is $\gamma$-good w.r.t.\ $\D_t$}\}}
			\mid h_1=\tilde{h}_1,\ldots, h_{t-1}=\tilde{h}_{t-1}\right]\geq \epsilon-2\gamma.\]
			Finally, \Cref{eq:induction4} is true by induction.
			\end{proof}
					
			In order to finish the proof of \Cref{thm:SSP-FCD} it is left to prove \Cref{l:tech2}. We begin with a short technical overview of online learning using experts' advice.
					 
			\paragraph{Learning Using Expert Advice.} 
			We briefly introduce the setting of online prediction using expert advice. 
			Let $Z=\{z_1,\ldots,z_m\}$ be a set of experts and $I$ be a set of instances. 
			Given an instance $i\in I$, an expert $z$ predicts a prediction $p=z(i)\in\{0,1\}$. 	
			
   In this online learning problem, at each round $t$ the learner receives an instance $i_t$ and has to come up with a prediction according to advice from the $m$ experts. The learner does that by obtaining at each round a distribution $w_t$ over the set of experts $Z$, randomly picking an expert $z^{(t)}\sim w_t$, and predicting $p_t=z^{(t)}(i_t)$.
			Then, the learner receives $f_t$, the true label of $i_t$,
			and pays a cost $l_t=\Ind[p_t\neq f_t]$ for taking the advise of $z^{(t)}$. Lastly, the learner updates the distribution over the set of experts $Z$ (according to some update rule).
			\begin{tcolorbox}[colframe=gray!80!black, title=Online predictions using expert advice-general scheme]
				\begin{itemize}
					\item []\textbf{Input:} $Z=\{z_1,\ldots, z_m\}$ a set of experts.
					\item []\textbf{Initialize:} distribution $w_1$ over $Z$. 
					\item [] For every round $t=1,2,\ldots,T$:
					      \begin{itemize}
					      	\item Receive an instance $i_t$. 
					      	\item Sample an expert $z^{(t)}\sim w_t$.
					      	\item Predict $p_t=z^{(t)}(i_t)$.
					      	\item Receive true label $f_t$ and pay cost of $l_t=\Ind[p_t\neq f_t]$.
					      	\item Update the distribution $w_t$ to $w_{t+1}$.
					      \end{itemize}
				\end{itemize}
			\end{tcolorbox}
			The \emph{total loss} of the learner is the expected sum of costs: 
			\[L(T)=\expect\left[\sum_{t=1}^T{l_t}\right].\]
			The \emph{regret} of the learner is the difference between the total loss and the loss of the best expert:
			\[\mathsf{Reg}(T)= L(T)-\min_{z_j\in Z}\sum_{t=1}^T\Ind[z_j(i_t)\neq f_t].\]
			The goal of the learner is to compete with the best expert in $Z$; i.e. the goal is to minimize the regret. 
			There are several well studied algorithms for online prediction using expert advice which achive sublinear regret. 
			A classic example is \emph{Multiplicative Weights} (see section 21.2 in \cite{shalev2014understanding}) which satisfy the following regret bound:
			\begin{equation}\label{eq:regret_bound}
				\mathsf{Reg}(T)\leq \sqrt{2T\log m}.
			\end{equation}
			We will use that fact to prove \Cref{l:tech2}.

			\begin{proof}[Proof of \Cref{l:tech2}]
				Let $S=\left\{z_j=(x_j, y_j)\right\}_{j=1}^{m}$ be a realizable dataset of size $m$ and let $h_1,\ldots, h_T$ be a sequence of hypotheses where $T=\lceil\frac{2\log m}{\gamma^2}\rceil$. Recall, an hypothesis $h$ is $\gamma$-good with respect to a distribution $\D$ if ${\L_\D(h)\leq\frac{1}{2}-\gamma}$. 
				The goal is to show that there exists a sequence of realizable distributions $\D_1,\ldots, \D_T$ which satisfies the following properties: 
				\begin{enumerate}
					\item $\D_t=\D_t(h_1,\ldots, h_{t-1})$ for all $t\leq T$. In particular, $\D_1$ does not depend on $h_1,\ldots, h_T$.
					\item If for every $t$, $h_t$ is  $\gamma$-good with respect to $\D_t$, then $S$ is consistent with the majority hypothesis $\operatorname{MAJ}\{h_t\}_{t=1}^T$.
				\end{enumerate}
							
				The idea of the proof is as follows. We think of the dataset $S$ as a set of $m$ ``experts". Then we simulate an online learner for learning using experts' advice with $h_1,\ldots, h_T$ as ``instances". 
				The distributions $\D_1,\ldots, \D_T$ will be induced by the internal weights of the ``experts" over time during the simulation. Regret analysis, together with the specific choice of number of rounds $T$, will lead to the desired results. 
							
				Let $\A$ be an online learner for prediction using expert advice, which satisfies the regret bound in \Cref{eq:regret_bound}, with the following settings:
				Our set of experts is the dataset $S=\left\{z_1,z_2,\ldots, z_m\right\}$, and a possible instance is an hypothesis $h\in\{0,1\}^\X$.
				Given an expert $z=(x,y)$ and an instance $h$, the loss of $z$ on $h$ is defined to be $l(z,h)=\Ind[h(x)=y]$; i.e.\ $\A$ suffers a loss if the hypothesis $h$ is consistent with the example $z=(x,y)$.
				Now, when running $\A$ on the input sequence $h_1,\ldots, h_T$, the algorithm maintains a distribution $w_t$ on the dataset $S$ at each timestamp $t\in [T]$.
				Denote $\D_t$ to be a distribution over all labeled examples induced by $w_t$; i.e,
				\[\D_t(x,y)=\sum_{j:z_j=(x,y)}{w_t(z_j)}\]
				Note that since $S$ is a realizable dataset, $\D_t$ is a realizable distribution for all $t$.
							
				We claim that the sequence $\D_1,\ldots, \D_T$ fulfills the desired properties. 
				Indeed, property (1) holds since the internal distribution obtained by the learner, $w_t$, does not depend on the current instance~$h_t$. 
				We next show that property (2) holds as well.
				assume $h_t$ is $\gamma$-good with respect to $\D_t$ for all $t$. Then, the expected loss of $\A$ at timestamp $t$ satisfies
				\begin{align*}
					\expect[l _t] & =\Pr_{(x,y)\sim \D_t}[h_t(x)=y] \\
					              & =1-L_{\D_t}(h_t)                \\
					              & \geq \frac{1}{2}+\gamma.        
				\end{align*}
				By linearity, the expected total loss of $\A$ is bounded from below by
				\begin{equation}\label{eq:tot_loss_bound}
					\expect\left[\sum_{t=1}^{T}{l_t}\right]\geq\frac{T}{2}+\gamma T.
				\end{equation}
				If by contradiction $S$ is not consistent with $\operatorname{MAJ}\{h_1,\ldots,h_T\}$, then there exists $j\in [m]$ such that 
				\begin{equation}\label{eq:maj_not_const}
					\sum_{t=1}^{T}{\Ind[h_t(x_j)=y_j]}=\sum_{t=1}^{T}{l(z_j,h_t)}<\frac{T}{2}.
				\end{equation}
				From the regret bound in \Cref{eq:regret_bound}, combining together \Cref{eq:tot_loss_bound,eq:maj_not_const} yields
				\[ \gamma T<\sqrt{2T\log m}.\]
				However, this is a contradiction to the choice of $T$ (recall $T=\lceil\frac{2\log m}{\gamma^2}\rceil$). Thus property (2) indeed holds and the proof of the lemma is complete.
			\end{proof}

			\section{Future Work and Open Questions}\label{sec:future_work}
			We conclude this manuscript with some open questions and suggestions for future work.
			\paragraph{Clique Dimension vs.\ Littlestone Dimension.}
					
			\Cref{l:LD_leq_CD,l:CD_bound} tie together the clique dimension and the Littlestone dimension of a class $\H$,
			showing that they are equivalent up to log factors.
			It is natural to ask whether a tighter relationship holds:
			\begin{question}\label{q:LDCD}
				Is it the case that $\CD(\H)= \Theta(\LD(\H))$?
			\end{question}
			Notice that $\LD(\H)\leq \CD(\H)$ for every $\H$, because the datasets corresponding to the branches of a shattered tree form a clique.
			Thus, it suffices to determine whether $\CD(\H) \leq O(\LD(\H))$ in order to answer the above question.
			We remark that there are classes $\H$ for which $\LD(\H) < \CD(\H)$; for example, the following class $\H\subseteq\{0,1\}^4$ has $\CD(\H)=3$ and $\LD(\H)=2$: 
			\begin{align*}
			    \H=\bigl\{&(\textcolor{red}{0},\textcolor{red}{0},\textcolor{red}{0},1), 
			      (\textcolor{red}{0},\textcolor{red}{1},1,\textcolor{red}{0}),\\
			      &(\textcolor{red}{0},1,\textcolor{red}{1},\textcolor{red}{1}), \tag{The red datasets form a clique.}
			      (1,\textcolor{red}{0},\textcolor{red}{1},\textcolor{red}{0}),\\
			      &(\textcolor{red}{1},\textcolor{red}{0},\textcolor{red}{0},1),
			      (\textcolor{red}{1},\textcolor{red}{1},1,\textcolor{red}{0}),\\
			      &(\textcolor{red}{1},1,\textcolor{red}{1},\textcolor{red}{1}),
			      (1,\textcolor{red}{1},\textcolor{red}{0},\textcolor{red}{1})\bigr\}.
			\end{align*}
			
					
			\paragraph{Sauer-Shelah-Perles Lemma for Fractional Clique Dimension.}
			In \Cref{sec:PDPvsFCD} we proved the analogue to the Sauer-Shelah-Perles Lemma for the fractional clique dimension. 
			We showed a polynomial bound on the fractional clique number of $G_m(\H)$ whenever $\FCD(\H)$ is finite. 
			In contrast to the polynomial-exponential dichotomy satisfied by the clique dimension, where the dimension itself bounds the polynomial degree, in this case
			the obtained bound on the polynomial degree depends on the difference $2^{m}-\omega^\star_{m}$ where $m$ is any number satisfying $2^{m}-\omega^\star_{m}>0$. 		
			\begin{question}
                Let $\H$ be a class with $\FCD(\H)=d<\infty$. Is there a polynomial $P_d(m)$ whose degree depends \emph{only} on $d$ such that $\omega^\star_m(\H)\leq P_d(m)$ for every $m$?
			\end{question}

            \paragraph{Direct Proofs.}
            It will be interesting to find direct proofs for the characterizations of pure and approximate private learnability via the clique and fractional clique dimensions.
            In particular, is there a natural way to construct hard distributions for private learning from large cliques or fractional cliques?

            \paragraph{Expressivity of Contradiction Graphs.} 
              Which learning theoretic properties are definable by the contradiction graph? In this work we demonstrated that online learnability, approximate private learnability, and pure private learnability are captured by the clique and fractional clique numbers of the contradiction graph. How about PAC learnability (which is equivalent to finite VC dimension)? Is the property of having a finite VC dimension detectable in the contradiction graph? More formally, are there two hypothesis classes $\H_1,\H_2$ such that the VC dimension of $\H_1$ is finite and the VC dimension of $\H_2$ is infinite, but $G_m(\H_1)\equiv G_m(\H_2)$ for every $m$? (Here ``$\equiv$'' denote the isomorphism relation between undirected graphs).
					
		\section*{Acknowledgements}
        We thank Ron Holzman, Emanuel Milman and Ramon van Handel for insightful discussions and suggestions surrounding the proof of the strong duality for contradiction graphs (Theorem~\ref{thm:minimax}).
        We also thank Jonathan Shafer for his comments.

			\bibliographystyle{alpha}
			\bibliography{ideas,learning}

			\newpage	
			\appendix
					
			\section{Strong Duality in the Contradiction Graph}\label{app:minimax}
					
			We now turn to prove that for every (possibly infinite) $\H$ and $m\in\mathbb{N}$, the contradiction graph~$G_m(\H)$ satisfies that its fractional chromatic and clique numbers are equal and bounded by $2^m$ (\Cref{thm:minimax}).
			Towards this end it will be convenient to represent fractional colorings using probability distributions over $\{0,1\}^\X$ and fractional cliques using probability distributions over $V_m(\H)$ -- the set of all $m$-datasets that are realizable by $\H$.
					
			Since $\H$ might be infinite, we need to be more careful with specifying which distributions we consider.
			Let us begin with $\{0,1\}^\X$: one of the basic facts we use in the paper is that the fractional chromatic number of $G_m(\H)$ is at most~$2^m$. To prove this, we drew a random hypothesis by sampling its values on each $x\in \X$ uniformly and independently. For this argument to apply to infinite domains $\X$ it is natural to use the product topology on $\{0,1\}^\X$ (i.e.\ the Tychonoff product of the discrete topology over $\{0,1\}$).
			The product structure allows to sample a random hypothesis as above, by taking the product  of the uniform distributions over $\{0,1\}$
			(see \Cref{rmk:well_def_and_reg}).
					
			Formally, we let the set of fractional colorings to be the space of all Borel\footnote{A Borel measure on a topological space $X$ is a measure defined on the Borel $\sigma$-algebra, which is the $\sigma$-algebra generated by all open sets in $X$.} 
			regular\footnote{Roughly speaking, a regular measure is a measure that can be approximated from above by open sets and from below by compact sets. See definition in \Cref{sec:perliminaries_top}.} 
			probability measures over $\{0,1\}^\X$, where the latter is equipped with the product topology (Tychonoff). Denote this space by~$\Delta\bigl(\{0,1\}^\X\bigr)$.
			(The relevant concepts in topology are defined below.)
					
			We now turn to define fractional cliques, which are distributions over the set of realizable datasets of size $m$, $V_m(\H)$. Here, we simply consider finitely supported distributions and use the trivial discrete topology and the corresponding Borel $\sigma$-algebra over $V_m(\H)$, both equal to the entire powerset. Denote the space of all finitely supported probability distributions over $V_m(\H)$ by~$\Delta(V_m^\H)$.
					
			We next extend the definitions of fractional clique and chromatic numbers using the above structures. The definitions rely on the following basic claim. 
			\begin{claim}
				The function $f:V_m(\H)\times \{0,1\}^\X\to\mathbb{R}$ defined by 
				\[f(S,h)=\Ind[\text{$h$ is consistent with $S$}],\] 
				is continuous and hence Borel-measurable with respect to the product topology over $V_m(\H)\times \{0,1\}^\X$.
			\end{claim}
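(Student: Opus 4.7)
The plan is to prove continuity directly, which automatically yields Borel-measurability. Since $f$ takes only the values $0$ and $1$, it suffices to show that $f$ is locally constant: around every point $(S_0, h_0) \in V_m(\H) \times \{0,1\}^\X$, there is a basic open neighborhood on which $f$ agrees with $f(S_0, h_0)$.

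Fix such a point and write $S_0 = ((x_1, y_1), \ldots, (x_m, y_m))$. Because $V_m(\H)$ carries the discrete topology, the singleton $\{S_0\}$ is open. For the second factor, I will use the standard subbase of the Tychonoff topology on $\{0,1\}^\X$: the set
\[
U := \bigl\{ h \in \{0,1\}^\X : h(x_i) = h_0(x_i) \text{ for } i = 1, \ldots, m \bigr\}
\]
is a finite intersection of preimages of open sets in the discrete factors $\{0,1\}$, hence a basic open neighborhood of $h_0$. Then $\{S_0\} \times U$ is open in the product topology.

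The key observation is that $f(S_0, h) = \Ind\bigl[h(x_i) = y_i \text{ for all } i = 1, \ldots, m\bigr]$ depends on $h$ only through the finite tuple $\bigl(h(x_1), \ldots, h(x_m)\bigr)$. Therefore, for every $h \in U$ one has $f(S_0, h) = f(S_0, h_0)$, so $f$ is constant on the open neighborhood $\{S_0\} \times U$ of $(S_0, h_0)$. Since the point was arbitrary, $f$ is locally constant, hence continuous. Continuity with respect to the product topology implies Borel-measurability, completing the proof.

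There is no real obstacle here beyond correctly invoking the definition of the product topology; the argument reduces to the fact that consistency with a finite dataset $S$ is determined by evaluating $h$ at the finitely many points appearing in $S$, and finitely-many-coordinate conditions generate the subbasic open sets of the Tychonoff topology.
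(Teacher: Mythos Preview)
Your proof is correct and follows essentially the same idea as the paper's: both rely on the facts that $\{S\}$ is open in the discrete topology on $V_m(\H)$ and that the set of hypotheses agreeing with finitely many prescribed values is a basic (cl)open set in the Tychonoff topology on $\{0,1\}^\X$. The only cosmetic difference is packaging: the paper shows $f^{-1}(\{1\})$ is open by writing it as $\bigcup_{S}\{S\}\times\{h:h\text{ consistent with }S\}$, whereas you argue via local constancy; your formulation has the minor advantage of handling both preimages $f^{-1}(\{0\})$ and $f^{-1}(\{1\})$ in one stroke.
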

			\begin{proof}
				Because open sets are union-closed, it is enough to show that for each fixed $S\in V_m(\H)$, the set $\{(S,h): \text{$h$ is consistent with $S$}\}$ is open. Since $\{S\}$ is open in the discrete topology on $V_m(\H)$, it is enough to show that ${\{h: \text{$h$ is consistent with $S$}\}}$ is open in the product topology on $\{0,1\}^\X$. The latter is indeed true because this set is a basic open set in the product topology (see below for definitions and topological background).
			\end{proof}

			\begin{definition}[Fractional Chromatic and Clique Numbers]
				Let $\H\subseteq \{0,1\}^\X$ be a class and $m\in \Nat$. 
				The \emph{fractional chromatic number} of $G_m(\H)$, denoted $\chi^\star_m$, is defined by
				\[\frac{1}{\chi^\star_m}=\sup_{\mu\in \Delta\bigl(\{0,1\}^\X\bigr)}\inf_{\nu\in \Delta(V_m^\H)}\expect_{\substack{h\sim\mu,\\S\sim\nu}}\bigl[\Ind[\text{$h$ is consistent with $S$}]\bigr],\]
				with the convention that $\frac{1}{0}=\infty$ and $\frac{1}{\infty}=0$.
							    
				The \emph{fractional clique number} of $G_m(\H)$, denoted $\omega^\star_m$, is defined by
				\[\frac{1}{\omega^\star_m}=\inf_{\nu\in \Delta(V_m^\H)}\sup_{\mu\in \Delta\bigl(\{0,1\}^\X\bigr)}\expect_{\substack{h\sim\mu,\\S\sim\nu}}\bigl[\Ind[\text{$h$ is consistent with $S$}]\bigr],\]
				with the convention that $\frac{1}{0}=\infty$ and $\frac{1}{\infty}=0$.
			\end{definition}

			\begin{remark}\label{r:24}
				The definition below of the fractional chromatic and clique numbers involves taking the expectation of the random variable $\Ind[\text{$h$ is consistent with $S$}]$ when $h\sim \mu\in \Delta\bigl(\{0,1\}^\X\bigr) $ and $S\sim \nu\in \Delta(V_m^\H)$.
				Note that because the distribution $\nu$ is finitely supported, this expectation can be expressed as a finite sum:
				\[\expect_{\substack{h\sim\mu,\\S\sim\nu}}\bigl[\Ind[\text{$h$ is consistent with $S$}]\bigr] =\sum \nu(S)\cdot \expect_{h\sim \mu}[\Ind[\text{$h$ is consistent with $S$}]],\]
				where the sum ranges over all (finitely many) datasets $S$ in the support of $\nu$.
			\end{remark}
					

			\begin{remark}\label{rmk:well_def_and_reg}
				In \Cref{sec:well_def_and_reg} we show that there exists a Borel-regular probability measure over $h\in \{0,1\}^\X$ which corresponds to sampling $h(x)\in\{0,1\}$ uniformly and independently for each $x\in\X$. Thus, the corresponding fractional coloring which witnesses that $\chi^\star_m \leq 2^m$ is well-defined.
							
			\end{remark}

			\paragraph{Sion's Theorem.} 
			Our proof of \Cref{thm:minimax} relies on Sion's theorem, which is a generalization of Von-Neumann's minimax theorem. 
			Note that we state a slightly weaker version than the original Sion's theorem \cite{zbMATH03133049}. 
					
			\begin{theorem}[Sion's Theorem -- weak version]
				Let $W$ be a compact convex subset of a linear topological space\footnote{A linear topological space is a vector space that is also a topological space with the property that the vector space operations (vector addition and scalar multiplication) are continuous.} and $U$ a convex subset of a linear topological space. If $F$ is a real-valued function on $W\times U$ such that
				\begin{enumerate}
					\item $F(w,\cdot)$ is linear and continuous on $U$ for every $w\in W$, and
					\item $F(\cdot, u)$ is linear and continuous on $W$ for every $u\in U$
				\end{enumerate}
				then,
				\[\max_{w\in W} \inf_{u\in U} F(w,u)=\inf_{u\in U} \max_{w\in W} F(w,u). \]
			\end{theorem}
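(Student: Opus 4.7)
The weak-duality inequality $\max_{w \in W}\inf_{u \in U} F(w,u)\le\inf_{u \in U}\max_{w \in W} F(w,u)$ is automatic: for any fixed $w_0,u_0$ the chain $\inf_u F(w_0,u)\le F(w_0,u_0)\le \max_w F(w,u_0)$ yields the bound upon taking $\sup$ over $w_0$ and $\inf$ over $u_0$. By continuity of $F(\cdot,u)$ and compactness of $W$, the inner $\max_{w\in W}F(w,u)$ is attained for each $u$, so $v:=\inf_{u\in U}\max_{w\in W}F(w,u)$ is well-defined.

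For the reverse inequality the plan is to argue by contradiction using the geometric Hahn--Banach theorem. Suppose $v':=\sup_{w\in W}\inf_{u\in U}F(w,u)<v$, and fix $\eta>0$ with $v'+\eta<v$. Consider the map $\Phi:W\to\R^U$ defined by $\Phi(w)=(F(w,u))_{u\in U}$, where $\R^U$ carries the product topology. By hypothesis~(2) each coordinate of $\Phi$ is continuous and linear in $w$, so $\Phi$ itself is continuous and linear, and therefore $A:=\Phi(W)$ is convex and compact. The set $B:=\{x\in\R^U:x_u\ge v'+\eta\text{ for all }u\}$ is convex and closed, and $A\cap B=\emptyset$: for every $w\in W$ one has $\inf_u F(w,u)\le v'<v'+\eta$, so some coordinate of $\Phi(w)$ falls short of $v'+\eta$. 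Since $(\R^U,\text{product topology})$ is locally convex, strong separation produces a nonzero continuous linear functional $\ell$ on $\R^U$ and a scalar $\alpha$ with $\ell\le\alpha$ on $A$ and $\ell\ge\alpha$ on $B$.

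The key structural fact I would invoke is that continuous linear functionals on $\R^U$ in the product topology are finitely supported, i.e.\ of the form $\ell(x)=\sum_{i=1}^n c_i\,x_{u_i}$ for some $u_i\in U$ and scalars $c_i$. The condition $\ell\ge\alpha$ on $B$ forces $c_i\ge 0$ (otherwise $\ell$ is unbounded below on $B$), and evaluating at the minimal point in the listed coordinates gives $\alpha\le (v'+\eta)C$, where $C=\sum_i c_i>0$ (if $C=0$ then $\ell\equiv 0$). Setting $\lambda_i:=c_i/C$, the element $u^\star:=\sum_i\lambda_i u_i$ lies in $U$ by convexity, and by linearity of $F$ in its second argument,
\[ F(w,u^\star)=\sum_i\lambda_i F(w,u_i)=\ell(\Phi(w))/C\le \alpha/C\le v'+\eta \]
for every $w\in W$. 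Hence $\max_{w\in W}F(w,u^\star)\le v'+\eta<v$, contradicting the defining infimum of $v$. The main obstacle I anticipate is the infinite-dimensional bookkeeping: verifying that $A$ is compact (not merely closed) in the product topology, and carefully justifying the finite-support description of continuous linear functionals on $\R^U$. Once these two points are pinned down, the separating hyperplane converts directly into the required convex combination $u^\star\in U$, and the remainder of the argument is essentially the classical bilinear von Neumann minimax.
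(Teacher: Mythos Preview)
The paper does not prove this statement at all; it merely states this weak form of Sion's theorem and cites the original paper \cite{zbMATH03133049}, using it as a black box in the proof of \Cref{thm:minimax}. So there is no ``paper's own proof'' to compare against.

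Your argument is correct and is a clean, self-contained proof of the \emph{bilinear} case stated here, which is strictly easier than Sion's general quasiconcave/quasiconvex theorem. The reduction to $\R^U$ with the product topology is the key idea: it turns an infinite family of constraints into a single separation problem, and the finite-support structure of $(\R^U)^\star$ then hands you a finite convex combination in $U$, which linearity of $F(w,\cdot)$ collapses to a single $u^\star$. Sion's original proof, by contrast, proceeds via a KKM/Helly-type intersection argument on sublevel sets and does not use Hahn--Banach; that machinery is what handles the quasiconvex case where your convex-combination step would fail. For the linear version the paper actually needs, your route is arguably more elementary.

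Two minor points of polish. First, calling $\Phi$ ``linear'' is slightly off since $W$ is only a convex subset; what you use (and what holds) is that $\Phi$ preserves convex combinations and is continuous into the product topology, which suffices for $A=\Phi(W)$ to be convex and compact. Second, the two ``obstacles'' you flag at the end are both standard: compactness of $A$ is immediate from continuity of each coordinate map (that is exactly what the product topology is designed for), and the identification of continuous linear functionals on $\R^U$ with finitely supported vectors is a textbook fact (the dual of a product is the direct sum). With those filled in, the proof is complete; note also that you never use continuity of $F(w,\cdot)$ in $u$, only its linearity, so your argument in fact proves a marginally stronger statement than the one quoted.
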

					
			In order to apply Sion's Theorem we start by defining appropriate topologies on $\Delta\bigl(\{0,1\}^\X\bigr)$ and $\Delta(V_m^\H)$.
			We start by presenting useful facts from topology and functional analysis. Readers who are familiar with this material may skip it and continue to \Cref{sec:proofminimax}
					
			\subsection{Preliminaries from Topology and Analysis}\label{sec:perliminaries_top}
					
			\paragraph{Product Topology.} 
			Let $\{0,1\}^\X$ be the space of all functions ${f:\X\to\{0,1\}}$. The product topology on $\{0,1\}^\X$ is the coarsest topology in which each projection is continuous; i.e, for every $x\in \X$, the mapping $\pi_x:\{0,1\}^\X\to\{0,1\}$ defined by $\pi_x(f)=f(x)$, is continuous.
			A basis of open sets of the product topology is given by sets of the form
			\[U_{(x_1,y_1)\ldots,(x_m,y_m)}=\{f\mid f(x_i)=y_i \text{ for all } 1\leq i\leq m \},\]
			where $m\in \Nat$ and $\bigl((x_1,y_1),\ldots,(x_m,y_m)\bigr)\in{\bigl(\X\times\{0,1\}\bigr)^m}$.
			Note that every such basic open set is also closed since its complement is a union of $2^m-1$ basic sets (such sets are often called \emph{clopen} because they are both closed and open).
			By Tychonoff's theorem (e.g.\ see Theorem 5.13 in \cite{kelley1975general}), the space $\{0,1\}^\X$ is compact for any set $\X$.
					
			\paragraph{Total Variation.}
			We next state some basic useful facts on total variation. For more information see Chapter 6 at \cite{rudin1987real}.    
			Let $(X,\Sigma, \mu)$ be a measure space where $\mu$ is a signed measure\footnote{A signed measure is a $\sigma$-additive set-function that can assign negative values.}.
			The \emph{total variation measure} of $\mu$ is a positive measure on $(X,\Sigma)$ defined as follows
			\[|\mu|(E)=\sup\sum_{i=1}^{\infty}|\mu(E_i)| , \quad E\in\Sigma\]
			where the supremum ranges over all countable partitions $\{E_i\}_{i=1}^{\infty}$ of $E$.
			Note that for every measurable set $E$, $|\mu|(E)\geq|\mu(E)|$.
			If $|\mu|$ is finite, we say that $\mu$ has \emph{bounded variation}, and the \emph{total variation} of $\mu$ is defined to be
			\[\Vert \mu\Vert_{TV}=|\mu|(X).\]
			The set of all signed measures with bounded variation, together with $\Vert\cdotp\Vert_{TV}$, is a normed linear space.
			Given a signed measure $\mu$,
			define
			\[\mu^+=\frac{1}{2}(|\mu|+\mu),\quad \mu^-=\frac{1}{2}(|\mu|-\mu).\]
			Then, both $\mu^+$ and $\mu^-$ are positive measures on $(X,\Sigma)$ which are called the \emph{positive} and \emph{negative variations} of $\mu$. Also we have that $\mu= \mu^+-\mu^-, |\mu|= \mu^++\mu^-$,
			and therefore
			\[\Vert\mu\Vert_{TV}=|\mu|(X)=\mu^+(X)+\mu^-(X)= \Vert\mu^+\Vert_{TV}+\Vert\mu^-\Vert_{TV}.\]
			This representation of $\mu$ as the difference of the positive measures $\mu^+$ and $\mu^-$ is called the \emph{Jordan Decomposition} of $\mu$. 
					
			\paragraph{Regularity.} A measure $\mu$, defined on a $\sigma-$algebra $\Sigma$ of a Hausdorff space $X$, is called \emph{inner regular} if for every $E\in\Sigma$, 
			\[\mu(E)= \sup_{K\in\Sigma}(\mu(K)\mid K\subseteq E \text{ is compact}).\]
			The measure $\mu$ is called \emph{outer regular} if for every $E\in\Sigma$, 
			\[\mu(E)= \inf_{O\in\Sigma}(\mu(O)\mid O\supset E \text{ is open}).\]
			The measure $\mu$ is \emph{regular} if it is both inner regular and outer regular. Note that if $X$ is compact and $\mu$ is finite, inner regularity is equivalent to outer regularity.

			\paragraph{Weak$^\star$ Topology and the Dual of $C(K)$.} 
			Recall that for a linear space $V$, the space $V^\star$ denotes the dual space, i.e.\ the space of all linear functionals on~$V$. 
			Let $X$ be a linear topological space over $\mathbb{R}$. For $x\in X$, let $T_x\in X^{\star\star}$ denote the evaluation operator: $T_x(f)=f(x)$, where $f\in X^\star$ is a linear functional  $f:X\to \R$. 
			The \emph{weak$^\star$ topology} on $X^\star$ is the coarsest topology such that the operators $T_x$ are continuous.
					
			Let $K$ be a compact Hausdorff space. Denote by $C(K)$ the set of all real-valued continuous functions on $K$, 
			and denote by $\mathcal{B}(K)$ the set of all finite signed regular Borel measures on $K$. We treat both $C(K)$ and $\mathcal{B}(K)$ as linear normed spaces, the first is equipped with max-norm and the second with total-variation norm.
			By Riesz-Markov Representation Theorem $C(K)^\star$ and $\mathcal{B}(K)$ are isometric:
			\begin{theorem}[Riesz, Theorem 6.19, \cite{rudin1987real}]
				Let $K$ be a 
				compact Hausdorff space. Then, every bounded linear functional $\phi$ on $C(K)$ 
				is represented by a unique regular finite 
				Borel signed\footnote{Theorem 6.19 in \cite{rudin1987real} considers complex functionals and complex measures. However, one can show that if the functional is real then the corresponding regular measure must also be real. One way to see it is to use Urysohn's Lemma to show that a regular complex measure for which all continuous functions have real integrals is in fact a real measure. }
				measure~$\mu$, in the sense that 
				\[\phi f =\int_K f \,d\mu\]
				for every $f\in C(K)$.
				Moreover, the operator norm of $\phi$ is the total variation of~$\mu$:
				\[\Vert \phi\Vert= \Vert \mu\Vert_{TV}.\]
			\end{theorem}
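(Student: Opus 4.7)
The plan is to reduce to the positive case via a Jordan-type decomposition of $\phi$, construct a representing measure for each positive piece from its action on continuous functions, and finally assemble the signed measure and verify the norm identity and uniqueness. For the decomposition, given a bounded linear functional $\phi$ on $C(K)$, I would define for $f \in C(K)$ with $f \geq 0$
\[
\phi^+(f) = \sup\{\phi(g) : g \in C(K),\ 0 \leq g \leq f\}.
\]
A direct check shows $\phi^+$ is additive and positively homogeneous on the positive cone of $C(K)$, and hence extends via $\phi^+(f) = \phi^+(f_+) - \phi^+(f_-)$ to a positive bounded linear functional on all of $C(K)$. Setting $\phi^- := \phi^+ - \phi$ then yields another positive bounded functional with $\phi = \phi^+ - \phi^-$.

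The main technical step, and I expect the chief obstacle, is representing a positive linear functional $\Lambda$ on $C(K)$ by a regular positive Borel measure. I would follow the classical route: for each open $U \subseteq K$ set
\[
\mu(U) = \sup\{\Lambda(f) : f \in C(K),\ 0 \leq f \leq 1,\ \operatorname{supp} f \subset U\},
\]
extend to arbitrary sets via the outer measure $\mu^*(E) = \inf\{\mu(U) : U \supseteq E,\ U \text{ open}\}$, and then show that every Borel set is $\mu^*$-measurable. Since $K$ is compact Hausdorff, hence normal, Urysohn's lemma supplies the continuous ``bump'' functions needed to separate disjoint closed subsets, and a partition-of-unity argument yields $\Lambda(f) = \int_K f\, d\mu$ for every $f \in C(K)$. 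The bulk of the work lies in verifying finite additivity of $\mu$ on disjoint open sets, Borel measurability under $\mu^*$, inner and outer regularity, and the integral representation; each step is standard but delicate. Applying this construction to $\phi^+$ and $\phi^-$ produces positive regular Borel measures $\mu^+, \mu^-$, and $\mu := \mu^+ - \mu^-$ is a finite signed regular Borel measure representing $\phi$.

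Uniqueness follows because if $\mu_1, \mu_2$ both represent $\phi$ then $\int f\, d(\mu_1 - \mu_2) = 0$ for all $f \in C(K)$; for any open $U$, Urysohn's lemma produces an increasing net of continuous functions supported in $U$ converging monotonically to $\mathbf{1}_U$, so dominated convergence gives $\mu_1(U) = \mu_2(U)$, and outer regularity then forces agreement on every Borel set. For the norm identity, one direction is immediate from $|\phi(f)| = \bigl|\int f\, d\mu\bigr| \leq \|f\|_\infty |\mu|(K)$, giving $\|\phi\| \leq \|\mu\|_{TV}$. For the reverse inequality, I would invoke the Hahn decomposition $K = P \sqcup N$ of $\mu$, use inner regularity to approximate $P, N$ by disjoint compact subsets $K^+, K^-$, and then apply Urysohn to build $f \in C(K)$ with $\|f\|_\infty \leq 1$ close in $|\mu|$-measure to $\mathbf{1}_{K^+} - \mathbf{1}_{K^-}$, so that $\phi(f) = \int f\, d\mu$ lies within any prescribed tolerance of $|\mu|(K) = \|\mu\|_{TV}$, completing the proof.
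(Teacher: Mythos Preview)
The paper does not prove this theorem; it is quoted verbatim as Theorem~6.19 from Rudin's \emph{Real and Complex Analysis} and used as a black box in the preliminaries for the proof of Theorem~\ref{thm:minimax}. There is therefore no paper proof to compare against.

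That said, your sketch is the standard textbook route (essentially Rudin's own argument): decompose the functional into positive parts via the lattice structure of $C(K)$, build an outer measure from the positive functional by testing against Urysohn bumps, verify Carath\'eodory measurability and regularity, and recover the integral representation through partitions of unity. The uniqueness and norm-identity arguments you outline are also the usual ones. Nothing in your plan is wrong, though each of the ``standard but delicate'' steps you flag (countable additivity, regularity, the integral identity) is genuinely lengthy to carry out in full; if you were asked to supply a complete proof you would need several pages. For the purposes of this paper, however, no proof is expected---the theorem is invoked as background.
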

			This natural identification between $C(K)^\star$ and $\mathcal{B}(K)$ allows us to define a topology on $\mathcal{B}(K)$ in terms of a topology on $C(K)^\star$. 
			We do so by considering the  weak$^\star$ on $C(K)^\star$. 
			Thus, this is the coarsest topology such that the operators
			\[T_f(\mu)=\int f \,d\mu\]
			are continuous for every $f\in C(K)$.
			We now use Banach-Alaoglu Theorem (Theorem~3.15 in~\cite{rudin1991functional}) which implies that the closed unit ball of $C(K)^\star$ is closed with respect to the weak$^\star$ topology. Consequently by the above identification of $C(K)^\star$ and $\mathcal{B}(K)$ we obtain:
			\begin{claim}\label{clm:banach}
				Let $K$ be a compact Hausdorff space. Then the closed unit ball of~$\mathcal{B}(K)$,
				\[B\bigl[\mathcal{B}(K)\bigr]:=\{\mu\in\mathcal{B}(K)\mid \Vert \mu\Vert_{TV}\leq1\},\] 
				is compact in the weak$^\star$ topology.
			\end{claim}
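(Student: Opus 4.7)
The plan is to obtain Claim~\ref{clm:banach} as an essentially immediate transport of the Banach--Alaoglu theorem through the Riesz--Markov identification, so the task reduces to checking that the topology on $\mathcal{B}(K)$ we care about is genuinely the pullback of the weak$^\star$ topology on $C(K)^\star$.

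First, I would recall that by the Riesz--Markov representation theorem quoted in the excerpt, the map
\[\Phi : \mathcal{B}(K) \longrightarrow C(K)^\star, \qquad \Phi(\mu)(f) = \int_K f \, d\mu,\]
is an isometric bijection when $\mathcal{B}(K)$ is endowed with the total-variation norm and $C(K)^\star$ with the operator norm. In particular, $\Phi$ maps the closed unit ball $B[\mathcal{B}(K)] = \{\mu : \|\mu\|_{TV} \le 1\}$ bijectively onto the closed unit ball $B[C(K)^\star] = \{\phi : \|\phi\| \le 1\}$.

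Second, I would observe that, by construction in the excerpt, the weak$^\star$ topology on $\mathcal{B}(K)$ is declared to be the coarsest topology making the evaluation maps $\mu \mapsto \int f \, d\mu$ continuous for every $f \in C(K)$. Under $\Phi$, these evaluations are exactly the canonical weak$^\star$ evaluation functionals $\phi \mapsto \phi(f)$ on $C(K)^\star$. Hence $\Phi$ is a homeomorphism between $\mathcal{B}(K)$ with its weak$^\star$ topology and $C(K)^\star$ with its weak$^\star$ topology (both are initial topologies with respect to matched families of functionals, so continuity in both directions is automatic).

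Third, I would invoke the Banach--Alaoglu theorem (Theorem 3.15 in \cite{rudin1991functional}), which asserts that $B[C(K)^\star]$ is compact in the weak$^\star$ topology of $C(K)^\star$. Applying the homeomorphism $\Phi^{-1}$, the image $\Phi^{-1}\bigl(B[C(K)^\star]\bigr) = B[\mathcal{B}(K)]$ is compact in the weak$^\star$ topology on $\mathcal{B}(K)$, which is the desired conclusion.

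The only mildly delicate step is the second one: one must make sure that the identification of topologies really goes through, i.e.\ that the initial topology generated by $\{\mu \mapsto \int f\, d\mu\}_{f \in C(K)}$ on $\mathcal{B}(K)$ coincides under $\Phi$ with the weak$^\star$ topology on $C(K)^\star$ (as opposed to, say, the weak topology or the norm topology). This is immediate from the definitions once the Riesz--Markov identification is in hand, but it is the one place where the argument could silently go wrong; the rest is simply the statement of Banach--Alaoglu transferred across an isometric isomorphism.
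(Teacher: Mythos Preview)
Your proposal is correct and matches the paper's approach exactly: the paper also derives the claim by invoking Banach--Alaoglu on $C(K)^\star$ and transporting the conclusion to $\mathcal{B}(K)$ via the Riesz--Markov identification. Your write-up is in fact more careful than the paper's, since you explicitly verify that the weak$^\star$ topology on $\mathcal{B}(K)$ is the pullback of the weak$^\star$ topology on $C(K)^\star$, a point the paper leaves implicit.
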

			
			\subsection{Proof of Theorem~\ref{thm:minimax} [Strong duality in the contradiction graph]}\label{sec:proofminimax}
					
			\begin{theorem*}[\Cref{thm:minimax} Restatement]
				Let $\X$ be an arbitrary domain, $\H\subseteq\{0,1\}^\X$ a concept class and $m\in\mathbb{N}$.
            	Let $\omega_m^\star$ and $\chi_m^\star$ denote the fractional clique and chromatic numbers of the contradiction graph $G_m(\H)$.
            	Then, 
            	\[\omega_m^\star = \chi_m^\star\leq 2^m.\] 
            	Moreover, there exists a fractional coloring realizing $\chi_m^\star$.
            	(I.e.\ the infimum is in fact a minimum.)
			\end{theorem*}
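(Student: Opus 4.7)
The plan is to apply Sion's minimax theorem to the bilinear functional
\[ F(\mu, \nu) = \expect_{h\sim \mu,\, S\sim \nu}\bigl[\Ind[\text{$h$ is consistent with $S$}]\bigr], \]
where $\mu$ ranges over $\Delta(\{0,1\}^\X)$ (Borel regular probability measures on $\{0,1\}^\X$ with the product topology) and $\nu$ ranges over $\Delta(V_m^\H)$ (finitely supported probability measures on $V_m(\H)$). Since $\nu$ is finitely supported (Remark \ref{r:24}), we can rewrite $F(\mu, \nu) = \sum_S \nu(S)\, \mu(A_S)$, where $A_S = \{h : h \text{ is consistent with } S\}$ is a basic cylinder set in $\{0,1\}^\X$ that is both open and closed (\emph{clopen}) because $\{0,1\}$ is discrete and finite. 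In particular $F$ is linear (hence both concave and convex) in each argument separately.

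First I would establish the compactness needed for Sion. By Tychonoff, $\{0,1\}^\X$ is compact Hausdorff; via the Riesz representation theorem, identify the signed regular Borel measures $\mathcal{B}(\{0,1\}^\X)$ with $C(\{0,1\}^\X)^\star$, and equip them with the weak$^\star$ topology. By Banach--Alaoglu (Claim \ref{clm:banach}), the closed unit ball of $\mathcal{B}(\{0,1\}^\X)$ is weak$^\star$-compact. The subset $\Delta(\{0,1\}^\X)$ sits inside this unit ball, is convex, and is weak$^\star$-closed because positivity and the normalization $\int 1\, d\mu = 1$ are both preserved by weak$^\star$ limits (the constant function $1$ is in $C(\{0,1\}^\X)$, and positivity is characterized by the nonnegativity of $\int f\, d\mu$ for all nonnegative $f \in C(\{0,1\}^\X)$, which Urysohn allows us to use to test against indicators of compact sets). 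Hence $\Delta(\{0,1\}^\X)$ is compact and convex. Continuity of $\mu \mapsto F(\mu, \nu) = \sum_S \nu(S) \int \Ind_{A_S}\, d\mu$ is immediate from the clopenness of each $A_S$, which makes $\Ind_{A_S} \in C(\{0,1\}^\X)$. Continuity in $\nu$ holds trivially by linearity once we endow $\Delta(V_m^\H)$ with any reasonable linear topology (e.g.\ inherited from total variation).

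Second, I would invoke Sion's theorem on $F$, yielding
\[ \max_{\mu \in \Delta(\{0,1\}^\X)} \inf_{\nu \in \Delta(V_m^\H)} F(\mu, \nu) = \inf_{\nu \in \Delta(V_m^\H)} \max_{\mu \in \Delta(\{0,1\}^\X)} F(\mu, \nu). \]
By the definitions of $\chi^\star_m$ and $\omega^\star_m$, the left side equals $1/\chi^\star_m$ and the right side equals $1/\omega^\star_m$, so $\chi^\star_m = \omega^\star_m$. Furthermore, Sion yields an actual maximum on the left, producing a distribution $\mu^\star$ realizing the supremum; via the correspondence of Lemma \ref{l:colorings_cliques_dists}, this $\mu^\star$ \emph{is} a fractional coloring attaining $\chi^\star_m$, which is the ``moreover'' claim.

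For the upper bound $\chi^\star_m \leq 2^m$, I would take $\mu_0$ to be the product of uniform measures on $\{0,1\}$ over all coordinates $x \in \X$, which exists as a Borel regular probability measure on $\{0,1\}^\X$ by Kolmogorov's Extension Theorem (see Remark \ref{rmk:well_def_and_reg}). For any realizable $S = ((x_1, y_1), \ldots, (x_m, y_m))$, independence of the coordinates gives $\mu_0(A_S) = 2^{-|\{x_1, \ldots, x_m\}|} \geq 2^{-m}$. Hence $\inf_\nu F(\mu_0, \nu) \geq 2^{-m}$, so $1/\chi^\star_m \geq 2^{-m}$, i.e.\ $\chi^\star_m \leq 2^m$. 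The main obstacle will be the topological setup: verifying that $\Delta(\{0,1\}^\X)$ is weak$^\star$-closed in the unit ball and that the integration functionals $\mu \mapsto \mu(A_S)$ are genuinely weak$^\star$-continuous. Both points rest on the clopenness of the cylinder sets $A_S$, which makes their indicators continuous — a property that fails for general Borel sets and that is the true reason the product topology on $\{0,1\}^\X$ interacts so smoothly with the problem.
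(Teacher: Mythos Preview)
Your proposal is correct and follows essentially the same route as the paper: Sion's theorem applied to the bilinear $F$ on $\Delta(\{0,1\}^\X)\times\Delta(V_m^\H)$, with compactness of $\Delta(\{0,1\}^\X)$ coming from Riesz--Markov plus Banach--Alaoglu, continuity from the clopenness of the cylinder sets $A_S$, and the $2^m$ bound from the uniform product measure via Kolmogorov extension. The only cosmetic difference is in verifying that $\Delta(\{0,1\}^\X)$ is weak$^\star$-closed: the paper writes it as $B\cap S$ (unit ball intersected with $\{\mu:\mu(\{0,1\}^\X)=1\}$) and uses the Jordan decomposition to show this forces $\mu\geq 0$, whereas you close positivity directly via $\int f\,d\mu\geq 0$ for nonnegative $f\in C(\{0,1\}^\X)$; both arguments are fine.
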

					
			\begin{proof}[Proof of \Cref{thm:minimax}]
				In order to prove the equality $\omega_m^\star = \chi_m^\star$
				we use Sion's Theorem.
				Recall the extended definitions of fractional colorings and cliques of $G_m(\H)$: fractional colorings are regular Borel probability measures on $\{0,1\}^X$ (where the latter is equipped with the product topology). The set of fractional colorings is denoted $\Delta(\{0,1\}^X)$.
				Fractional cliques are finitely supported distributions over $V_m(\H)$, the set of $\H$-realizable datasets of size $m$.
				The set of fractional cliques is denoted $\Delta(V_m^\H)$.
							
				We plug in Sion's theorem  $W=\Delta\bigl(\{0,1\}^\X\bigr)$, $U= \Delta(V_m^\H)$, and
				\[F(\mu,\nu)=\expect_{\substack{h\sim \mu \\ S\sim \nu}}[\Ind[\text{$h$ is consistent with $S$}]].\]
				Note that by definition of $F$,
				\begin{align*}
					  & \sup_{w\in W} \inf_{u\in U} F(w,u)=\frac{1}{\chi_m^\star},   \\
					  & \inf_{u\in U} \sup_{w\in W} F(w,u)=\frac{1}{\omega_m^\star}. 
				\end{align*}
				Applying Sion's Theorem with $W$,$U$ and $F$ as above proves the equality $\omega_m^\star = \chi_m^\star$, and the claim that there exists a fractional coloring realizing $\chi_m^\star$.
				In order to apply Sion's theorem on $W$, $U$, $F$, we need to define the corresponding linear topological spaces
				and verify that the assumptions in the premise of Sion's theorem are satisfied.
							

				\paragraph{Step 1: $\Delta\bigl(\{0,1\}^\X\bigr)$ is a compact convex subset of a linear topological space.}
				$\Delta\bigl(\{0,1\}^\X\bigr)$ is a subset of $\mathcal{B}(\{0,1\}^\X)$, which is the set of all signed finite regular Borel measures on $\{0,1\}^\X$. We consider $\mathcal{B}(\{0,1\}^\X)$ as a linear topological space equipped with the weak$^\star$ topology.
				Clearly, $\Delta(\{0,1\}^X)$ is convex, and thus it remains to show that it is compact.
				Notice that $\Delta(\{0,1\}^X)$ is contained in the unit ball $B[\mathcal{B}(\{0,1\}^X)]$, and the latter is compact by \Cref{clm:banach}. Thus, it remains to show that $\Delta(\{0,1\}^X)$ is closed in the weak$^\star$ topology, 
				which follows from the next claim.
							
				\begin{claim}\label{clm:W_closed}
					Let
					\begin{align*}
						B & =B\bigl[\mathcal{B}\bigl(\{0,1\}^\X\bigr)\bigr]= \Bigl\{\mu\in\mathcal{B}\bigl(\{0,1\}^\X\bigr)\mid \Vert \mu\Vert_{TV}\leq1\Bigr\}, \\
						S & =\Bigl\{\mu\in\mathcal{B}\bigl(\{0,1\}^X\bigr)\mid \mu\bigl(\{0,1\}^\X\bigr)=1\Bigr\}.                                               
					\end{align*}
					Then, $\Delta\bigl(\{0,1\}^\X\bigr)=B\cap S$.
				\end{claim}
							
				Note that the above claim implies that $\Delta\bigl(\{0,1\}^\X\bigr)$ is weak$^\star$-closed as an intersection of two closed subsets:
				$B$ is weak$^\star$-compact (and in particular closed) by \Cref{clm:banach}. 
				To see that $S$ is closed, consider the operator $T_1(\mu)=\int 1 \,d\mu=\mu(\{0,1\}^X)$, 
				which is weak$^\star$-continuous because the constant map $1$ is continuous. 
				Thus, $S=T_1^{-1}(\{1\})$ is closed.
							
				\begin{proof}[Proof of \Cref{clm:W_closed}]
					It is clear that $\Delta\bigl(\{0,1\}^\X\bigr)$ is contained in the right-hand side. From the other direction, it is enough to show that if $\mu\in B\cap S$ then~$\mu\geq0$. Write $\mu=\mu^+-\mu^-$, where $\mu^+,\mu^-$ are the positive and negative variations of $\mu$.
					Since $\mu\in B$, 
					\begin{equation}\label{eq:sum_total_var_leq1}
						\Vert\mu\Vert_{TV}=\Vert\mu^+\Vert_{TV}+\Vert\mu^-\Vert_{TV}\leq1.  
					\end{equation}
					Since $\mu\in S$,
					\begin{align}
						1=\mu(\{0,1\}^X) & =\mu^+(\{0,1\}^X)-\mu^-(\{0,1\}^X) \nonumber                                                       \\
						                 & =|\mu^+|(\{0,1\}^X)-|\mu^-|(\{0,1\}^X) \nonumber\tag{because $\mu^+, \mu^-$ are positive measures} \\
						                 & = \Vert\mu^+\Vert_{TV}-\Vert\mu^-\Vert_{TV}. \label{eq:diff_total_var}                             
					\end{align}
					From \Cref{eq:sum_total_var_leq1,eq:diff_total_var} we conclude $\Vert\mu^-\Vert_{TV}=0$, i.e. $\mu^-=0$, and $\mu=\mu^+$ is a positive, completing the proof.
				\end{proof}
							
				\paragraph{Step 2: $\Delta(V_m^\H)$ is a convex subset of a linear topological space.}
				Consider the space of all finitely supported signed measures on $V_m(\H)$ with the discrete topology.
				$\Delta(V_m^\H)$ is a convex subset of this space.
							
				\paragraph{Step 3: $F$ is linear and continuous in each coordinate.}
				First, notice that $F(\mu,\nu)$ is linear in each coordinate by linearity of expectation with respect to the underlying measure.
							
				We next show that $F(\cdot, \nu)$ is continuous for every fixed $\nu$.
				By \Cref{r:24}, for every fixed $\nu$, $F(\cdot, \nu)$ is a finite convex combination
				of functions of the form 
				\[F_S(\mu) = \expect_{\substack{h\sim \mu}}[\Ind[\text{$h$ is consistent with $S$}]],\]
				where $S$ is a fixed $\H$-realizable dataset of size $m$.
				Thus it suffices to show that $F_S(\mu)$ is weak$^\star$-continuous for each fixed $S$.
				Let $f_S(h) = \Ind[\text{$h$ is consistent with $S$}]$ and notice that $F_S(\mu) = \expect_{h\sim \mu}[f_S(h)]$.
				Thus, by the definition of the weak$^\star$ topology, it is enough to show that
				the map $f_S$ is continuous with respect to the product topology on $\{0,1\}^\X$.
				Since $f_S$ is an indicator map (with values in $\{0,1\}$),
				continuity amounts to showing that both $f_S^{-1}(0)$ and $f_S^{-1}(1)$ are open sets.
				Indeed, by definition $f_S^{-1}(1)$ is a basic open set which is also closed and hence $f_S^{-1}(0)$ is also open as required.	
				It remains to show that $F(\mu, \cdot)$ is continuous in its second coordinate for every fixed~$\mu$.
				This is straightforward because the topology on the second coordinate ($\Delta(V_m^\H)$) is discrete and hence every function is continuous.
				
				\medskip
				            
				To complete the proof of \Cref{thm:minimax}, it is left to show that $\chi^m\leq2^m$.
				Recall, 
				\[\frac{1}{\chi^\star_m}=\sup_{\mu\in \Delta\bigl(\{0,1\}^\X\bigr)}\inf_{\nu\in \Delta(V_m^\H)}\expect_{\substack{h\sim\mu,\\S\sim\nu}}\bigl[\Ind[\text{$h$ is consistent with $S$}]\bigr].\]
				Hence, it suffices to show that there exists a distribution $\mu^\star\in \Delta\bigl(\{0,1\}^\X\bigr)$ such that for every realizable dataset $S=\bigl((x_1,y_1),\ldots,(x_m,y_m)\bigr)\in V_m(\H)$,
				\[\Pr_{h\sim\mu^\star}[h\text{ is consistent with }S]=\mu^\star\bigl(\{h\mid h(x_i)=y_i, i=1,\ldots,m\}\bigr)=\frac{1}{2^m}.\]
				Indeed if this holds, then by definition $\frac{1}{\chi^\star_m}\geq\frac{1}{2^m}$.
				Therefore, the following lemma concludes the proof.
				            
				\begin{lemma}\label{l:well_def_and_reg}
					Let $\X$ be an arbitrary domain. Then there exists a regular Borel probability measure~$\mu^\star$ over $\{0,1\}^\X$, which satisfies the following property. For every 
					finite set $\{x_1,\ldots,x_k\}\subseteq \X$ and labels $y_1,\ldots, y_k$
					\[\mu^\star\bigl(\{h\in \{0,1\}^\X\mid h(x_i)=y_i, i=1,\ldots ,k\}\bigr)=\frac{1}{2^k}.\]
				\end{lemma}
				
				Before turning to prove \Cref{l:well_def_and_reg}, we first need to recollect some definitions and state known results from measure theory which are needed for the proof. The proof is deferred to \Cref{sec:well_def_and_reg}.

							
							
			\end{proof}

			\subsection{Tossing a fair coin infinitely many times}\label{sec:well_def_and_reg}
			One of the basic facts we used in the paper is that the fractional chromatic number of the contradiction graph is bounded. To prove this fact we drew a random hypothesis by sampling a value independently for each $x\in \X$ uniformly over $\{0,1\}$.
					
			For a countable domain $\X$ it is clear that this sampling process induces a well-defined probability distribution over $\{0,1\}^\X$. However, if $\X$ is arbitrary, and possibly uncountable, it is not clear that this process is even well-defined. 
					
			In this section, we prove \Cref{l:well_def_and_reg}: we show that indeed the described sampling process induces a probability measure on the product space $\{0,1\}^\X$. Furthermore, we show that this measure is a regular Borel measure, and hence a valid fractional coloring. (Recall that we identify fractional colorings as regular Borel probability measures over $\{0,1\}^\X$, where the latter is equipped with the product topology).

			\subsubsection{Measure Theory Preliminaries}\label{sec:measure_perliminaries}
					
			\paragraph{The Product $\sigma$-Algebra.} 
			Let $(X_\alpha, \Sigma_\alpha)_{\alpha\in \I}$ be a family of measurable spaces. Denote by ${X_\I=\prod_{\alpha\in \I}X_\alpha}$ the product space, and by $\pi_\alpha:X_\I\to X_\alpha$ the projection map to $X_\alpha$. 
			Using the projection $\pi_\alpha$, we can pull back the $\sigma$-algebra $\Sigma_\alpha$ on  $X_\alpha$, to a $\sigma$-algebra on~$X_\I$:
			\[\pi_\alpha^*(\Sigma_\alpha)\coloneqq\{\pi_\alpha^{-1}(E_\alpha)\mid E_\alpha\in \Sigma_\alpha\}.\]
			Define the \emph{product $\sigma$-algebra} on $X_\I$, denoted $\Sigma_\I$, to be the $\sigma$-algebra generated by all $\pi_\alpha^*(\Sigma_\alpha)$:
			\[\Sigma_\I\coloneqq\Bigl\langle\bigcup_{\alpha\in \I}\pi_\alpha^*(\Sigma_\alpha)\Bigr\rangle=\langle\pi^{-1}_\alpha(E_\alpha)\mid\alpha\in\I\rangle.\]
			Using similar notations, for every subset of indices $A\subseteq \I$, let ${X_A=\prod_{\alpha\in A}X_\alpha}$, let $\pi_A:X_\I\to X_A$ denote the projection to $X_A$, and denote the product $\sigma$-algebra on $A$ by \[\Sigma_A=\Bigl\langle\bigcup_{\alpha\in A}\pi_A\circ \pi_\alpha^*(\Sigma_\alpha)\Bigr\rangle.\] Given $A\subseteq \I$, a measure $\mu_\I$ on the product $\sigma$-algebra
			$\Sigma_\I$, induces a measure $\mu_A$ on $\Sigma_A$ (which is the pushforward of $\mu_\I$):
			\[\mu_A(E)\coloneqq(\pi_A)_*\mu_\I(E)=\mu_\I(\pi_A^{-1}(E)).\]
			Note that those measures obey the compatibility relation:
			For $ B\subseteq A\subseteq I$, denote by $\pi_{A\to B}$ the projection $\restr{\pi_B}{X_A}:X_A\to X_B$.
			Then 
			\begin{equation}\label{eq:compatibility_of_measures}
				\mu_B=(\pi_{A\to B})_*\mu_A ,
			\end{equation}
			i.e, for every $E\in \Sigma_B$, $\mu_B(E)=\mu_A((\pi_{A\to B})^{-1}(E))$.

			\paragraph{Kolmogorov's Extension Theorem.}
			A natural question is whether one can reconstruct $\mu_\I$ solely from the projections $\mu_A$ to \emph{finite} subsets $A\subseteq\I$. It turns out that in the special case where the $\mu_A$ are probability measures (and satisfy some additional regularity and compatibility conditions), it is indeed possible. This is the content of Kolmogorov's Theorem (Theorem 2.4.3 in \cite{tao2011introduction}). 
					
			\begin{theorem}[Kolmogorov's Extension Theorem]\label{thm:kolmogorov}
				Let $(X_\alpha, \Sigma_\alpha,\tau_\alpha)_{\alpha\in \I}$ be a family of measurable spaces $(X_\alpha, \Sigma_\alpha)$, equipped with a topology $\tau_\alpha$.
				For every finite subset $A\subseteq\I$, let $\mu_A$ be a probability measure on $\Sigma_A$ which is inner regular with respect to the product topology $\tau_A\coloneqq\prod_{\alpha\in A}\tau_\alpha$ on $X_A$. 
				Furthermore, assume $\mu_A, \mu_B$ obey the compatibility condition in \Cref{eq:compatibility_of_measures} for every
				nested finite subsets $B\subseteq A$ of $\I$.
				Then, there exists a unique probability measure $\mu_\I$ on the product $\sigma$-algebra $\Sigma_\I$, with the property that $(\pi_A)_*\mu_\I=\mu_A$ for all finite subset~$A\subseteq \I$. (I.e.\ the push-forward measure of $\mu_\I$ when projected on $A$ equals to $\mu_A$.)
			\end{theorem}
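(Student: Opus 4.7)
The plan is to construct $\mu_\I$ by first defining a finitely additive set function on the algebra of cylinder sets, then upgrading it to a countably additive premeasure using inner regularity, and finally invoking Carath\'eodory's extension theorem to obtain the desired measure on $\Sigma_\I$. Call a set $C\subseteq X_\I$ a \emph{cylinder} if $C=\pi_A^{-1}(E)$ for some finite $A\subseteq\I$ and some $E\in\Sigma_A$. A routine verification shows that the collection $\mathcal{A}$ of all cylinders is an algebra of subsets of $X_\I$ and that the $\sigma$-algebra it generates is exactly $\Sigma_\I$. Define $\mu$ on $\mathcal{A}$ by $\mu(\pi_A^{-1}(E))=\mu_A(E)$. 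Well-definedness is where the compatibility condition \eqref{eq:compatibility_of_measures} is used: if the same cylinder is written as $\pi_A^{-1}(E)=\pi_B^{-1}(F)$, then passing to the common refinement $C=A\cup B$ and using the maps $\pi_{C\to A},\pi_{C\to B}$ gives $\mu_A(E)=\mu_C(\pi_{C\to A}^{-1}(E))=\mu_C(\pi_{C\to B}^{-1}(F))=\mu_B(F)$. Finite additivity of $\mu$ on $\mathcal{A}$ reduces similarly to finite additivity of a single $\mu_A$ after lifting two cylinders to a common finite index set.

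The main obstacle, and the only place where inner regularity and topology enter, is establishing countable additivity of $\mu$ on $\mathcal{A}$. By a standard equivalence, this reduces to showing that whenever $(C_n)_{n\geq 1}\subseteq\mathcal{A}$ is a decreasing sequence with $\inf_n\mu(C_n)\geq\epsilon>0$, the intersection $\bigcap_n C_n$ is non-empty. Write $C_n=\pi_{A_n}^{-1}(E_n)$; by replacing $A_n$ with $\bigcup_{k\leq n}A_k$ and $E_n$ with the appropriate cylinder inside $X_{A_n}$, we may assume $A_n\subseteq A_{n+1}$. Using the inner regularity of $\mu_{A_n}$ with respect to the product topology $\tau_{A_n}$, choose compact sets $K_n\subseteq E_n$ with $\mu_{A_n}(E_n\setminus K_n)<\epsilon/2^{n+1}$, and set $F_n=\pi_{A_n}^{-1}(K_n)$ and $G_n=F_1\cap\cdots\cap F_n$. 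Then $G_n$ is a decreasing sequence of (possibly non-cylindrical, but measurable) sets with $\mu(C_n\setminus G_n)\leq\sum_{k=1}^n \mu(C_k\setminus F_k)<\epsilon/2$, so $G_n$ is in particular non-empty.

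The hard part is now extracting an element of $\bigcap_n G_n$. Pick $x^{(n)}\in G_n$. For each $n$, the projection $\pi_{A_n}(x^{(m)})$ lies in the compact set $K_n$ for every $m\geq n$. Since the product of compact sets is compact by Tychonoff's theorem, a diagonal/subnet argument produces a choice of values $y_\alpha\in X_\alpha$ for each $\alpha\in\bigcup_n A_n$ such that for every $n$, the tuple $(y_\alpha)_{\alpha\in A_n}$ is a limit point of $(\pi_{A_n}(x^{(m)}))_{m\geq n}$ and therefore lies in the closed set $K_n$. Extending $(y_\alpha)$ arbitrarily on $\I\setminus\bigcup_n A_n$ yields a point $x\in X_\I$ with $\pi_{A_n}(x)\in K_n\subseteq E_n$ for every $n$, i.e.\ $x\in\bigcap_n C_n$. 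This contradicts the assumed emptiness and establishes countable additivity.

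Finally, Carath\'eodory's extension theorem applied to the countably additive $\mu$ on $\mathcal{A}$ produces a probability measure $\mu_\I$ on $\Sigma_\I$ extending $\mu$; by construction $(\pi_A)_*\mu_\I=\mu_A$ for every finite $A\subseteq\I$. Uniqueness is then immediate: cylinders form a $\pi$-system generating $\Sigma_\I$ on which any two candidate extensions agree, so Dynkin's $\pi$-$\lambda$ theorem forces them to coincide on $\Sigma_\I$.
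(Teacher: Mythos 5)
The paper never proves \Cref{thm:kolmogorov}: it is quoted from Tao's measure theory text (Theorem 2.4.3 there) and used as a black box, together with \Cref{thm:extension_of_baire}, in the proof of \Cref{l:well_def_and_reg}. So the only meaningful comparison is with the standard textbook argument, and that is exactly the route you follow: the algebra of cylinder sets, well-definedness and finite additivity from the compatibility condition \Cref{eq:compatibility_of_measures}, countable additivity via inner regularity plus a compactness argument, Carath\'eodory/Hahn--Kolmogorov, and uniqueness by the $\pi$-$\lambda$ theorem. The measure-theoretic bookkeeping is correct (a small remark: your $G_n$ are in fact cylinders, since $K_k\in\Sigma_{A_k}$ and the algebra is closed under finite intersections, so the bound $\mu(C_n\setminus G_n)<\epsilon/2$ uses only finite subadditivity on the algebra, as it must).

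The one step that is not actually carried out is the step that carries the entire content of the theorem: ``a diagonal/subnet argument produces a choice of values $y_\alpha$ \ldots which is a limit point of $(\pi_{A_n}(x^{(m)}))_{m\geq n}$ and therefore lies in the closed set $K_n$.'' Two things are hidden there. First, producing a single family $(y_\alpha)_{\alpha\in\bigcup_n A_n}$ that works simultaneously on every level needs an explicit construction: e.g.\ embed the tails into the compact space $\prod_n K_n$ (Tychonoff), take a cluster point $w=(w_n)_n$ of the points $w^{(m)}$ whose $n$-th coordinate is $\pi_{A_n}(x^{(m)})$ for $n\leq m$ (and an arbitrary point of $K_n$ otherwise), and then verify the consistency relations $\pi_{A_{n'}\to A_n}(w_{n'})=w_n$ before reading off the $y_\alpha$. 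Second, both your appeal to ``the closed set $K_n$'' and that consistency verification (the cluster point must land in the diagonal, which must be closed) use that compact sets are closed, i.e.\ Hausdorffness of the factors --- a hypothesis that does not appear in the statement as quoted. Some separation assumption of this kind is genuinely needed: with the indiscrete topology every subset is compact, inner regularity is vacuous, and the purely measure-theoretic extension statement is known to be false, so the compactness step cannot go through for arbitrary topologies. In the paper's only application the factors are $\{0,1\}$ with the discrete topology (compact Hausdorff), so once the diagonalization is written out and the Hausdorff assumption is made explicit, your argument does cover the case that is actually used.
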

					
			\paragraph{Extending a product measure to a regular Borel measure.}
			Let us assume here and below that each $X_\alpha$ is a compact Hausdorff space, and $\Sigma_\alpha$ is the Borel
			$\sigma$-algebra on~$X_\alpha$.
			Kolmogorov's theorem allows to construct a probability measure on the product $\sigma$-algebra by specifying its behavior over finite projections. 
			The following theorem allows us to further extend this measure to the Borel $\sigma$-algebra of the product space $X_\I$. (Notice that the Borel $\sigma$-algebra is finer than the Product $\sigma$-algebra.)
					
			\begin{theorem}[Theorem 2, \cite{zbMATH03099862}]\label{thm:extension_of_baire}
				Let $(X_\alpha,\Sigma_\alpha)_{\alpha\in \I}$ be a family measurable spaces where each~$X_\alpha$ is a compact Hausdorff space and $\Sigma_\alpha$ is the Borel $\sigma$-algebra of $X_\alpha$. Then, every probability measure on the product $\sigma$-algebra, $\Sigma_\I$, has a unique extension to a regular Borel measure $\mu^*_\I$ on~$X_\I$.
			\end{theorem}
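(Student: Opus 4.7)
The plan is to apply the Riesz--Markov representation theorem on the compact Hausdorff space $X_\I$ (compact by Tychonoff): use $\mu$ to build a positive linear functional on $C(X_\I)$, invoke Riesz--Markov to obtain a regular Borel probability measure $\mu^*_\I$, and verify that $\mu^*_\I$ extends $\mu$ on $\Sigma_\I$. The essential preliminary is a measurability lemma: every $f\in C(X_\I)$ is $\Sigma_\I$-measurable. To prove this, consider the algebra $\mathcal{F}$ generated by $\{g\circ\pi_\alpha:\alpha\in\I,\ g\in C(X_\alpha)\}$. It contains the constants and separates points: given $x\neq y$ in $X_\I$, pick $\alpha$ with $x_\alpha\neq y_\alpha$ and apply Urysohn's lemma on the normal space $X_\alpha$ to produce a separating $g$. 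By Stone--Weierstrass, $\mathcal{F}$ is uniformly dense in $C(X_\I)$. Each element of $\mathcal{F}$ depends on finitely many coordinates and is therefore $\Sigma_\I$-measurable; since uniform limits of measurable functions are measurable, $C(X_\I)\subseteq L^\infty(\Sigma_\I,\mu)$.

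Next, define $\Lambda(f)\coloneqq\int f\,d\mu$ for $f\in C(X_\I)$; this is a positive linear functional with $\Lambda(\mathbf{1})=1$, so Riesz--Markov supplies a unique regular Borel probability $\mu^*_\I$ on $X_\I$ representing $\Lambda$. To verify $\mu^*_\I|_{\Sigma_\I}=\mu$, I would first check equality on finite cylinder sets $\pi_A^{-1}(B)$ where $A\subseteq\I$ is finite and $B$ is Borel in the finite product $X_A$. The pushforwards $(\pi_A)_*\mu$ and $(\pi_A)_*\mu^*_\I$ integrate every $h\in C(X_A)$ identically (apply $\Lambda$ to $h\circ\pi_A$, which lies in $C(X_\I)$). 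Since $\pi_A$ maps compacts to compacts, $(\pi_A)_*\mu^*_\I$ is a regular Borel measure on $X_A$, so Riesz uniqueness on $X_A$ pins the two pushforwards to the same Borel measure, yielding the cylinder equality. A Dynkin $\pi$--$\lambda$ argument then extends the agreement from the $\pi$-system of cylinders to all of $\Sigma_\I$.

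Uniqueness of the extension is immediate: any regular Borel probability $\nu$ that extends $\mu$ produces the same functional $\Lambda$ on $C(X_\I)$ (integrals of $\Sigma_\I$-measurable functions against $\nu$ and $\mu$ coincide on $\Sigma_\I$), so $\nu=\mu^*_\I$ by Riesz. The main obstacle will be the pushforward step, because the raw $(\pi_A)_*\mu$ is defined a priori only on the product of Borel $\sigma$-algebras of the finite factors of $A$, which for non-second-countable compact Hausdorff factors is strictly coarser than the Borel $\sigma$-algebra of $X_A$. The resolution is either to invoke Halmos's classical result that every Baire probability on a compact Hausdorff space has a unique regular Borel extension---allowing one to replace $(\pi_A)_*\mu$ with its regular completion before applying Riesz uniqueness---or to first establish the theorem in the finite-index case as a lemma (where the same Stone--Weierstrass argument gives a direct proof free of this subtlety) and then bootstrap to arbitrary $\I$ using the finite-cylinder identification above.
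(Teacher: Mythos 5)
The paper never proves this statement---it is imported verbatim as Theorem~2 of the cited reference and used as a black box---so there is no internal argument to compare against; what can be judged is whether your sketch establishes the statement as written. Your first two steps are sound and are indeed the standard route: Stone--Weierstrass plus uniform limits show every $f\in C(X_\I)$ is $\Sigma_\I$-measurable, and Riesz--Markov applied to $\Lambda(f)=\int f\,d\mu$ on the Tychonoff-compact space $X_\I$ produces a regular Borel probability measure $\mu^*_\I$. The genuine gap sits exactly where you flag it, in verifying $\mu^*_\I|_{\Sigma_\I}=\mu$, and neither of your two patches closes it. The functional $\Lambda$ records only the restriction of $\mu$ to the Baire $\sigma$-algebra of $X_\I$ (the $\sigma$-algebra generated by $C(X_\I)$), whereas $\Sigma_\I$ is generated by cylinders $\pi_\alpha^{-1}(E_\alpha)$ with $E_\alpha$ an \emph{arbitrary} Borel subset of a possibly non-metrizable compact factor; such cylinders need not be Baire sets. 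Your resolution (a) (extend the Baire restriction of $(\pi_A)_*\mu$ via the Halmos theorem and apply Riesz uniqueness on $X_A$) therefore yields agreement of $\mu$ and $\mu^*_\I$ only on Baire cylinders, not on the rectangles with Borel sides that generate $\Sigma_\I$. Resolution (b) fails because the finite-index case is precisely where the difficulty lives: for $\lvert\I\rvert=1$ (or all but one factor a single point) the statement asserts that every Borel probability measure on a compact Hausdorff space admits a regular Borel extension of itself, i.e.\ is regular---and the Dieudonn\'e measure on $[0,\omega_1]$ refutes this. That measure integrates every continuous function like the Dirac mass at $\omega_1$ (continuous functions on $[0,\omega_1]$ are eventually constant), so your Riesz step would output $\delta_{\omega_1}$, which disagrees with it on Borel sets, and no regular extension exists at all.

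The conclusion is that no proof of the theorem in the stated generality can exist; the statement needs an additional hypothesis, e.g.\ that the finite-dimensional marginals $(\pi_A)_*\mu$ are regular (which is exactly what Kolmogorov's theorem supplies in the paper's application), or that the factors are metrizable. In the paper's actual use case, $X_\alpha=\{0,1\}$, the product $\sigma$-algebra $\Sigma_\I$ coincides with the Baire $\sigma$-algebra of $\{0,1\}^\X$, two Baire measures with equal integrals of continuous functions coincide, and your argument then goes through verbatim: it is essentially the classical proof of the Baire-to-Borel extension theorem, and it correctly delivers everything the paper needs from this citation. So the verdict is: right strategy, correctly identified obstacle, but the proposed fixes do not (and cannot) rescue the theorem as literally stated.
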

					
			\subsubsection{Proof of Lemma~\ref{l:well_def_and_reg}}
					
			\begin{proof}
				For every finite set of unlabeled examples $\{x_1,\ldots,x_k\}\subseteq \X$, let $\mu_{\{x_1,\ldots,x_k\}}$ be the uniform distribution measure over $\{0,1\}^{\{x_1,\ldots,x_k\}}$. Notice that $\{\mu_{\{x_1,\ldots,x_k\}}: \{x_1,\ldots,x_k\}\subseteq \X\}$ are regular and obey the the compatibility condition in \Cref{eq:compatibility_of_measures}. By \Cref{thm:kolmogorov} there exists a probability measure $\mu$ over the product $\sigma$-algebra of $\{0,1\}^\X$ which satisfies the equations stated in the Lemma.
				By \Cref{thm:extension_of_baire} there exists a regular Borel probability measure $\mu^\star$ as required.
			\end{proof}

			\section{Additional Proofs}\label{app:proofs}
					
			\subsection{Proof of Lemma~\ref{l:colorings-algs} [Independent sets and consistent hypotheses]}

   \begin{lemma*}[Lemma~\ref{l:colorings-algs}, Restatement]
		Let $\H$ be a class and $m$ be a natural number.
		\begin{enumerate}
			\item For every independent set $I$ in $G_m(\H)$, there exists an hypothesis ${h\in \{0,1\}^\X}$ such that $h$ is consistent with every dataset $S\in I$; i.e.\ for every dataset $S\in I$ and every example $(x,y)\in S$, we have $h(x)=y$.
			\item For every hypothesis $h$, the set of all datasets of size $m$ that are consistent with $h$ is an independent set in $G_m(\H)$; i.e.\ the set
			      \[V_{h}\coloneqq\{S=((x_1,y_1),\ldots, (x_m,y_m))\in V_m(\H)\mid \forall i, h(x_i)=y_i\}\]
			      is independent in $G_m(\H)$
		\end{enumerate}
	\end{lemma*}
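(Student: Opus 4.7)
The plan is to handle the two parts separately, with part~(2) essentially immediate and part~(1) following from a direct construction that exploits the independence of $I$.

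For part~(2), I would fix a hypothesis $h$ and show that no two datasets in $V_h$ can be connected by an edge. Suppose for contradiction that $S, S' \in V_h$ form an edge in $G_m(\H)$. Then by definition of the contradiction graph, there is some $x \in \X$ together with opposite labels, say $(x,0) \in S$ and $(x,1) \in S'$. But since $S,S'$ are both consistent with $h$, this forces $h(x)=0$ and $h(x)=1$ simultaneously, a contradiction. Hence $V_h$ is independent in $G_m(\H)$.

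For part~(1), given an independent set $I$ in $G_m(\H)$, I would construct $h \in \{0,1\}^\X$ as follows. For every $x \in \X$ that appears as an unlabeled coordinate in some dataset $S \in I$ (with labeled example $(x,y) \in S$), define $h(x) = y$. For all other $x \in \X$ (those not mentioned in any dataset of $I$), set $h(x) = 0$ arbitrarily. The one thing that needs to be checked is that this definition is unambiguous: if $(x,y) \in S$ and $(x,y') \in S'$ for two datasets $S,S' \in I$, then we must have $y = y'$. Indeed, otherwise $S$ and $S'$ would contradict each other on $x$, yielding an edge between them in $G_m(\H)$, which violates the independence of $I$. Thus $h$ is well defined, and by construction $h(x)=y$ for every $(x,y)$ appearing in any $S \in I$, so $h$ is consistent with every such $S$.

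There is no genuine obstacle here; the only subtlety is the well-definedness check in part~(1), which is precisely where the independence hypothesis is used. Note that $h$ need not belong to $\H$ itself, which is consistent with the statement of the lemma (it only requires $h \in \{0,1\}^\X$).
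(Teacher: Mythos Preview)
Your proposal is correct and follows essentially the same approach as the paper's proof: part~(2) is handled by the same contradiction argument, and part~(1) is handled by the same construction (define $h$ on the labeled points appearing in $I$, set it arbitrarily elsewhere, and use independence to verify well-definedness). Your remark that $h$ need not lie in $\H$ is also correct and worth noting.
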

					
			\begin{proof}
				First, we prove the first part of the lemma. 
				Let $I$ be an independent set in $G_m(\H)$ and denote 
				\[L(I)=\bigl\{(x,y)\in \X\times\{0,1\}\mid \text{$\exists S\in I$ s.t.\ $(x,y)\in S$}\bigr\}.\]
				Observe that since $I$ is independent, if $(x,y)\in L(I)$ then $(x,1-y)$ must not be in $L(I)$ (because datasets containing $(x,1-y)$ are connected with an edge to datasets containing $(x,y)$).
				Define an hypothesis $h$ as follows: for every $(x,y)\in L(I)$ set $h(x)=y$. For every $x$ such that neither $(x,0)$ or $(x,1)$ are in $L(I)$, set arbitrarily $h(x)=0$. Indeed $h$ interpolates every dataset $S\in I$.
							
				Next, we prove the second part of the lemma.
				Let $h\in\{0,1\}^\X$ be an hypothesis. $V_{h}$ is indeed independent in $G_m(\H)$, else there must be $S,S'\in V_{h}$ and an unlabeled example $x\in \X$ such that $(x,0)\in S, (x,1)\in S'$, and by definition $h(x)=0=1$, leading to a contradiction.
			\end{proof}
					
			\subsection{Proof of Lemma~\ref{l:colorings_cliques_dists} [Fractional cliques and colorings vs. distributions]}

\begin{lemma*}[Lemma~\ref{l:colorings_cliques_dists}, Restatement]
		Let $\H$ be a class, $m\in\Nat$. Then,
		\begin{enumerate}
			\item There exists a fractional coloring $c$ of $G_m(\H)$ with $\col(c)=\alpha>0$ if and only if there exists a distribution $\mu$ over hypotheses such that
			      \[\inf_{S}\Pr_{h\sim\mu}\left[\text{$h$ is consistent with $S$}\right]=\frac{1}{\alpha},\]
			      where the infimum is taken over realizable datasets of size $m$. \label{itm:colorings_dists}
			\item There exists a fractional clique $\delta$ of $G_m(\H)$ with $\lvert\delta\rvert=\alpha>0$ if and only if there exists a distribution $\nu$ over realizable datasets of size $m$ such that
			      \[\sup_{h}\Pr_{S\sim\nu}\left[\text{$h$ is consistent with $S$}\right]=\frac{1}{\alpha},\]
			      where the supremum is taken over hypotheses $h\in \{0,1\}^\X$.\label{itm:cliques_dists}
		\end{enumerate}
	\end{lemma*}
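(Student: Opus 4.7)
The plan is to leverage the bijection between independent sets in $G_m(\H)$ and consistent hypotheses supplied by Lemma~\ref{l:colorings-algs}. Fractional colorings are essentially measures on the family $\I$ of independent sets, which via this bijection transfer to distributions over hypotheses; fractional cliques are essentially weightings on vertices (realizable datasets of size $m$), which after normalization are distributions over such datasets. Both directions of each equivalence amount to pushing a measure forward along one side of the Lemma~\ref{l:colorings-algs} correspondence and renormalizing.

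For Part~\ref{itm:colorings_dists}, in the forward direction I would let $\tilde c := c/\alpha$ be the normalized probability measure on $\I$. For each independent set $I$ pick, using Lemma~\ref{l:colorings-algs}(1), a hypothesis $h_I$ consistent with every dataset in $I$, and let $\mu$ be the pushforward of $\tilde c$ under $I\mapsto h_I$. Then for every realizable dataset $S$ of size $m$,
$$\Pr_{h\sim\mu}[h \text{ consistent with } S] \;\geq\; \Pr_{I\sim\tilde c}[S\in I] \;=\; \frac{c(\{I:S\in I\})}{\alpha} \;\geq\; \frac{1}{\alpha},$$
using the defining inequality of a fractional coloring. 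For the converse, given $\mu$, assign to each hypothesis $h$ the independent set $V_h$ from Lemma~\ref{l:colorings-algs}(2), push $\mu$ forward along $h\mapsto V_h$, and scale by $\alpha$ to obtain a measure $c$ on $\I$. The coloring inequality $c(\{I:S\in I\}) \geq 1$ then translates precisely into $\Pr_{h\sim\mu}[h \text{ consistent with } S] \geq 1/\alpha$.

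For Part~\ref{itm:cliques_dists}, the forward direction is immediate: set $\nu := \delta/\alpha$, a probability distribution on $V_m(\H)$. By Lemma~\ref{l:colorings-algs}(2) the set $V_h = \{S : h \text{ consistent with } S\}$ is independent for every $h$, hence by the defining inequality of a fractional clique,
$$\Pr_{S\sim\nu}[h \text{ consistent with } S] \;=\; \frac{1}{\alpha}\sum_{S\in V_h}\delta(S) \;\leq\; \frac{1}{\alpha}.$$
Conversely, given $\nu$, define $\delta := \alpha\nu$. Any independent set $I$ is contained in $V_h$ for some hypothesis $h$ produced by Lemma~\ref{l:colorings-algs}(1), hence $\sum_{S\in I}\delta(S) \leq \alpha\cdot\Pr_{S\sim\nu}[h \text{ consistent with } S] \leq 1$, confirming that $\delta$ is a fractional clique of size $\alpha$.

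No conceptual obstacle arises; the only subtle bookkeeping point is that the natural constructions above yield the relevant inequality ($\geq 1/\alpha$ or $\leq 1/\alpha$) rather than equality. To land on the exact value $1/\alpha$ stated in the lemma, I would take $\alpha$ on the coloring/clique side to be the reciprocal of the attained $\val$ on the distribution side (equivalently, rescale $c$ or $\delta$ by the achieved value $\val(\mu)$ or $\val(\nu)$), so that the forward and backward maps become inverse normalizations. This is exactly the $\val\leftrightarrow\col$ duality outlined in Section~\ref{sec:preliminaries_graph_theory}.
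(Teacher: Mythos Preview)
Your proposal is correct and follows essentially the same approach as the paper: both invoke the $\val\leftrightarrow\col$ correspondence from Section~\ref{sec:preliminaries_graph_theory} and then apply Lemma~\ref{l:colorings-algs} to pass between independent sets and hypotheses. Your write-up is in fact more explicit than the paper's (which dispatches Item~\ref{itm:cliques_dists} in one line and handles Item~\ref{itm:colorings_dists} by the same ``abuse of notation''), and your closing remark about rescaling to hit the exact value $1/\alpha$ addresses a point the paper leaves implicit.
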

   
			\begin{proof}
				As demonstrated in \Cref{sec:preliminaries_graph_theory}, there is a correspondence between fractional colorings and independent sets: a fractional coloring of $G_m(\H)$ with $\alpha$ colors corresponds to a distribution $\mu$ over independent sets of $G_m(\H)$  with 
				\[\val(\mu)=\frac{1}{\alpha}=\inf_{S \in V_m(\H)}{\Pr_{I \sim \mu}\left[S \in I\right]}.\]
				From \Cref{l:colorings-algs}, there is a correspondence between (maximal) independent sets and hypotheses: an independent set $I$ in $G_m(\H)$ corresponds to an hypothesis $h$ which is consistent with all datasets $S\in I$. 
				Note that a realizable dataset $S$ of size $m$ belongs to an independent set $I$ if and only if  $S$ is consistent with the hypothesis $h$ corresponds to $I$.
				Hence, by abuse of notation, consider $\mu$ to be a distribution over hypotheses. 
				Observe that for every realizable dataset $S$ of size $m$,
				\[\Pr_{h\sim\mu}\left[\text{$h$ is consistent with $S$}\right]=\Pr_{I\sim\mu}\left[S \in I\right].\]
				This completes \Cref{itm:colorings_dists} in the lemma.
				Note that \Cref{itm:cliques_dists} is immediate, again it follows from the discussion in \Cref{sec:preliminaries_graph_theory}, since the vertices of the contradiction are realaizable datasets of size~$m$.
			\end{proof}

            \subsection{Proof of Lemma~\ref{l:CD_bound}}

 \begin{lemma*}[Lemma~\ref{l:CD_bound}, Restatement]
		Let $\H$ be a hypothesis class. Then
		\[\CD(\H)\leq \max\bigl\{2\LD(\H) \log(\LD(\H)), 300\bigr\}.\]
	\end{lemma*}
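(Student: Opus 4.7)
The proof plan is a direct consequence of the explicit bound $\omega_m \leq (2m+1)^{\LD(\H)}$ established in Lemma~\ref{l:clique_number_bounded}. Write $d = \LD(\H)$ and $M = \max\{2d\log d, 300\}$, where $\log$ denotes $\log_2$. Since $\CD(\H) = \sup\{m : \omega_m = 2^m\}$, it suffices to show that for every $m > M$ we have $(2m+1)^d < 2^m$, for then $\omega_m \leq (2m+1)^d < 2^m$, certifying $m \notin \{m : \omega_m = 2^m\}$.

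The inequality $(2m+1)^d < 2^m$ is equivalent to $m > d\log(2m+1)$. I would split into two cases according to the size of $d$. In the regime where $d$ is large (say $d \geq 17$, which needs to be verified numerically), I would plug in the threshold value $m_0 = 2d\log d$ and compute $2^{m_0} = d^{2d}$ and $(2m_0+1)^d = (4d\log d + 1)^d$, so the desired inequality reduces to the one-variable inequality $d^2 > 4d\log d + 1$, which holds for $d \geq 17$ by a direct check. In the regime where $d$ is small (say $d \leq 16$), I would plug in $m_0 = 300$ and observe that $(2m_0+1)^d \leq 601^{16} = 2^{16\log 601} < 2^{150} < 2^{300} = 2^{m_0}$.

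To deduce the inequality for all $m > M$ (not just at the threshold $m_0$), I would invoke a short monotonicity argument: the function $f(m) = 2^m / (2m+1)^d$ satisfies $f'(m)/f(m) = \log 2 - 2d/(2m+1)$, which is positive whenever $m > d/\log 2$, and in both cases above $m_0$ lies comfortably in this regime. Hence once $2^{m_0} > (2m_0+1)^d$ holds, it continues to hold for every $m \geq m_0$, which gives $\CD(\H) < m_0 \leq M$ in both cases and completes the proof.

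The only mildly subtle step will be verifying the numerical constants at the boundary between the two regimes, i.e., making sure that the small-$d$ branch (using $m_0 = 300$) covers every value of $d$ for which the large-$d$ branch fails. The choice of $300$ is conservative enough to handle all $d \leq 16$ with room to spare, so this boundary check is the main technical point but is straightforward arithmetic rather than a conceptual obstacle.
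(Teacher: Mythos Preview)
Your proposal is correct and follows essentially the same approach as the paper: both arguments invoke Lemma~\ref{l:clique_number_bounded}, split into a small-$d$ and large-$d$ regime, verify the inequality $2^{m_0} > (2m_0+1)^d$ at the relevant threshold, and propagate it to all larger $m$ via a monotonicity argument (which the paper packages as the technical Lemma~\ref{l:tech_cd}). The only cosmetic differences are your threshold $d\geq 17$ versus the paper's $d\geq 30$, and that in your derivative computation the constant should be $\ln 2$ rather than $\log_2 2$; this does not affect the argument since both choices of $m_0$ lie well inside the monotone region.
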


            In order to show the desired bound we will use the following technical lemma.
	\begin{lemma}\label{l:tech_cd}
		Let $d\in \mathbb{N}$ and $m_0\in\mathbb{R}$ such that
		\begin{itemize}
			\item[(i)] $m_0\geq \frac{d}{\ln 2}$, and
			\item[(ii)] $2^{m_0} \geq (2m_0+1)^d$.
		\end{itemize}
		Then, 
		\[(\forall m > m_0): 2^m > (2m+1)^d.\]
		Further, $m_0=2d\log d$ satisfies items (i) and (ii) above, provided that $d\geq 30$.
	\end{lemma}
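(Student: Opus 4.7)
The plan is to reduce the first assertion to a calculus exercise about the single-variable function $g(m):=2^{m}/(2m+1)^{d}$, and then verify the numerical claim about $m_0=2d\log d$ by direct substitution. I would begin by taking the natural logarithm: $\ln g(m)=m\ln 2-d\ln(2m+1)$, whose derivative is $\ln 2 - \tfrac{2d}{2m+1}$. This derivative is nonnegative exactly when $2m+1\geq \tfrac{2d}{\ln 2}$, equivalently $m\geq \tfrac{d}{\ln 2}-\tfrac{1}{2}$. In particular, assumption (i) gives that $g$ is strictly increasing on $[m_0,\infty)$ (the derivative is already strictly positive at $m=d/\ln 2$, since a short computation shows it equals $(\ln 2)^{2}/(2d+\ln 2)>0$). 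Combining this with assumption (ii), which says $g(m_0)\geq 1$, yields $g(m)>1$ for every $m>m_0$, and that is precisely the desired strict inequality $2^{m}>(2m+1)^{d}$.

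For the second part, I would simply substitute $m_0=2d\log d$ (where $\log$ denotes $\log_2$) into (i) and (ii). Condition (i) becomes $2d\log d\geq d/\ln 2$, i.e.\ $2\ln 2\cdot\log d=2\ln d\geq 1$, which holds for all $d\geq 2$ and certainly for $d\geq 30$. Condition (ii) is the interesting one: using $2^{2d\log d}=(2^{\log d})^{2d}=d^{2d}$, we need $d^{2d}\geq (4d\log d+1)^{d}$, i.e.\ $d^{2}\geq 4d\log d + 1$. For $d\geq 30$, we have $\log_{2} d\leq \log_{2} d\cdot 1$ and $4d\log_{2} d+1\leq 4d\log_{2} d+d=d(4\log_{2} d+1)$, so it suffices to check $d\geq 4\log_{2} d+1$. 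At $d=30$ this reads $30\geq 4\log_{2}30+1\approx 1+4\cdot 4.907\approx 20.6$, and the function $d-4\log_{2}d-1$ has positive derivative $1-4/(d\ln 2)$ for $d\geq 6$, so the inequality persists for all $d\geq 30$.

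Putting the two halves together completes the proof: whenever $d\geq 30$, the choice $m_0=2d\log d$ fulfills (i) and (ii), so by the first half we conclude $2^{m}>(2m+1)^{d}$ for every $m>2d\log d$. The only mildly delicate step is the calculus argument for monotonicity of $g$; the rest is elementary estimation, and nothing should obstruct the proof. If one wanted to be tidy, one could state the monotonicity of $g$ as a short intermediate claim before invoking (i) and (ii), to separate the analytic step from the arithmetic verification for $d\geq 30$.
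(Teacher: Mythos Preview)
Your proof is correct and essentially matches the paper's argument: both parts establish the same inequalities, with your derivative computation for $g(m)=2^{m}/(2m+1)^{d}$ playing the role of the paper's algebraic manipulation via $2^{m}=2^{m-m_0}\cdot 2^{m_0}$ and $(1+x)^{d}<e^{dx}$, and your verification of (i)--(ii) at $m_0=2d\log d$ following the same substitution with only cosmetically different arithmetic bounds. The differences are presentational rather than substantive.
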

	
	\begin{proof}[Proof of \Cref{l:CD_bound}]
		If $\LD(\H)=\infty$ then the inequality trivially holds. Suppose $\LD(\H)=d<\infty$. We distinguish between two cases:
		if $d<30$ then by \Cref{l:clique_number_bounded}, 
		\[\omega_m\leq (2m+1)^d<(2m+1)^{30}\]
		which is less than $2^m$ for every $m\geq300$, and thus $\CD(\H)\leq 300\leq \max\{2d \log d, 300\}$ as required.
		Else, if $d\geq30$ then by \Cref{l:tech_cd} for every $m>2d\log d$
		\[\omega_m\leq (2m+1)^d < 2^m,\]
		thus $\CD(\H) \leq 2d\log(d) \leq \max\{2d \log d, 300\}$ as required.
	\end{proof}

			\begin{proof}[Proof of Lemma~\ref{l:tech_cd}]
				We first show that $2^m>(2m+1)^d$ for all $m>m_0$.
				Indeed, if $m>m_0$ then 
				\begin{align*}
					2^m & = 2^{m-m_0}\cdot 2^{m_0}                                  \\
					    & \geq  2^{m-m_0}\cdot (2m_0+1)^d. \tag{by assumption (ii)} 
				\end{align*}
				Note that
				\begin{align*}
					\Bigl(\frac{2m+1}{2m_0+1}\Bigr)^d & = \Bigl(1+\frac{2(m-m_0)}{2m_0+1}\Bigr)^d                                                 \\
					                                  & < \exp\Bigl(\frac{2(m-m_0)\cdot d}{2m_0+1}\Bigr) \tag{$1+x < \exp(x)$ for all $x>0$}      \\
					                                  & \leq \exp\Bigl(\ln(2)(m-m_0)\Bigr) \tag{$\frac{2d}{2m_0+1} \leq \ln 2$ by assumption (i)} \\
					                                  & = 2^{m-m_0},                                                                              
				\end{align*}
				which implies that $ 2^{m-m_0}\cdot (2m_0+1)^d > (2m+1)^d$, therefore by the previous derivation, $2^m>(2m+1)^d$ as wanted.
				Next, assume $d\geq 30$ and set~${m_0=2d\log d}$. We will show $m_0$ satisfies items (i) and (ii). 
				We first prove that $m_0\geq \frac{d}{\ln 2}$:
				\begin{align*}
					m_0\geq \frac{d}{\ln 2} & \iff 2d\log d \geq \frac{d}{\ln 2}   \\
					                        & \iff  \log d   \geq \frac{1}{2\ln 2} \\ 
					                        & \iff d  \geq 2^{\frac{1}{2\ln 2}},   
				\end{align*}
				and indeed $d\geq 2^{\frac{1}{2\ln 2}}=\sqrt{e}$ since $d\geq 30$.
				Lastly, we prove that ${2^{m_0} \geq (2m_0+1)^d}$:
				\begin{align*}
					2^{m_0}=2^{2d\log d} & = d^{2d}                                                          \\ 
					                     & \geq (6d\log d)^d  \tag{since $d\geq 6\log d$ for all $d\geq 30$} \\
					                     & = (3m_0)^d                                                        \\
					                     & \geq(2m_0+1)^d.                                                   
				\end{align*}
			\end{proof}
					
			\subsection{Proof of Lemma~\ref{l:tech_fin_FCD_implies_fin_repdim}}

   \begin{lemma*}[Lemma~\ref{l:tech_fin_FCD_implies_fin_repdim}, Restatement]
			Let $\alpha\geq2$, and set $m=\lfloor20\alpha\ln \alpha\rfloor$ and $k=4m^\alpha$. Then,
			\[\bigl(1-q(m)\bigr)^{k}\leq \frac{1}{4},\]
   where $q(m)=\frac{1}{m^\alpha}-\left(\frac{3}{4}\right)^m .$
		\end{lemma*}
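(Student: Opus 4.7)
The goal is to show $(1-q(m))^k \leq 1/4$ for $m=\lfloor 20\alpha\ln\alpha\rfloor$, $k=4m^\alpha$, and $q(m)=m^{-\alpha}-(3/4)^m$. The natural starting point is the standard inequality $(1-x)^k\leq e^{-xk}$ valid for $x\leq 1$, which reduces the task to proving $kq(m)\geq \ln 4$. The plan is to expand this quantity directly:
\[kq(m)=4m^\alpha\bigl(m^{-\alpha}-(3/4)^m\bigr)=4-4m^\alpha (3/4)^m,\]
so it suffices to show
\[m^\alpha (3/4)^m \leq \frac{4-\ln 4}{4}.\]

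Taking logarithms, this reduces to the analytic inequality
\[m\ln(4/3)-\alpha\ln m \;\geq\; \ln\!\frac{4}{4-\ln 4},\]
a fixed positive constant (roughly $0.43$). I would verify this in two stages. First, substituting the lower bound $m\geq 20\alpha\ln\alpha-1$ gives $m\ln(4/3)\geq 20\ln(4/3)\cdot\alpha\ln\alpha - \ln(4/3)$, where $20\ln(4/3)>5.75$. Second, the upper bound $m\leq 20\alpha\ln\alpha$ yields $\alpha\ln m\leq \alpha(\ln 20+\ln\alpha+\ln\ln\alpha)$. For $\alpha$ bounded away from $2$, the dominant term $5.75\alpha\ln\alpha$ on the left dwarfs the $\alpha\ln\alpha$-type term on the right by a multiplicative factor, so the inequality holds with room to spare and tightens as $\alpha\to\infty$.

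The genuine obstacle is the boundary case $\alpha=2$, where $m=27$ and the margin is thin: one needs $m\ln(4/3)-\alpha\ln m\approx 7.77-6.59\approx 1.18$, which comfortably exceeds $\ln\bigl(4/(4-\ln 4)\bigr)\approx 0.43$ but not by an order of magnitude. I would therefore handle small $\alpha$ (say $2\leq\alpha\leq \alpha_0$ for some explicit $\alpha_0$) by direct numerical verification of $4m^\alpha(3/4)^m\leq 4-\ln 4$, and handle $\alpha\geq\alpha_0$ by the asymptotic estimate above, where one can argue monotonically by viewing both sides as functions of $\alpha$ and differentiating, or simply by observing that $m\ln(4/3)\geq 5.75\alpha\ln\alpha - O(1)$ eventually dominates $\alpha(\ln\alpha+\ln\ln\alpha+O(1))$.

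In total, the proof is essentially two lines of reduction ($(1-x)^k\leq e^{-xk}$ and algebraic rearrangement) followed by a monotonicity/base-case analysis of a concrete logarithmic inequality. The choice of constants in the hypothesis ($20$ and $4$) is calibrated precisely so this inequality holds at $\alpha=2$ with some slack, which is what makes the argument go through cleanly for all $\alpha\geq 2$.
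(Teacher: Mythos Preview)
Your proposal is correct and follows essentially the same approach as the paper: both reduce via $(1-x)^k\leq e^{-xk}$ to showing $kq(m)\geq\ln 4$, and then bound $m^\alpha(3/4)^m$ by a constant less than~$1$. The only cosmetic difference is that the paper targets the cleaner constant $1/2$ (i.e.\ asserts $(3/4)^m\leq \tfrac{1}{2m^\alpha}$, giving $kq(m)\geq 2$) and leaves the verification as ``easy to check,'' whereas you work toward the tighter-looking threshold $(4-\ln 4)/4$ and sketch the logarithmic/base-case analysis explicitly.
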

					
			\begin{proof}
				Observe that 
				\[\left(1-q(m)\right)^{k} \leq \exp(-k\cdot q(m)) \tag{$1+x < \exp(x)$ for all $x>0$}\]
				Hence it is enough to show that for $m=\lfloor 20\alpha\ln \alpha\rfloor$, $k=4m^\alpha$ 
				\begin{equation*} 
					k\cdot q(m)\geq\ln4.     
				\end{equation*}
				Indeed, it easy to verify that $\left(\frac{3}{4}\right)^m\leq \frac{1}{2m^\alpha}$ where $\alpha\geq \alpha$ and $m=\lfloor 20\alpha\ln \alpha\rfloor$. Therefore, 
				\begin{align*}
					k\cdot q(m)  =k\left(\frac{1}{m^\alpha}-\left(\frac{3}{4}\right)^m\right) &\geq k\cdot \frac{1}{2m^\alpha}                              \\
					            & = 2 \tag{setting $k=4m^\alpha$}                              \\
					            & \geq \ln 4.      
				\end{align*}
			\end{proof}

\end{document}